\definecolor{yxc}{RGB}{255,0,0}
\definecolor{yjc}{RGB}{125,0,0}
\definecolor{ytw}{RGB}{255,69,0}
\newcommand{\yxc}[1]{\textcolor{yxc}{[YXC: #1]}}
\DeclareMathOperator{\ind}{\mathds{1}}  
\newcommand{\defn}{:=}
\newcommand{\linf}[1]{\big\|#1 \big\|_{\infty}}
\newcommand{\one}{\bm 1}
\newcommand{\Ind}{\bm I}
\newcommand{\real}{\mathbb{R}}
\newcommand{\cS}{\mathcal{S}}
\newcommand{\cA}{\mathcal{A}}
\newcommand{\Qhat}{\widehat{\bm Q}}
\newcommand{\Qstar}{ {\bm Q}^\star}
\newcommand{\Vhat}{\widehat{\bm V}}
\newcommand{\tmix}{t_{\mathsf{mix}}}
\newcommand{\tcov}{t_{\mathsf{frame}}}
\newcommand{\tth}{t_{\mathsf{th}}}
\newcommand{\ttheps}{t_{\mathsf{th},\xi}}
\newcommand{\bQ}{\bm Q}
\newcommand{\tepoch}{t_{\mathsf{epoch}}}
\newcommand{\tcover}{t_{\mathsf{frame}}}
\newcommand{\cirS}{\mathcal{S}}
\newcommand{\A}{\mathcal{A}}
\newcommand{\bLam}{\bm \Lambda}
\newcommand{\bDel}{\bm \Delta}
\newcommand{\vrq}{\textsc{Vr-q-run-epoch}}
\newcommand{\tcovertime}{t_{\mathsf{cover}}}
\newcommand{\tcoverall}{t_{\mathsf{cover,all}}}
\newcommand{\muepo}{\mu_{\mathsf{frame}}}
\newcommand{\Vbar}{\overline{\bm V}}
\newcommand{\Qbar}{\overline{\bm Q}}
\newcommand{\Ptil}{\widetilde{\bm P}}
\newcommand{\Delvr}{\bm{\Theta}}
\newcommand{\mumin}{\mu_{\mathsf{min}}}
\newcommand{\Delhat}{\widehat{\bm \Theta}}
\title{Sample Complexity of Asynchronous Q-Learning: \\ Sharper Analysis and Variance Reduction\footnotetext{This work has been presented in part in Neural Information Processing Systems (NeurIPS) 2020 \citep{li2020sampleNIPS}.}}
\author{Gen Li\thanks{Department of Electrical and Computer Engineering, Princeton University, Princeton, NJ 08544, USA.} \\
Princeton   \\
	\and
	Yuting Wei\thanks{Department of Statistics and Data Science, The Wharton School, University of Pennsylvania, Philadelphia, PA 19104, USA.}\\
	UPenn
	\and
	Yuejie Chi\thanks{Department of Electrical and Computer Engineering, Carnegie Mellon University, Pittsburgh, PA 15213, USA.}\\
	CMU\\
	\and
	Yuantao Gu\thanks{Department of Electronic Engineering, Tsinghua University, Beijing 100084, China.}  \\
	Tsinghua   \\
	\and
	Yuxin Chen\footnotemark[1] \\
 Princeton  \\
	}
\date{June, 2020;~~ Revised: August 2021}
\begin{document}

\theoremstyle{plain} \newtheorem{lemma}{\textbf{Lemma}}\newtheorem{proposition}{\textbf{Proposition}}\newtheorem{theorem}{\textbf{Theorem}}\newtheorem{assumption}{\textbf{Assumption}}
\newtheorem{corollary}{\textbf{Corollary}}\newtheorem{example}{\textbf{Example}}
\theoremstyle{remark}\newtheorem{remark}{\textbf{Remark}}

\maketitle

\begin{abstract}
	Asynchronous Q-learning aims to learn the optimal action-value function (or Q-function) of a Markov decision process (MDP), based on a single trajectory of Markovian samples induced by a behavior policy.  Focusing on a $\gamma$-discounted MDP with state space $\mathcal{S}$ and action space $\mathcal{A}$, we demonstrate that the $\ell_{\infty}$-based sample complexity of classical asynchronous Q-learning --- namely, the number of samples needed to yield an entrywise $\varepsilon$-accurate estimate of the Q-function --- is at most on the order of
\begin{equation*}
	 \frac{1}{\mumin(1-\gamma)^5\varepsilon^2}+ \frac{\tmix}{\mumin(1-\gamma)}   
\end{equation*}
	up to some logarithmic factor, provided that a proper constant learning rate is adopted. Here, $\tmix$ and $\mumin$ denote respectively the mixing time and the minimum state-action occupancy probability of the sample trajectory. The first term of this bound matches the sample complexity in the synchronous case with independent samples drawn from the stationary distribution of  the trajectory. The second term reflects the cost taken for the empirical distribution of the Markovian trajectory to reach a steady state, which is incurred at the very beginning and becomes amortized as the algorithm runs. Encouragingly, the above bound improves upon the state-of-the-art result \cite{qu2020finite} by a factor of at least $|\mathcal{S}||\mathcal{A}|$
for all scenarios,  and by a factor of at least $\tmix|\mathcal{S}||\mathcal{A}|$ for any sufficiently small accuracy level $\varepsilon$.   
Further, we demonstrate that the scaling on the effective horizon $\frac{1}{1-\gamma}$ can be improved by means of variance reduction.


\end{abstract}

\noindent \textbf{Keywords:} model-free reinforcement learning, asynchronous Q-learning, Markovian samples, variance reduction, TD learning, mixing time

\tableofcontents


\section{Introduction}
\label{sec:intro}

Model-free algorithms such as Q-learning \citep{watkins1992q} play a central role in recent breakthroughs of reinforcement learning (RL) \citep{mnih2015human}. In contrast to model-based algorithms that decouple model estimation and planning, model-free algorithms attempt to directly interact with the environment  --- in the form of a policy that selects actions based on perceived states of the environment --- from the collected data samples, without modeling the environment explicitly. 
Therefore, model-free algorithms are able to process data in an online fashion and are often memory-efficient. 
Understanding and improving the sample efficiency of model-free algorithms lie at the core of recent research activity \citep{dulac2019challenges}, whose importance is particularly evident for the class of RL applications in which data collection is costly and time-consuming (such as clinical trials, online advertisements, and so on).

The current paper concentrates on Q-learning, an off-policy model-free algorithm that seeks to learn the optimal action-value function by observing what happens under a behavior policy.  The off-policy feature makes it appealing in various RL applications where it is infeasible to change the policy under evaluation on the fly.  There are two basic update models in Q-learning. The first one is termed a {\em synchronous} setting, which hypothesizes on the existence of a simulator (also called a generative model); at each time, the simulator generates an independent sample for every state-action pair, and the estimates are updated simultaneously across all state-action pairs. The second model concerns an {\em asynchronous} setting, where only a single sample trajectory following a behavior policy is accessible; at each time, the algorithm updates its estimate of a single state-action pair using one state transition from the trajectory. 
Obviously, understanding the asynchronous setting is considerably more challenging than the synchronous model, due to the Markovian (and hence non-i.i.d.) nature of its sampling process.

Focusing on an infinite-horizon Markov decision process (MDP) with state space $\cS$ and action space $\cA$, this work investigates asynchronous Q-learning on a single {\em Markovian trajectory} induced by a behavior policy. We ask a fundamental question:
\begin{itemize}
	\item[]\emph{How many samples are needed for asynchronous Q-learning to learn the optimal Q-function?}
\end{itemize}
\noindent Despite a considerable number of prior works analyzing this algorithm (ranging from the classical works \citet{tsitsiklis1994asynchronous,jaakkola1994convergence} to the very recent paper \cite{qu2020finite}), it remains unclear whether existing sample complexity analysis of asynchronous Q-learning is tight. As we shall elucidate momentarily, there exists a large gap --- at least as large as $|\cS||\cA|$ --- between the state-of-the-art sample complexity bound for asynchronous Q-learning  \citep{qu2020finite} and the one derived for the synchronous counterpart \citep{wainwright2019stochastic}. This raises a natural desire to examine whether there is any bottleneck intrinsic to the asynchronous setting that significantly limits its performance.


\newcommand{\topsepremove}{\aboverulesep = 0mm \belowrulesep = 0mm} \topsepremove

\begin{table}[t]
\centering
\begin{tabular}{c|c|c}
	\toprule
Algorithm & Sample complexity  &  Learning rate   $\vphantom{\frac{1^{7}}{1^{7^3}}}$   \\ 
\toprule
	 Asynchronous Q-learning  & \multirow{2}{*}{$ \frac{ (\tcovertime) ^{\frac{1}{1-\gamma}}}{(1-\gamma)^4\varepsilon^2}$} & \multirow{2}{*}{linear: $\frac{1}{t}$  $\vphantom{\frac{1^{7^{7^{7^{7}}}}}{1^{7^{7^{7^{7}}}}}}$} \tabularnewline
\cite{even2003learning} &  & \tabularnewline 
\hline
	Asynchronous Q-learning  & \multirow{2}{*}{$\big( \frac{ \tcovertime^{1+3\omega } }{(1-\gamma)^4\varepsilon^2}\big)^{\frac{1}{\omega}} + \big(  \frac{ \tcovertime }{1-\gamma }\big)^{\frac{1}{1-\omega}}$} & \multirow{2}{*}{polynomial: $\frac{1}{t^{\omega}}$, $\omega \in (\frac{1}{2},1)$   $\vphantom{\frac{1^{7^{7^{7^7}}}}{1^{7^{7^{7^{7}}}}}}$} \tabularnewline
\cite{even2003learning} &  & \tabularnewline 
\hline
	 Asynchronous Q-learning & \multirow{2}{*}{$ \frac{\tcovertime^3|\cS||\cA|}{(1-\gamma)^5\varepsilon^2}$} & \multirow{2}{*}{constant: $\frac{(1-\gamma)^4\varepsilon^2}{|\cS||\cA|\tcovertime^2}$  $\vphantom{\frac{1^{7^{7^{7}}}}{1^{7^{7^{7^{7}}}}}}$} \tabularnewline
\cite{beck2012error} &  & \tabularnewline 
\hline
	Asynchronous Q-learning &  \multirow{2}{*}{$\frac{\tmix}{\mumin^2 (1-\gamma)^5\varepsilon^2} $} & \multirow{2}{*}{rescaled linear: $\frac{\frac{1}{\mumin(1-\gamma)}}{t+ \max\{ \frac{1}{\mumin(1-\gamma)},  \tmix  \}  }$  $\vphantom{\frac{1^{7^{7^{7^{7}}}}}{1^{7^{7^{7^{7^{7^7}}}}}}}$} \tabularnewline
\cite{qu2020finite} &  & \tabularnewline 
\hline
	 Speedy Q-learning &  \multirow{2}{*}{$\frac{\tcovertime}{(1-\gamma)^4\varepsilon^2} $} & \multirow{2}{*}{rescaled linear: $\frac{1}{t+1}$  $\vphantom{\frac{1^{7^{7^{7^{7}}}}}{1^{7^{7^{7^{7}}}}}}$} \tabularnewline
\citep{azar2011reinforcement} &  & \tabularnewline 
\hline
	 Asynchronous Q-learning & \multirow{2}{*}{$ \frac{1}{\mumin(1-\gamma)^5\varepsilon^2}+ \frac{\tmix}{\mumin(1-\gamma)}  $} & \multirow{2}{*}{constant: $\min \big\{ \frac{(1-\gamma)^4\varepsilon^2}{\gamma^2}, \frac{1}{\tmix} \big\}$  $\vphantom{\frac{1^{7^{7}}}{1^{7^{7^{7^{7}}}}}}$} \tabularnewline
\textbf{This work (Theorem~\ref{thm:main-asyn-Q-learning})} &  & \tabularnewline 
\hline
	 Asynchronous Q-learning & \multirow{2}{*}{$ \frac{\tcovertime}{(1-\gamma)^5\varepsilon^2}  $} & \multirow{2}{*}{constant: $\min \big\{ \frac{(1-\gamma)^4\varepsilon^2}{\gamma^2}, 1 \big\}$  $\vphantom{\frac{1^{7^{7}}}{1^{7^{7^{7^{7}}}}}}$} \tabularnewline
\textbf{This work (Theorem~\ref{thm:main-asyn-Q-learning-cover-time})} &  & \tabularnewline 
\hline
	 Asynchronous Q-learning & \multirow{2}{*}{$ \frac{1}{\mumin(1-\gamma)^5\varepsilon^2}+ \frac{\tmix}{\mumin(1-\gamma)}$} & \multirow{2}{*}{piecewise constant rescaled linear:~\eqref{eq:learning-rate-implement}  $\vphantom{\frac{1^{7^{7}}}{1^{7^{7^{7^{7}}}}}}$} \tabularnewline
\textbf{This work (Theorem~\ref{thm:new-learning-rates-asyn-Q})} &  & \tabularnewline 
\hline
	 Variance-reduced Q-learning & \multirow{2}{*}{$ \frac{1}{\mumin(1-\gamma)^3\varepsilon^2}+ \frac{\tmix}{\mumin(1-\gamma)}$} & \multirow{2}{*}{constant: $\min \big\{ \frac{(1-\gamma)^2}{\gamma^2}, \frac{1}{\tmix} \big\}$  $\vphantom{\frac{1^{7^{7}}}{1^{7^{7^{7^{7}}}}}}$} \tabularnewline
\textbf{This work (Theorem~\ref{thm:main-asyn-VR-Q-learning})} &  & \tabularnewline 
	\toprule
\end{tabular}
\caption{Sample complexity of asynchronous Q-learning and its variants to compute an $\varepsilon$-optimal
Q-function in the $\ell_{\infty}$ norm, where we hide all logarithmic
factors. With regards to the Markovian trajectory induced by the behavior
policy, we denote by $\tcovertime$, $\tmix$, and $\mumin$ the cover
time, mixing time, and minimum state-action occupancy probability
of the associated stationary distribution, respectively.} \label{table:comparison_asyncQ}
\end{table}

\subsection{Main contributions}

This paper develops a refined analysis framework that sharpens our understanding about the sample efficiency of classical asynchronous Q-learning on a single sample trajectory.  
Setting the stage, consider an infinite-horizon MDP with state space $\cS$, action space $\cA$, and a discount factor $\gamma \in (0,1)$.  What we have access to is a sample trajectory of the MDP induced by a stationary behavior policy. In contrast to the synchronous setting with i.i.d.~samples, we single out two parameters intrinsic to the Markovian sample trajectory: \emph{(i)}  the mixing time $\tmix$, which characterizes how fast the trajectory disentangles itself from the initial state; \emph{(ii)} the smallest state-action occupancy probability $\mumin$ of the stationary distribution of the trajectory, which captures how frequent each state-action pair has been at least visited. 

With these parameters in place, our findings unveil that: the sample complexity required for asynchronous Q-learning to yield an $\varepsilon$-optimal Q-function estimate --- in a strong $\ell_{\infty}$ sense --- is at most\footnote{Let $\mathcal{X}:= \big( |\mathcal{S}|,|\mathcal{A}|, \frac{1}{1-\gamma}, \frac{1}{\varepsilon} \big)$.  The notation $f(\mathcal{X}) = O(g(\mathcal{X}))$  means there exists a universal constant $C_1>0$ such that $f\leq C_1 g$. The notation $\widetilde{O}(\cdot)$ is defined analogously except that it hides any logarithmic factor.} 
\begin{equation}
	\widetilde{O}\Big(  \frac{1}{\mumin(1-\gamma)^5\varepsilon^2}+ \frac{\tmix}{\mumin(1-\gamma)}  \Big). 
	\label{eq:sample-size-Q-contributions}
\end{equation}
The first component of \eqref{eq:sample-size-Q-contributions} is consistent with the sample complexity derived for the setting with independent samples drawn from the stationary distribution of the trajectory \citep{wainwright2019stochastic}. In comparison, the second term of \eqref{eq:sample-size-Q-contributions} --- which is unaffected by the accuracy level $\varepsilon$ --- is intrinsic to the Markovian nature of the trajectory; in essence, this term reflects the cost taken for the empirical distribution of the sample trajectory to converge to a steady state, and becomes amortized as the algorithm runs.  
In other words, the behavior of asynchronous Q-learning would resemble what happens in the setting with independent samples, as long as the algorithm has been run for reasonably long. In addition,  our analysis framework readily yields another sample complexity bound
\begin{equation}
	\widetilde{O}\Big(  \frac{\tcovertime}{(1-\gamma)^5\varepsilon^2}  \Big),
	\label{eq:sample-size-Q-contributions-cover-time}
\end{equation}
where $\tcovertime$ stands for the cover time --- namely, the time taken for the trajectory to visit all state-action pairs at least once. This facilitates comparisons with several prior results based on the cover time. 


Furthermore, we leverage the idea of variance reduction to improve the scaling with the discount complexity $\frac{1}{1-\gamma}$. We demonstrate that a variance-reduced variant of asynchronous Q-learning attains $\varepsilon$-accuracy using at most 
\begin{equation}
	\widetilde{O}\Big(  \frac{1}{\mumin(1-\gamma)^3 \min \{1, \varepsilon^2\}}+ \frac{\tmix}{\mumin(1-\gamma)}  \Big)
	\label{eq:sample-size-VRQ-contributions}
\end{equation}
samples, matching the complexity of its synchronous counterpart if $\varepsilon \leq \min\big\{ 1, \frac{1}{(1-\gamma)\sqrt{\tmix}} \big\}$ \citep{wainwright2019variance}. Moreover, by taking the action space to be a singleton set, the aforementioned results immediately lead to $\ell_{\infty}$-based sample complexity guarantees for temporal difference (TD) learning \citep{sutton1988learning} on Markovian samples.

\paragraph{Comparisons with past results.} A large fraction of the classical literature focused on asymptotic convergence analysis of asynchronous Q-learning (e.g.~\cite{tsitsiklis1994asynchronous,jaakkola1994convergence,szepesvari1998asymptotic}); these results, however, did not lead to non-asymptotic sample complexity bounds. The state-of-the-art sample complexity analysis was due to the recent work \cite{qu2020finite}, which derived a sample complexity bound $\widetilde{O}\big( \frac{\tmix}{\mumin^2(1-\gamma)^5\varepsilon^2}\big)$. Given the obvious lower bound $1/\mumin \geq |\cS||\cA|$, our result \eqref{eq:sample-size-Q-contributions} improves upon that of \cite{qu2020finite} by a factor at least on the order of $  |\cS||\cA| \min\big\{ \tmix, \frac{1}{(1-\gamma)^4\varepsilon^2} \big\} $. 
In particular, for sufficiently small accuracy level $\varepsilon$, our improvement exceeds a factor of at least $$\tmix |\cS||\cA|  .$$ 
In addition, we note that several prior works \citep{even2003learning,beck2012error} developed sample complexity bounds in terms of the cover time $\tcovertime$ of the sample trajectory; our result strengthens these bounds by a factor of at least $$\tcovertime^2|\cS||\cA|\geq |\cS|^3|\cA|^3.$$  The interested reader is referred to Table~\ref{table:comparison_asyncQ} for more precise comparisons, and to Section~\ref{sec:related-work} for a discussion of further related works.

\subsection{Paper organization,  notation, and basic concept}
\label{sec:notation}

The remainder of the paper is organized as follows. Section~\ref{sec:models} formulates the problem and introduces some basic quantities and assumptions. 
Section~\ref{sec:async-q-learning} presents the asynchronous Q-learning algorithm along with its theoretical guarantees, 
whereas Section~\ref{sec:VR-Q} accommodates the extension: asynchronous variance-reduced Q-learning.
A more detailed account of related works is given in Section~\ref{sec:related-work}.
The analyses of our main theorems are described in Sections~\ref{Sec:Analysis-asynchronous}-\ref{Sec:Asynchronous-VR}.
We conclude this paper with a summary of our results and a list of future directions in Section~\ref{sec:discussion}.
Several preliminary facts about Markov chains and the proofs of technical lemmas are postponed to the appendix.

Next, we introduce a set of notation that will be used throughout the paper. 
Denote by $\Delta(\cS)$ (resp.~$\Delta(\cA)$) the probability simplex over the set $\cS$ (resp.~$\cA$).
For any vector $\bm{z}=[z_i]_{1\leq i\leq n}\in \mathbb{R}^n$, we overload the notation $\sqrt{\cdot}$ and $|\cdot|$ to denote entry-wise operations, such that $\sqrt{\bm{z}}:=[\sqrt{z_i}]_{1\leq i\leq n}$ and $|\bm{z}|:=[|z_i|]_{1\leq i\leq n}$. For any vectors $\bm{z}=[a_i]_{1\leq i\leq n}$ and $\bm{w}=[w_i]_{1\leq i\leq n}$, the notation $\bm{z} \geq \bm{w}$ (resp.~$\bm{z} \leq \bm{w}$) means $z_i \geq w_i$ (resp.~$z_i\leq w_i$) for all $1\leq i\leq n$.
Additionally, we denote by $\bm{1}$ the all-one vector, $\bm{I}$ the identity matrix, 
and $\ind\{\cdot\}$ the indicator function. For any matrix $\bm{P}=[P_{ij}]$, we denote $\|\bm{P}\|_1:=  \max_i\sum_j|P_{ij}|$. 
Throughout this paper, we use $c, c_{0}, c_{1},\cdots$ to denote universal constants that do not depend either on the parameters of the MDP or the target levels $(\varepsilon,\delta)$, and their exact values may change from line to line.

Finally, let us introduce the concept of uniform ergodicity for Markov chains. 
Consider any Markov chain $(X_0, X_1, X_2, \cdots)$ with transition kernel $P$, finite state space $\mathcal{X}$ and stationary distribution $\mu$, 
	and denote by $P^t(\cdot\,|\,x)$  the distribution of $X_t$ conditioned on $X_0=x \in \mathcal{X}$. 
	This Markov chain is said to be {\em uniformly ergodic} if, for some $\rho < 1$ and $M < \infty$, one has
\begin{equation}
\sup_{x \in \mathcal{X}} d_{\mathsf{TV}} \big(\mu, P^t(\cdot\,|\,x) \big) \le M\rho^t,
\end{equation}
	where $d_{\mathsf{TV}} (\mu, \nu)$ stands for the total variation distance between two distributions $\mu$ and $\nu$ \citep{tsybakov2009introduction}:  
\begin{equation} \label{eq:TV}
	d_{\mathsf{TV}} (\mu, \nu) := \frac{1}{2} \sum_{x \in \mathcal{X}} \big|\mu(x) - \nu(x) \big|
	= \sup_{A\subseteq \mathcal{X}} \big| \mu(A) - \nu(A) \big|.
\end{equation}
%



\section{Models and background}
\label{sec:models}

This paper studies an infinite-horizon MDP with discounted rewards, as represented by a quintuple $\mathcal{M} = (\cS,\cA, P, r,\gamma)$. Here, $\cS$ and $\cA$ denote respectively the (finite) state space and action space, whereas $\gamma\in (0,1)$ indicates the discount factor.
Particular emphasis is placed on the scenario with large state/action space and long effective horizon, namely, $|\cS|$, $|\cA|$ and the effective horizon $\frac{1}{1-\gamma}$ can all be quite large.  
We use $P:\cS\times\cA \rightarrow \Delta(\cS)$ to represent the probability transition kernel of the MDP, where for each state-action pair $(s,a)\in \cS\times \cA$, $P(s' \,|\, {s,a})$ denotes the probability of transiting to state $s'$ from state $s$ when action $a$ is executed. The reward function is represented by $r:  \cS\times\cA \rightarrow [0,1]$, such that $r(s,a)$ denotes the immediate reward from state $s$ when action $a$ is taken; for simplicity, we assume throughout that all rewards lie within $[0,1]$.
We focus on the tabular setting which, despite its basic form, has not yet been well understood.
See \cite{bertsekas2017dynamic} for an in-depth introduction of this model.

\paragraph{Q-function and Bellman operator.} 
An action selection rule is termed a {\em policy} and represented by a mapping $\pi: \cS \rightarrow \Delta(\cA)$, which maps a state to a distribution over the set of actions.
A policy is said to be {\em stationary} if it is time-invariant. 
We denote by $\{s_t,a_t,r_t\}_{t=0}^{\infty}$ a sample trajectory, where $s_t$ (resp.~$a_t$) denotes the state (resp.~the action taken) at time $t$, and $r_t= r(s_{t}, a_{t})$ denotes the reward received at time $t$. It is assumed throughout that the rewards are deterministic and depend solely upon the current state-action pair.
We denote by $V^{\pi}: \cS \rightarrow \real$ the value function of a policy $\pi$, namely, 
\begin{align*}
 \forall s\in \cS: \qquad V^{\pi}(s) \defn \mathbb{E} \left[ \sum_{t=0}^{\infty} \gamma^t r(s_t,a_t ) \,\big|\, s_0 =s \right],
\end{align*} 
which is the expected discounted cumulative reward received when \emph{(i)} the initial state is $s_0=s$, \emph{(ii)} the actions are taken based on the policy $\pi$ (namely, $a_t \sim \pi(s_t)$ for all $t\geq 0$) and the trajectory is generated based on the transition kernel (namely, $s_{t+1}\sim P(\cdot | s_t, a_t)$). It can be easily verified that $0\leq V^{\pi}(s)\leq \frac{1}{1-\gamma} $ for any $\pi$.
The action-value function (also Q-function) $Q^{\pi}: \cS \times \cA \rightarrow \real$ of a policy $\pi$ is defined by 
\begin{equation*}
 \forall (s,a)\in \cS \times \cA: \qquad Q^{\pi}(s,a) \defn \mathbb{E} \left[ \sum_{t=0}^{\infty} \gamma^t r(s_t,a_t ) \,\big|\, s_0 =s, a_0 = a \right],
\end{equation*} 
where the actions are taken according to the policy $\pi$ except the initial action (i.e.~$a_t \sim \pi(s_t)$ for all $t\geq 1$).  
As is well-known, there exists an optimal policy --- denoted by $\pi^{\star}$ --- that simultaneously maximizes $V^{\pi}(s)$ and $Q^{\pi}(s,a)$ uniformly over all state-action pairs $(s,a)\in (\cS\times\cA)$.
Here and throughout, we shall denote by $V^{\star} :=V^{\pi^{\star}}$ and $Q^{\star} := Q^{\pi^{\star}}$ the optimal value function and the optimal Q-function, respectively.

In addition, the Bellman operator $\mathcal{T}$, which is a mapping from  $\mathbb{R}^{|\cS|\times |\cA|}$ to itself, is defined such that the $(s,a)$-th entry of $\mathcal{T}(Q)$ is given by
\begin{equation}
	\mathcal{T}(Q)(s,a) := r(s,a) + \gamma \mathop{\mathbb{E}}\limits_{s^{\prime} \sim P(\cdot| s,a)}  \Big[ \max_{a^{\prime}\in \cA} Q(s^{\prime}, a^{\prime}) \Big].
\end{equation}
It is well known that the optimal Q-function $Q^{\star}$ is the unique fixed point of the Bellman operator.  

\paragraph{Sample trajectory and behavior policy.}
Imagine we have access to a sample trajectory 
 $\{s_t,a_t,r_t\}_{t=0}^{\infty}$ generated by the MDP $\mathcal{M}$ under a given stationary policy $\pi_{\mathsf{b}}$ --- called a {\em behavior policy}. The behavior policy is deployed to help one learn the ``behavior'' of the MDP under consideration, which 
often differs from the optimal policy being sought. Given the stationarity of $\pi_{\mathsf{b}}$, the sample trajectory can be viewed as a sample path of a time-homogeneous Markov chain over the set of state-action pairs $\{(s,a)\mid s\in \cS, a\in \cA\}$. 
Throughout this paper, we impose the following uniform ergodicity assumption  \citep{paulin2015concentration} (see the definition of uniform ergodicity in Section~\ref{sec:notation}).
\begin{assumption}
	\label{assumption:uniform-ergodic}
	The Markov chain induced by the stationary behavior policy $\pi_{\mathsf{b}}$ is uniformly ergodic.
\end{assumption}
%

There are several properties concerning the behavior policy and its resulting Markov chain that play a crucial role in learning the optimal Q-function. Specifically, 
denote by $\mu_{\pi_{\mathsf{b}}}$ the stationary distribution (over all state-action pairs) of the aforementioned behavior Markov chain, and define
\begin{align}
	\label{defn:mu-min}
	\mumin:= \min_{(s,a)\in \cS\times \cA} \mu_{\pi_{\mathsf{b}}} (s,a). 
\end{align}
Intuitively, $\mumin$ reflects an information bottleneck; that is, the smaller $\mumin$ is, the more samples are needed in order to ensure all state-action pairs are visited sufficiently many times. 
In addition, we define the associated mixing time of the chain as 
\begin{align}
	\tmix := \min \Big\{ t ~\Big|~   \max_{ (s_0,a_0) \in \mathcal{S}\times\cA} d_{\mathsf{TV}}\big( P^t( \cdot \,|\, s_0,a_0), \mu_{\pi_{\mathsf{b}}} \big) \leq \frac{1}{4} \Big\},
	\label{defn:mixing-time-MC}
\end{align}
where $P^t( \cdot| s_0,a_0)$ denotes the distribution of $(s_t,a_t)$ conditional on the initial state-action pair $(s_0,a_0)$, 
and  $d_{\mathsf{TV}} (\mu, \nu)$ is the total variation distance between $\mu$ and $\nu$ (see \eqref{eq:TV}).  In words, the mixing time $\tmix$ captures how fast the sample trajectory decorrelates from its initial state. Moreover, we define the cover time associated with this Markov chain as follows
\begin{align}
	\tcovertime := \min \Big\{ t \mid  \min_{(s_0,a_0)\in \cS\times \cA} \mathbb{P}\big( \mathcal{B}_{t} \,|\, s_0, a_0  \big) \geq \frac{1}{2} \Big\} ,
	\label{eq:defn-cover-time}
\end{align}
where $\mathcal{B}_{t}$ denotes the event such that all $(s,a)\in \cS\times \cA$ have been visited at least once between time 0 and time $t$, and $\mathbb{P}\big( \mathcal{B}_{t} \,|\, s_0, a_0  \big) $ denotes the probability of $\mathcal{B}_t$ conditional on the initial state $(s_0,a_0)$. 
\begin{remark}
	It is known that for a finite-state Markov chain, 
	having a finite mixing time $\tmix$ implies uniform ergodicity of the chain \citep[Page 4]{paulin2015concentration}.   
	Thus, our uniform ergodicity assumption is equivalent to the assumption imposed in \citet{qu2020finite} (which assumes ergodicity in addition to a finite $\tmix$). 
\end{remark}

\paragraph{Goal.} Given {\em a single} sample trajectory $\{s_t, a_t,r_t\}_{t=0}^{\infty}$ generated by the behavior policy $\pi_{\mathsf{b}}$, we aim to compute/approximate the optimal Q-function $Q^{\star}$ in an $\ell_\infty$ sense. 
This setting --- in which a state-action pair can be updated only when the Markovian trajectory reaches it~--- is commonly referred to as {\em asynchronous} Q-learning \citep{tsitsiklis1994asynchronous,qu2020finite} in tabular RL. The current paper focuses on characterizing, in a non-asymptotic manner, the sample efficiency of classical Q-learning and its variance-reduced variant.  


\section{Asynchronous Q-learning on a single Markovian trajectory}
\label{sec:async-q-learning}

\subsection{Algorithm}

The Q-learning algorithm \citep{watkins1992q} is arguably one of the most famous off-policy algorithms aimed at learning the optimal Q-function. Given the Markovian trajectory $\{s_t, a_t,r_t\}_{t=0}^{\infty}$ generated by the behavior policy $\pi_{\mathsf{b}}$, the asynchronous Q-learning algorithm maintains a Q-function estimate $Q_t: \cS\times \cA \rightarrow \mathbb{R}$ at each time $t$ and adopts the following iterative update rule
\begin{equation}
\label{eqn:q-learning}
\begin{split}
	Q_t(s_{t-1},a_{t-1}) &= (1- \eta_t ) Q_{t-1}(s_{t-1},a_{t-1}) + \eta_t \mathcal{T}_t (Q_{t-1}) (s_{t-1}, a_{t-1})  \\
Q_t(s ,a) & = Q_{t-1}(s ,a ), \qquad \forall (s,a)\neq (s_{t-1},a_{t-1})
\end{split}
\end{equation}
for any $t\geq 0$, whereas $\eta_t$ denotes the learning rate or the stepsize. 
Here, $\mathcal{T}_t$ denotes the empirical Bellman operator w.r.t.~the $t$-th sample, that is, 
\begin{align}
	\mathcal{T}_t (Q) (s_{t-1}, a_{t-1}) := 
		r(s_{t-1}, a_{t-1}) + \gamma \max_{a' \in \cA} Q(s_t, a').
	\label{defn:empirical-Bellman-t}
\end{align}
It is worth emphasizing that at each time $t$, only a single entry --- the one corresponding to the sampled state-action pair $(s_{t-1},a_{t-1})$ --- is updated, with all remaining entries unaltered. 
While the estimate $Q_0$ can be initialized to arbitrary values, we shall set $Q_0(s,a) = 0$ for all $(s,a)$ unless otherwise noted.  
The corresponding value function estimate $V_t: \cS \rightarrow \mathbb{R}$  at time $t$ is thus given by
\begin{align}
	\label{defn:Vt}
	\forall s\in \cS: \qquad V_t(s) :=  \max_{a\in \cA} Q_{t}(s,a) .
\end{align}
The complete algorithm is described in Algorithm~\ref{alg:async-q}.


\begin{algorithm}[ht]
\DontPrintSemicolon
	\textbf{input parameters:} learning rates $\{\eta_t\}$, number of iterations $T$. \\
  \textbf{initialization:} $Q_0=0$. \\

   \For{$t=1,2,\cdots,T$}
	{
		Draw action $a_{t-1} \sim \pi_{\mathsf{b}}(s_{t-1})$,
		observe reward $r(s_{t-1},a_{t-1})$, and draw next state $s_{t}\sim P(\cdot\,|\,s_{t-1},a_{t-1})$. \\
	     	Update $Q_t$ according to \eqref{eqn:q-learning}.
	}

    \caption{Asynchronous Q-learning}
 \label{alg:async-q}
\end{algorithm}

\subsection{Theoretical guarantees for asynchronous Q-learning}
We are in a position to present our main theory regarding the non-asymptotic sample complexity of  asynchronous Q-learning, for which the key parameters $\mumin$ and $\tmix$  defined respectively in \eqref{defn:mu-min} and \eqref{defn:mixing-time-MC} play a vital role. The proof of this result is provided in Section~\ref{Sec:Analysis-asynchronous}.
\begin{theorem}[Asynchronous Q-learning]
	\label{thm:main-asyn-Q-learning}
	For the asynchronous Q-learning algorithm detailed in Algorithm~\ref{alg:async-q}, 
	there exist some universal constants $c_0,c_1>0$ such that for any $0<\delta<1$ and $0<\varepsilon \leq \frac{1}{1-\gamma}$,  one has
	\begin{align*}
		\forall (s,a)\in \cS\times \cA: \qquad |Q_T (s,a) - Q^\star(s,a)| \leq \varepsilon
	\end{align*}
	with probability at least $1-\delta$, provided that the iteration number $T$ and the learning rates $\eta_t \equiv \eta $ obey
	\begin{subequations}
	\begin{align}	
		T & \geq \frac{c_{0}}{\mumin}\left\{ \frac{1}{(1-\gamma)^{5}\varepsilon^{2}}+\frac{\tmix}{1-\gamma}\right\} 
		\log\Big( \frac{|\mathcal{S}||\mathcal{A}|T}{\delta} \Big) \log \Big(\frac{1}{(1-\gamma)^2\varepsilon} \Big), \label{eqn:numbersteps}\\
		\eta & =\frac{c_{1}}{\log\big(\frac{|\mathcal{S}||\mathcal{A}|T}{\delta}\big)}\min\left\{ \frac{(1-\gamma)^{4}\varepsilon^{2}}{\gamma^2},~\frac{1}{\tmix}\right\} .
		\label{eq:learning-rate-asynQ}
	\end{align}
	\end{subequations}
\end{theorem}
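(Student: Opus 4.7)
\medskip

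\noindent\textbf{Proof plan.} The plan is to set up a coordinate-wise error recursion, unroll it along the subsequence of iterations at which each state-action pair is visited, and then control the resulting stochastic error via concentration tailored to uniformly ergodic chains.

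\medskip

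\noindent\textbf{Step 1: Error recursion.} Define $\bm\Delta_t := \bm Q_t - \bm Q^\star$. Using $\bm Q^\star = \mathcal{T}(\bm Q^\star)$ together with the update rule \eqref{eqn:q-learning}, for the visited pair $(s_{t-1},a_{t-1})$ I would write
\begin{align*}
\Delta_t(s_{t-1},a_{t-1})
= (1-\eta)\,\Delta_{t-1}(s_{t-1},a_{t-1}) + \eta\gamma\bigl[\mathcal{T}(\bm Q_{t-1}) - \mathcal{T}(\bm Q^\star)\bigr](s_{t-1},a_{t-1}) + \eta\,\xi_t,
\end{align*}
where $\xi_t := [\mathcal{T}_t(\bm Q_{t-1}) - \mathcal{T}(\bm Q_{t-1})](s_{t-1},a_{t-1})$ is the one-sample Bellman noise. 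For all $(s,a)\ne(s_{t-1},a_{t-1})$ the error is unchanged. Because $\mathcal{T}$ is a $\gamma$-contraction in $\ell_\infty$, the ``signal'' term is uniformly bounded by $\gamma\,\|\bm\Delta_{t-1}\|_\infty$.

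\medskip

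\noindent\textbf{Step 2: Visit counts via mixing.} For each pair $(s,a)$ let $N_T(s,a)$ denote the number of updates applied to it in $T$ iterations. I would show, using a blocking argument that partitions the trajectory into blocks of length on the order of $\tmix$ and then invokes a Bernstein-type inequality for uniformly ergodic Markov chains (Assumption~\ref{assumption:uniform-ergodic}), that with high probability
\[
N_T(s,a) \;\gtrsim\; \mu_{\pi_{\mathsf b}}(s,a)\cdot T \;\geq\; \mumin\, T
\]
for all $(s,a)$ simultaneously, \emph{as soon as} $T\gtrsim \tmix/\mumin$. This is where the additive burn-in term $\tmix/(\mumin(1-\gamma))$ enters, and is crucial for avoiding an extra $1/\mumin$ factor.

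\medskip

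\noindent\textbf{Step 3: Unrolling the per-coordinate recursion.} For a fixed pair $(s,a)$, let $t_1<t_2<\cdots<t_{N_T(s,a)}$ be its visit times. Unrolling gives
\[
\Delta_T(s,a) \;=\; (1-\eta)^{N_T(s,a)} \Delta_0(s,a) \;+\; \eta\sum_{k=1}^{N_T(s,a)}(1-\eta)^{N_T(s,a)-k}\bigl[\gamma \bigl(\mathcal{T}(\bm Q_{t_k-1})-\mathcal{T}(\bm Q^\star)\bigr)(s,a)+\xi_{t_k}\bigr].
\]
Taking $\|\cdot\|_\infty$ and choosing $\eta$ small enough that $(1-\eta)^{\mumin T}\lesssim (1-\gamma)\varepsilon$, the initialization term is driven below $\varepsilon$. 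The deterministic contraction term contributes at most $\gamma\,\|\bm\Delta_{t-1}\|_\infty$ on average, which will be handled by a standard induction/recursion in $\|\bm\Delta_t\|_\infty$ exploiting $\gamma<1$. The remaining ingredient is a tight bound on the stochastic sum $\eta\sum_k (1-\eta)^{N_T(s,a)-k}\xi_{t_k}$.

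\medskip

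\noindent\textbf{Step 4: Freedman-type bound for the noise.} Conditional on the visit time $t_k$ and the history, $\xi_{t_k}$ has zero mean (up to mixing-induced bias that is controlled by the choice $\eta\lesssim 1/\tmix$) and variance bounded by $\gamma^2\Var_{s'\sim P(\cdot|s,a)}[V_{t_k-1}(s')]\leq \gamma^2/(1-\gamma)^2$. Applying a Freedman/Bernstein inequality to the martingale-like sum — after a careful decoupling of the slight Markovian bias via the $\eta\leq 1/\tmix$ choice — will yield
\[
\Bigl|\eta\sum_{k=1}^{N}(1-\eta)^{N-k}\xi_{t_k}\Bigr|
\;\lesssim\; \sqrt{\frac{\eta\gamma^2\log(|\cS||\cA|T/\delta)}{(1-\gamma)^2}} \;+\; \frac{\eta\log(|\cS||\cA|T/\delta)}{1-\gamma}.
\]
Plugging in $\eta\asymp\min\{(1-\gamma)^4\varepsilon^2/\gamma^2,\,1/\tmix\}$ makes this quantity $\lesssim(1-\gamma)\varepsilon$.

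\medskip

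\noindent\textbf{Step 5: Putting the pieces together.} Feeding Steps~3 and~4 back into the per-coordinate bound yields a contraction of the form $\|\bm\Delta_T\|_\infty \leq \gamma\,\|\bm\Delta_{T'}\|_\infty + O((1-\gamma)\varepsilon)$ across epochs of length on the order of $1/(\eta\mumin)$, and iterating $O(\log(1/((1-\gamma)^2\varepsilon)))$ such epochs drives $\|\bm\Delta_T\|_\infty\leq\varepsilon$. The total horizon is then
$T\asymp \tfrac{1}{\eta\mumin}\log\tfrac{1}{(1-\gamma)^2\varepsilon} \;\asymp\; \tfrac{1}{\mumin}\!\left\{\tfrac{1}{(1-\gamma)^5\varepsilon^2}+\tfrac{\tmix}{1-\gamma}\right\}\log\!\tfrac{|\cS||\cA|T}{\delta}\log\!\tfrac{1}{(1-\gamma)^2\varepsilon}$, matching \eqref{eqn:numbersteps}.

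\medskip

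\noindent\textbf{Main obstacle.} The delicate step is Step~4: obtaining a Bernstein-type bound with the correct scaling in $\mumin$, rather than $\mumin^2$ as in \cite{qu2020finite}. Naively bounding the Markovian noise through a single large block of length $\tmix$ inflates the variance proxy by $\tmix$ and induces an extra $1/\mumin$. Avoiding this will require a finer argument: (i) choosing $\eta\lesssim 1/\tmix$ so that within any $1/\eta$ window the chain mixes, letting me replace the Markovian noise by an essentially i.i.d.\ one with only logarithmic overhead; and (ii) carefully coupling the dependence of $\xi_{t_k}$ on $\bm Q_{t_k-1}$ through an inductive $\ell_\infty$ bound on $\bm\Delta_t$ so that the variance proxy is $\gamma^2/(1-\gamma)^2$ rather than something larger. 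These two ingredients together are what make the leading complexity scale linearly in $1/\mumin$ and match the synchronous rate from \cite{wainwright2019stochastic}.
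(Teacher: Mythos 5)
Your overall architecture matches the paper's: a one-step error decomposition, visit-count concentration for the uniformly ergodic chain (the paper's Lemma~\ref{lemma:Bernstein-state-occupancy}, which is exactly where the additive $\tmix/\mumin$ burn-in enters), a per-coordinate unrolling along visit times, a Bernstein-type bound of order $\sqrt{\eta\log(|\cS||\cA|T/\delta)}$ on the stochastic sum, and a frame/epoch-based contraction. One genuine difference in route: you center the noise at $Q_{t-1}$, i.e.\ $\xi_t=[\mathcal{T}_t(Q_{t-1})-\mathcal{T}(Q_{t-1})](s_{t-1},a_{t-1})$, and invoke Freedman; the paper instead splits $\gamma\bm{\Lambda}_t(\bm{P}_t\bm{V}_{t-1}-\bm{P}\bm{V}^{\star})$ into $\gamma\bm{\Lambda}_t(\bm{P}_t-\bm{P})\bm{V}^{\star}+\gamma\bm{\Lambda}_t\bm{P}_t(\bm{V}_{t-1}-\bm{V}^{\star})$, so that the stochastic piece involves only the \emph{fixed} vector $\bm{V}^{\star}$; it then proves (via the conditional-independence identity \eqref{eq:independence-Markov-chain}) that the transitions observed at the successive visit times of a fixed $(s,a)$ are i.i.d.\ draws from $P(\cdot|s,a)$, so a plain Bernstein inequality plus a union bound over the (random) visit count $K\leq T$ suffices, with the drift piece $\gamma\bm{\Lambda}_t\bm{P}_t(\bm{V}_{t-1}-\bm{V}^{\star})$ handled purely deterministically. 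Both routes give the same error floor; note, however, that your worry about ``mixing-induced bias'' in $\xi_t$ is a red herring --- conditional on the history the transition is exactly $P(\cdot|s_{t-1},a_{t-1})$, so $\xi_t$ is an exact martingale difference, and the constraint $\eta\lesssim 1/\tmix$ is needed not to de-bias the noise but to make the frame-based contraction argument go through (one needs $\eta\,\muepo\lesssim 1$ with $\muepo\asymp\tmix\log(\cdot)$).

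The genuine gap is in Steps 3 and 5, which compress the hardest part of the argument into the phrase ``handled by a standard induction/recursion in $\|\bm\Delta_t\|_\infty$ exploiting $\gamma<1$.'' Because only one coordinate is updated per iteration, the unrolled recursion for coordinate $(s,a)$ couples to $\|\bm\Delta_i\|_\infty$ at intermediate times $i$, across all coordinates. Over an epoch of length $1/(\eta\mumin)$ each coordinate is visited about $1/\eta$ times, so the initialization term decays only to about $e^{-1}$ of its value while the drift term contributes $\gamma\sup_{\text{epoch}}\|\bm\Delta_i\|_\infty$; the naive per-epoch contraction factor is therefore $e^{-1}+\gamma$, which exceeds $1$ whenever $\gamma>1-e^{-1}$, so the claimed ``$\gamma$-contraction per epoch'' does not follow as stated. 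Relatedly, your Step~5 arithmetic is internally inconsistent: a $\gamma$-contraction iterated $O(\log\frac{1}{(1-\gamma)^2\varepsilon})$ times over epochs of length $1/(\eta\mumin)$ gives $T\asymp\frac{1}{\eta\mumin}\log(\cdot)\asymp\frac{1}{\mumin(1-\gamma)^4\varepsilon^2}\log^2(\cdot)$, not the $(1-\gamma)^{-5}$ in \eqref{eqn:numbersteps}. The paper resolves exactly this via Lemma~\ref{lemma:daban}: a frame-based weighted-sum (Abel summation) computation with the auxiliary sequences $u_t$ and $w_k$ shows that the true per-frame contraction factor is $1-\rho$ with $\rho=(1-\gamma)\big(1-(1-\eta)^{\muepo}\big)\approx\tfrac12(1-\gamma)\eta\muepo$; the extra factor $(1-\gamma)$ in $\rho$ is what forces $T\gtrsim\frac{1}{(1-\gamma)\eta\mumin}\log(\cdot)$ and produces the fifth power of $\frac{1}{1-\gamma}$. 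Without an argument of this kind, your plan does not close for $\gamma$ near $1$.
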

\begin{remark}
	The careful reader might immediately remark that the learning rate $\eta$ studied in Theorem~\ref{thm:main-asyn-Q-learning} 
	relies on prior knowledge of $\varepsilon$, $\delta$ and $T$. 
	This is more stringent than  the learning rates in \cite{qu2020finite}, which do not require pre-determining these parameters. 
	To address this issue, we will explore a more adaptive learning rate schedule shortly in 
	 Section~\ref{section:adaptive-learning-rates}, which achieves the same sample complexity without the need of knowing these parameters {\em a priori}.
\end{remark}
Theorem~\ref{thm:main-asyn-Q-learning} delivers a finite-sample/finite-time analysis of asynchronous Q-learning, given that a fixed learning rate is adopted and chosen appropriately. The $\ell_{\infty}$-based sample complexity required for Algorithm~\ref{alg:async-q} to attain $\varepsilon$ accuracy is at most
\begin{align}
	\widetilde{O}\Big( \frac{1}{\mumin (1-\gamma)^5 \varepsilon^2}  +  \frac{\tmix}{\mumin (1-\gamma)} \Big). 
	\label{eq:sample-size-Q-asynchronous}
\end{align}
A few implications are in order.

\paragraph{Dependency on the minimum state-action occupancy probability $\mumin$.} Our sample complexity bound \eqref{eq:sample-size-Q-asynchronous} scales linearly in $1/\mumin$, which is in general unimprovable.  Consider, for instance, the ideal scenario where state-action occupancy is nearly uniform across all state-action pairs, in which case  $1/\mumin$ is on the order of $|\cS||\cA|$. In such a ``near-uniform'' case, the sample complexity scales linearly with $|\cS||\cA|$, and this dependency matches the known minimax lower bound \cite{azar2013minimax} derived for the setting with independent samples. In comparison, \citet[Theorem 7]{qu2020finite} depends at least quadratically on $1/\mumin$, which is at least $|\cS||\cA|$ times larger than  our result \eqref{eq:sample-size-Q-asynchronous}. 


\paragraph{Dependency on the effective horizon $\frac{1}{1-\gamma}$.} The sample size bound \eqref{eq:sample-size-Q-asynchronous} scales as $\frac{1}{(1-\gamma)^5\varepsilon^2}$, which coincides with both \cite{wainwright2019stochastic,chen2020finite} (for the synchronous setting) and \citet{beck2012error,qu2020finite} (for the asynchronous setting) with either a rescaled linear learning rate or a constant learning rate. This turns out to be the sharpest scaling known to date for the classical form of Q-learning.  


\paragraph{Dependency on the mixing time $\tmix$.} The second additive term of our sample complexity \eqref{eq:sample-size-Q-asynchronous} depends linearly on the mixing time $\tmix$ and is (almost) independent of the target accuracy $\varepsilon$. The influence of this mixing term is a consequence of the expense taken for the Markovian trajectory to reach a steady state, which is a one-time cost that can be amortized over later iterations if the algorithm is run for reasonably long. Put another way, if the behavior chain mixes not too slowly with respect to $\varepsilon$ (in the sense that
$\tmix\leq \frac{1}{(1-\gamma)^4\varepsilon^2} $), then the algorithm behaves as if the samples were independently drawn from the stationary distribution of the trajectory. In comparison, the influences of $\tmix$ and $\frac{1}{(1-\gamma)^5\varepsilon^2}$ in \cite{qu2020finite} (cf.~Table~\ref{table:comparison_asyncQ}) are multiplicative regardless of the value of $\varepsilon$, thus resulting in a much higher sample complexity.  For instance, if $\varepsilon=O\big(\frac{1}{(1-\gamma)^{2}\sqrt{\tmix}}\big)$, then  the sample complexity result therein is at least $$ \frac{\tmix}{\mumin}  \geq \tmix |\cS||\cA| $$ times larger than our result (modulo some log factor).

\paragraph{Schedule of learning rates.} An interesting aspect of our analysis lies in the adoption of a time-invariant learning rate, under which the $\ell_\infty$ error decays linearly --- down to some error floor whose value is dictated by the learning rate. Therefore, a desired statistical accuracy can be achieved by properly setting the learning rate based on the target accuracy level $\varepsilon$ and then determining the sample complexity accordingly.  In comparison, classical analyses typically adopted a (rescaled) linear or a polynomial learning rule \cite{even2003learning,qu2020finite}. While the work \cite{beck2012error} studied Q-learning with a constant learning rate, their bounds were conservative and fell short of revealing the optimal scaling. Furthermore, we note that adopting time-invariant learning rates is not the only option that enables the advertised sample complexity; as we shall elucidate in Section~\ref{section:adaptive-learning-rates}, one can also adopt carefully designed diminishing learning rates to achieve the same performance guarantees.

\paragraph{Mean estimation error.}
	The high-probability bound in Theorem~\ref{thm:main-asyn-Q-learning} readily translates to a mean estimation error guarantee. 
	To see this, let us first make note of the following basic crude bound (see e.g.~\cite{gosavi2006boundedness,beck2012error})
	\begin{align}
		\big| {Q}_{t}(s,a)  \big| \leq \frac{1}{1-\gamma},
		\qquad \big| {Q}_{t}(s,a) -  {Q}^{\star}(s,a) \big| \leq \frac{1}{1-\gamma}
	\end{align}
	for all $t \geq 0$  and all  $(s, a) \in \cS \times \cA$. 
	By taking $\delta= {\varepsilon(1-\gamma)}$ in Theorem~\ref{thm:main-asyn-Q-learning}, we immediately reach
	\begin{equation}
		\mathbb{E}\Big[ \max_{s,a} \big| {Q}_{T}(s,a) -  {Q}^{\star}(s,a) \big| \Big] \le \varepsilon (1-\delta)+\delta \frac{1}{1-\gamma}
		\leq 2\varepsilon,
	\end{equation}
	provided that $T$ obeys~\eqref{eqn:numbersteps}. As a result, the sample complexity remains unchanged (up to some logarithmic factor) when the goal is to achieve the mean error bound
	$\mathbb{E}\big[ \max_{s,a} \big| {Q}_{T}(s,a) -  {Q}^{\star}(s,a) \big| \big] \le 2\varepsilon$. 

%


\bigskip

In addition, our analysis framework immediately leads to another sample complexity guarantee stated in terms of the cover time $\tcovertime$ (cf.~\eqref{eq:defn-cover-time}), which facilitates comparisons with several past work \cite{even2003learning,beck2012error}.  
The proof follows essentially that of Theorem~\ref{thm:main-asyn-Q-learning}, with a sketch provided in Section~\ref{sec:analysis-cover-time}.
\begin{theorem}
	\label{thm:main-asyn-Q-learning-cover-time}
	For the asynchronous Q-learning algorithm detailed in Algorithm~\ref{alg:async-q}, 
	there exist some universal constants $c_0,c_1>0$ such that for any $0<\delta<1$ and $0<\varepsilon \leq \frac{1}{1-\gamma}$,  one has
	\begin{align*}
		\forall (s,a)\in \cS\times \cA: \qquad |Q_T (s,a) - Q^\star(s,a)| \leq \varepsilon
	\end{align*}
	with probability at least $1-\delta$, provided that the iteration number $T$ and the learning rates $\eta_t \equiv \eta $ obey
	\begin{subequations}
	\begin{align}	
		T & \geq  \frac{c_{0} \tcovertime}{(1-\gamma)^{5}\varepsilon^{2}} 
		\log^2\Big( \frac{|\mathcal{S}||\mathcal{A}|T}{\delta} \Big) \log \Big(\frac{1}{(1-\gamma)^2\varepsilon} \Big), \label{eqn:numbersteps-cover-time}\\
		\eta & =\frac{c_{1}}{\log\big(\frac{|\mathcal{S}||\mathcal{A}|T}{\delta}\big)}\min\left\{ \frac{(1-\gamma)^{4}\varepsilon^{2}}{\gamma^2},~1\right\} .
		\label{eq:learning-rate-asynQ-cover-time}
	\end{align}
	\end{subequations}
\end{theorem}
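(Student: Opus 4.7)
The plan is to mimic the proof of Theorem~\ref{thm:main-asyn-Q-learning} step by step, replacing the mixing-time-based frame structure (length $\sim \frac{\log(\cdot)}{\mumin}$) by a cover-time-based frame structure (length $\sim \tcovertime \log(\cdot)$). Concretely, the core object in the asynchronous analysis is a partition of the trajectory $\{(s_t,a_t)\}_{t=0}^{T-1}$ into consecutive frames, together with a high-probability lower bound on the number of visits to each state-action pair inside each frame. In the mixing-time proof, this is obtained by combining the lower bound $\mu_{\pi_{\mathsf{b}}}(s,a)\geq \mumin$ with Chernoff-type concentration for Markov chains of length $\gtrsim \tmix\log(\cdot)$. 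Here I would replace that argument by the cover-time definition itself.

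First I would set the frame length to be $L \defn c\,\tcovertime \log\!\big(\tfrac{|\cS||\cA|T}{\delta}\big)$ for a suitable constant $c>0$. By the definition \eqref{eq:defn-cover-time}, for any starting state-action pair, all $(s,a)\in\cS\times\cA$ are visited within $\tcovertime$ steps with probability at least $1/2$. Iterating this $\Theta(\log(|\cS||\cA|T/\delta))$ times (restarting after each sub-block and using the strong Markov property) and taking a union bound over all $\lceil T/L\rceil$ frames, I obtain: with probability $\geq 1-\delta/2$, every $(s,a)$ is visited at least once in every frame, hence at least $N\defn\lfloor T/L\rfloor$ times in total. This is the cover-time analogue of the occupancy lower bound $\tfrac{T\mumin}{2}$ used in the proof of Theorem~\ref{thm:main-asyn-Q-learning}.

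Next I would re-run the error recursion from the proof of Theorem~\ref{thm:main-asyn-Q-learning} verbatim, with the per-frame-visit guarantee above playing the role of the ergodicity argument. That proof controls $\|Q_T-Q^\star\|_\infty$ by decomposing it into (i) a bias/contraction term that decays geometrically per frame and (ii) a variance term bounded via a Freedman-type concentration inequality along the trajectory. The required number of per-pair updates to drive both terms below $\varepsilon$ is $\widetilde{O}\!\big(\tfrac{1}{(1-\gamma)^5\varepsilon^2}\big)$ under the constant learning rate $\eta = \tfrac{c_1}{\log(\cdot)}\min\{(1-\gamma)^4\varepsilon^2/\gamma^2,\,1\}$. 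Since each pair is visited $\gtrsim T/L$ times, picking $T \gtrsim L\cdot \tfrac{\log((1-\gamma)^{-2}\varepsilon^{-1})}{(1-\gamma)^5\varepsilon^2}$ yields exactly the bound \eqref{eqn:numbersteps-cover-time}. The learning-rate cap $\eta\leq 1/\tmix$ from Theorem~\ref{thm:main-asyn-Q-learning} (which came from requiring the Markov chain to mix within one ``effective window'' $1/\eta$) can be relaxed to $\eta\leq 1$ because the frame-based covering guarantee no longer routes through any per-step mixing condition — the cover-time frame already delivers the required mean-like behavior directly.

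The main obstacle I anticipate is the concentration/variance step: in Theorem~\ref{thm:main-asyn-Q-learning} the Markovian noise in the stochastic Bellman update is tamed using mixing-time concentration bounds (e.g., \citealp{paulin2015concentration}), and one must check that these can still be applied in a way whose final bound depends only on $\tcovertime$ rather than $\tmix$. The cleanest way around this is to bound the noise frame-by-frame, treating the frame-level increments as a martingale-difference sequence indexed by frames (so only inter-frame independence via the strong Markov structure, with a crude union bound inside each frame using $\|Q_t\|_\infty\leq 1/(1-\gamma)$). Because one only needs $\Omega(1)$ visits per pair per frame to push the recursion through, the extra $\log(\cdot)$ in the definition of $L$ suffices, and no mixing-time quantity leaks into the final bound — producing precisely \eqref{eqn:numbersteps-cover-time} and \eqref{eq:learning-rate-asynQ-cover-time}.
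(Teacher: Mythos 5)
Your proposal follows essentially the same route as the paper: define a frame length $\tcoverall \asymp \tcovertime\log(T/\delta)$, iterate the cover-time definition via the Markov property plus a union bound to guarantee at least one visit to every $(s,a)$ per frame (this is exactly Lemma~\ref{lem:connection-cover-time}), use that visit count to replace the occupancy lower bound in the contraction term (Lemma~\ref{lemma:control-beta3-cover-time} in place of Lemma~\ref{lemma:control-beta3}), re-run the recursion of Theorem~\ref{thm:intermediate} with $\mumin$ effectively replaced by $1/\tcoverall$, and observe that the $1/\tmix$ cap on $\eta$ relaxes to a constant because the per-frame contraction exponent becomes $\tcoverall/(2\tcoverall)=1/2$. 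The extra $\log$ in \eqref{eqn:numbersteps-cover-time} arises exactly as you say, from the $\log(T/\delta)$ inside $\tcoverall$.

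The one substantive point where you go astray is the ``main obstacle'' you anticipate for the variance term. In the paper's proof of Theorem~\ref{thm:main-asyn-Q-learning}, the noise term $\bm{\beta}_{1,t}$ is \emph{not} controlled via mixing-time concentration: Lemma~\ref{lemma:control-beta1} shows (via the strong Markov property, equation \eqref{eq:independence-Markov-chain}) that the transitions observed at the successive visit times to a fixed $(s,a)$ are i.i.d.\ draws from $P_{s,a}(\cdot)$, and applies Bernstein directly to that sum with geometric weights, with a union bound over all possible visit counts $K\leq T$. No mixing quantity appears in $\tau_1$. The Paulin-type concentration (Lemma~\ref{lemma:Bernstein-state-occupancy}) is used \emph{only} for the occupancy lower bound, which is precisely the piece you are replacing by the cover-time frame argument. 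So the cover-time proof simply reuses Lemma~\ref{lemma:control-beta1} verbatim, and your worry is a non-issue. Your proposed workaround --- a frame-indexed martingale with a crude in-frame union bound via $\|Q_t\|_\infty\leq 1/(1-\gamma)$ --- is therefore unnecessary, and would likely be lossy: the number of visits to a pair within a single frame can be as large as $\tcoverall$, so bounding frame-level increments crudely would inject extra factors of $\tcoverall$ into the variance term and fail to recover the sharp $\sqrt{\eta\log(\cdot)}\,\|\bm{V}^{\star}\|_{\infty}$ floor that the final sample complexity requires. Since your primary plan is to rerun the original recursion verbatim, the proof still goes through --- just drop the alternative variance argument.
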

\begin{remark}
	The main difference between the cover-time-based analysis and the mixing-time-based analysis lies in the number of visits to each state-action pair $(s, a)$ in every time frame.  
	Owing to the measure concentration of Markov chains, we can see that the number of visits to each $(s,a)$ concentrates around its expected value in each time frame, 
	which in turn ensures that all state-action pairs have been visited at least once as long as the time frame is sufficiently long. 
	This important property allows one to establish an intimate connection between the analysis of Theorem~\ref{thm:main-asyn-Q-learning} and that of Theorem~\ref{thm:main-asyn-Q-learning-cover-time}. 
\end{remark}
In a nutshell, this theorem tells us that the $\ell_{\infty}$-based sample complexity of classical asynchronous Q-learning is bounded above by
\begin{align}
	\label{eq:sample-complexity-cover-time}
	\widetilde{O} \Big( \frac{ \tcovertime}{(1-\gamma)^{5}\varepsilon^{2}} \Big),
\end{align}
which scales linearly with the cover time. This improves upon the prior result \cite{even2003learning} (resp.~\cite{beck2012error}) by an order of at least 
$$\tcovertime^{3.29} \geq |\cS|^{3.29}|\cA|^{3.29} \qquad  (\text{resp. } \tcovertime^2|\cS||\cA|\geq |\cS|^3|\cA|^3).$$ 
See Table~\ref{table:comparison_asyncQ} for detailed comparisons. We shall further make note of some connections between $\tcovertime$ and $\tmix/\mumin$ to help compare Theorem~\ref{thm:main-asyn-Q-learning} and Theorem~\ref{thm:main-asyn-Q-learning-cover-time}: \emph{(i)} in general, $\tcovertime=\widetilde{O}(\tmix/\mumin)$ for uniformly ergodic chains; \emph{(ii)} one can find some cases where $\tmix/\mumin=\widetilde{O}(\tcovertime)$. Consequently, while Theorem~\ref{thm:main-asyn-Q-learning} does not strictly dominate Theorem~\ref{thm:main-asyn-Q-learning-cover-time} in all instances, the aforementioned connections reveal that   Theorem~\ref{thm:main-asyn-Q-learning} is tighter for the worst-case scenarios. 
The interested reader is referred to Section~\ref{sec:connection-tcover-tmix} for details.

\subsection{A special case: TD learning} 
\label{sec:TD-connection}

In the special circumstance that the set of allowable actions $\cA$ is a singleton, the corresponding MDP reduces to a Markov reward process (MRP), where the state transition kernel $P: \mathcal{S} \rightarrow \Delta(\cS)$ describes the probability of transitioning  between different states, and $r: \mathcal{S} \rightarrow [0,1]$ denotes the reward function (so that $r(s)$ is the immediate reward in state $s$). 
 The goal is to estimate the value function $V: \mathcal{S} \rightarrow \mathbb{R}$ from the trajectory $\{s_t,r_t\}_{t=0}^{\infty}$, which arises commonly in the task of policy evaluation for a given deterministic policy.

The Q-learning procedure in this special setting reduces to the well-known TD learning algorithm, which maintains an estimate $V_t: \mathcal{S} \rightarrow \mathbb{R}$ at each time $t$ and proceeds according to the following iterative update\footnote{When $\cA=\{a\}$ is a singleton,  the Q-learning update rule \eqref{eqn:q-learning} reduces to the TD update rule \eqref{eqn:td-learning} by relating $Q(s,a) = V(s)$. }
\begin{equation}
\label{eqn:td-learning}
\begin{split}
	V_t(s_{t-1}) &= (1- \eta_t ) V_{t-1}(s_{t-1} ) + \eta_t \left( r(s_{t-1}) + \gamma   V_{t-1}(s_t) \right),  \\
V_t(s) & = V_{t-1}(s ), \qquad \forall s \neq s_{t-1} .
\end{split}
\end{equation}
As usual, $\eta_t$ denotes the learning rate at time $t$, and $V_0$ is taken to be $0$.  
Consequently, our analysis for asynchronous Q-learning with a Markovian trajectory immediately leads to non-asymptotic $\ell_\infty$ guarantees for TD learning, stated below as a corollary of Theorem~\ref{thm:main-asyn-Q-learning}. A similar result can be stated in terms of the cover time as a corollary to Theorem~\ref{thm:main-asyn-Q-learning-cover-time}, which we omit for brevity.
\begin{corollary}[Asynchronous TD learning] \label{corollary:asynTD}
	Consider the TD learning algorithm~\eqref{eqn:td-learning}.
	There exist some universal constants $c_0,c_1>0$ such that for any $0<\delta<1$ and $0<\varepsilon \leq \frac{1}{1-\gamma}$,  one has
	\begin{align*}
		\forall s\in \cS: \qquad |V_T (s) - V (s)| \leq \varepsilon
	\end{align*}
	with probability at least $1-\delta$, provided that the iteration number $T$ and the learning rates $\eta_t \equiv \eta $ obey
	\begin{subequations}
	\begin{align}
	\label{eqn:numbersteps-TD}
		T & \geq \frac{c_{0}}{\mumin}\left\{ \frac{1}{(1-\gamma)^{5}\varepsilon^{2}}+\frac{\tmix}{1-\gamma}\right\} 
		\log\Big( \frac{|\mathcal{S} | T}{\delta} \Big) \log \Big(\frac{1}{(1-\gamma)^2\varepsilon} \Big),\\
		\eta & =\frac{c_{1}}{\log\big(\frac{|\mathcal{S}| T}{\delta}\big)}\min\left\{ \frac{(1-\gamma)^{4}\varepsilon^{2}}{\gamma^2},~\frac{1}{\tmix}\right\} .
	\end{align}
	\end{subequations}
\end{corollary}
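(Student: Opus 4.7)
The plan is to obtain Corollary~\ref{corollary:asynTD} as an immediate specialization of Theorem~\ref{thm:main-asyn-Q-learning} to the case where the action space is a singleton. Concretely, I would embed the given MRP $(\cS,P,r,\gamma)$ into an MDP $(\cS,\cA_{0},P_{0},r_{0},\gamma)$ by taking $\cA_{0}=\{a_{0}\}$, $P_{0}(s'\mid s,a_{0})=P(s'\mid s)$, and $r_{0}(s,a_{0})=r(s)$. Under this identification, there is only one stationary policy $\pi_{\mathsf{b}}(s)\equiv a_{0}$, which is simultaneously the behavior policy and the (trivially) optimal policy, so $Q^{\star}(s,a_{0})$ coincides exactly with the value function $V(s)$ of the MRP.

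Next I would verify that the two iterative procedures coincide. Since $\max_{a'\in\cA_{0}}Q(s_{t},a')=Q(s_{t},a_{0})$, the empirical Bellman operator \eqref{defn:empirical-Bellman-t} reduces to $r(s_{t-1})+\gamma Q_{t-1}(s_{t},a_{0})$, and hence the asynchronous Q-learning update \eqref{eqn:q-learning} is identical to the TD update \eqref{eqn:td-learning} under the correspondence $Q_{t}(s,a_{0})\leftrightarrow V_{t}(s)$. Moreover, the Markov chain over state-action pairs induced by $\pi_{\mathsf{b}}$ on the MDP is isomorphic to the Markov chain on $\cS$ induced by $P$; consequently, the mixing time $\tmix$ and the minimum occupancy probability $\mumin$ defined in \eqref{defn:mixing-time-MC} and \eqref{defn:mu-min} for the MDP match the corresponding quantities for the MRP.

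Given this reduction, I would invoke Theorem~\ref{thm:main-asyn-Q-learning} directly. The conclusion $|Q_{T}(s,a_{0})-Q^{\star}(s,a_{0})|\leq\varepsilon$ for all $(s,a_{0})$ translates to $|V_{T}(s)-V(s)|\leq\varepsilon$ for all $s\in\cS$. The iteration count and learning-rate conditions \eqref{eqn:numbersteps}--\eqref{eq:learning-rate-asynQ} become exactly \eqref{eqn:numbersteps-TD} and its companion, after substituting $|\cA_{0}|=1$, which turns $\log(|\cS||\cA|T/\delta)$ into $\log(|\cS|T/\delta)$ and leaves all other factors untouched.

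There is essentially no obstacle here beyond bookkeeping: the only care required is to confirm that the "optimal" Q-function in the singleton-action MDP really is the value function being evaluated in the MRP (which holds because, with only one action, every policy is optimal and $V^{\pi_{\mathsf{b}}}=V$), and that the reward range $[0,1]$ and the ergodicity Assumption~\ref{assumption:uniform-ergodic} carry over verbatim. With these checks, the corollary follows without any new analysis.
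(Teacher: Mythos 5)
Your proposal is correct and follows exactly the route the paper takes: the corollary is obtained by specializing Theorem~\ref{thm:main-asyn-Q-learning} to a singleton action space, under which the Q-learning update \eqref{eqn:q-learning} reduces to the TD update \eqref{eqn:td-learning} via $Q(s,a_0)=V(s)$, the induced chain and its parameters $\tmix,\mumin$ coincide with those of the MRP, and $|\mathcal{A}|=1$ turns $\log(|\mathcal{S}||\mathcal{A}|T/\delta)$ into $\log(|\mathcal{S}|T/\delta)$. The checks you flag (that $Q^{\star}(\cdot,a_0)$ is the value function being evaluated, and that the reward range and ergodicity assumption carry over) are precisely the bookkeeping the paper leaves implicit.
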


The above result reveals that the $\ell_{\infty}$-sample complexity for TD learning  is at most
\begin{align}
	\widetilde{O}\Big( \frac{1}{\mumin (1-\gamma)^5 \varepsilon^2}  +  \frac{\tmix}{\mumin (1-\gamma)} \Big), 
	\label{eq:sample-size-TD-asynchronous}
\end{align}
provided that an appropriate constant learning rate is adopted. We note that prior finite-sample analysis on asynchronous TD learning typically focused on (weighted) $\ell_2$ estimation errors with linear function approximation \citep{bhandari2018finite,srikant2019finite}, and it is hence difficult to make fair comparisons.  The recent paper \cite{khamaru2020temporal} developed $\ell_{\infty}$ guarantees for TD learning, focusing on the synchronous settings with i.i.d.~samples rather than Markovian samples.  



\subsection{Adaptive and implementable learning rates}
\label{section:adaptive-learning-rates}

As alluded to previously, the learning rates recommended in \eqref{eq:learning-rate-asynQ} depend on the mixing time $\tmix$, 
a parameter that might be either {\em a priori} unknown or difficult to estimate. Fortunately, it is feasible to adopt a more adaptive learning rate schedule,
which does not rely on prior knowledge of $\tmix$ while still being capable of achieving the performance advertised in Theorem~\ref{thm:main-asyn-Q-learning}.

\paragraph{Learning rates.} In order to describe our new learning rate schedule, we need to keep track of the following quantities for all $(s,a)\in \cS\times \cA$: 
\begin{itemize}
	\item $K_t(s, a)$: the number of times that the sample trajectory visits $(s,a)$ during the first $t$ iterations.
\end{itemize}
In addition, we maintain an estimate $\widehat{\mu}_{\mathsf{min},t}$ of $\mumin$, computed recursively as follows
\begin{equation} \label{eq:mu_est}
\widehat{\mu}_{\mathsf{min},t} = 
\left\{
\begin{array}{ll}
\frac{1}{|\mathcal{S}||\mathcal{A}|}, &\text{if }\min_{s, a} K_t(s, a) = 0; \\
\widehat{\mu}_{\mathsf{min},t-1}, &\text{if } \frac13 < \frac{\min_{s, a} K_t(s, a)/t}{\widehat{\mu}_{\mathsf{min},t-1}} < 3; \\
\min_{s, a} K_t(s, a)/t, \quad &\textrm{otherwise}.
\end{array}
\right.
\end{equation}
%

With the above quantities in place, we propose the following learning rate schedule:  
\begin{equation} 
	\label{eq:learning-rate-implement}
	\eta_t = \min\Big\{1,  c_{\eta}\exp\Big(\Big\lfloor\log\frac{\log t}{\widehat{\mu}_{\mathsf{min},t}(1-\gamma)\gamma^2t}\Big\rfloor\Big)  \Big\} ,
\end{equation}
where $c_{\eta}>0$ is some universal constant independent of any MDP parameter\footnote{More precisely, $c_{\eta}>0$ can be 
any universal constant obeying $c_{\eta} \ge 74c_0c_1$ and $c_{\eta}>11$, with $c_0$ and $c_1$ being the universal constants stated in Theorem~\ref{thm:main-asyn-Q-learning}.}
and $\lfloor x \rfloor $ denotes the nearest integer less than or equal to $x$. 
If $\widehat{\mu}_{\mathsf{min},t}$ forms a reliable estimate of $\mumin$, then one can view \eqref{eq:learning-rate-implement} as a sort of ``piecewise constant approximation'' of the rescaled linear stepsizes $\frac{c_{\eta}\log t}{\mu_{\mathsf{min}}(1-\gamma)\gamma^2t}$; 
in fact, this can be viewed as a sort of ``doubling trick'' --- reducing the learning rate by a constant factor every once a while --- to approximate rescaled linear learning rates. 
Theorem~\ref{thm:main-asyn-Q-learning} can then be readily applied to analyze the performance for each constant segment of this learning rate schedule \eqref{eq:learning-rate-implement}. 
Noteworthily, such learning rates are fully data-driven and do no rely on any prior knowledge about the Markov chain (like $\tmix$ and $\mu_{\mathsf{min}}$) or the target accuracy level $\varepsilon$.

\paragraph{Performance guarantees.}
 Encouragingly, our theoretical framework can be readily extended without difficulty to accommodate this adaptive learning rate choice.
Specifically, for the  Q-function estimates
\begin{equation} \label{eq:Q_est}
\widehat{ Q}_t = 
\left\{
\begin{array}{ll}
	 Q_t, & \text{if }\eta_{t+1} \ne \eta_t, \\
\widehat{ Q}_{t-1}, \qquad &\text{otherwise},
\end{array}
\right.
\end{equation}
where $Q_t$ is provided by the Q-learning iterations (cf.~\eqref{eqn:q-learning}). 
We can then establish the following theoretical guarantees, whose proof is deferred to Section~\ref{sec:proof-thm:new-learning-rates-asyn-Q}.

\begin{theorem}
\label{thm:new-learning-rates-asyn-Q}
Consider  asynchronous Q-learning with learning rates \eqref{eq:learning-rate-implement} and the output \eqref{eq:Q_est}. 
There exists some universal constant $C>0$ such that: 
for any $0 < \delta < 1$ and $0 < \varepsilon \le \frac{1}{1-\gamma}$, one has
\begin{align}
	\forall (s,a)\in \cS\times \cA:\qquad	
	\big|\widehat{ Q}_T(s,a) -  Q^{\star} (s,a) \big| \le \varepsilon
\end{align}
with probability at least $1 - \delta$, provided that 
\begin{equation} 
	\label{eq:sample-new}
	T \ge \frac{C}{\gamma^2}\max \Big\{\frac{1}{\mu_{\mathsf{min}} (1-\gamma)^5 \varepsilon^2},  \frac{t_{\mathsf{mix}}}{\mu_{\mathsf{min}}(1-\gamma)} \Big\}
	\log\Big(\frac{|\mathcal{S}||\mathcal{A}|T}{\delta} \Big)\log \Big(\frac{T}{(1-\gamma)^2 \varepsilon}\Big) .
\end{equation}
%
\end{theorem}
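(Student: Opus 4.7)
The strategy is to reduce the adaptive-stepsize analysis to a sequence of constant-stepsize Q-learning instances by exploiting the piecewise-constant structure built into \eqref{eq:learning-rate-implement}, and then apply the machinery underlying Theorem~\ref{thm:main-asyn-Q-learning} on each piece. The snapshot definition \eqref{eq:Q_est} meshes naturally with this decomposition because $\widehat{Q}_t$ only records $Q_t$ at the moment the stepsize is about to change, i.e., at epoch boundaries.

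I would first carry out a burn-in analysis: using standard concentration inequalities for uniformly ergodic Markov chains (e.g.\ the Chernoff-type bounds of \cite{paulin2015concentration}), after $T_0 = \widetilde{O}(\tmix/\mumin)$ iterations the empirical visitation frequencies $K_t(s,a)/t$ concentrate uniformly around $\mu_{\pi_{\mathsf{b}}}(s,a)$ with probability at least $1-\delta/2$. Hence $\min_{s,a} K_t(s,a)/t \in [\mumin/2,\, 2\mumin]$ for every $t \geq T_0$, and the lazy update rule \eqref{eq:mu_est} locks $\widehat{\mu}_{\mathsf{min},t}$ within a constant multiplicative factor of $\mumin$ for all subsequent $t$. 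The $T_0$ samples consumed by this phase are already absorbed by the second term of \eqref{eq:sample-new}.

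Next, for $t \geq T_0$ the integer $\lfloor \log \frac{\log t}{\widehat{\mu}_{\mathsf{min},t}(1-\gamma)\gamma^{2}t}\rfloor$ is piecewise constant and decreases by one roughly each time $t$ doubles, so the iterations partition into consecutive epochs $\mathcal{E}_{1},\mathcal{E}_{2},\ldots$ on each of which $\eta_t$ takes a constant value $\eta^{(i)}$, with $\eta^{(i+1)} \asymp \eta^{(i)}/e$ and $|\mathcal{E}_i| \asymp 1/[\mumin(1-\gamma)\gamma^{2}\eta^{(i)}]$ up to logarithmic factors. Within $\mathcal{E}_i$ the update is \emph{exactly} asynchronous Q-learning with constant stepsize $\eta^{(i)}$ initialized at $Q_{t_i}$, $t_i := \min\mathcal{E}_i$. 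Replaying the error decomposition developed in Section~\ref{Sec:Analysis-asynchronous} with an arbitrary warm start bounded by $1/(1-\gamma)$, I would establish a per-epoch contraction of the form
\[
	\big\| Q_{t_{i+1}-1} - \Qstar \big\|_\infty \;\le\; \tfrac{1}{2}\,\big\| Q_{t_i} - \Qstar \big\|_\infty \;+\; C\,\sqrt{\tfrac{\eta^{(i)}}{\mumin (1-\gamma)^{3}}}\cdot \log\!\Big(\tfrac{|\cS||\cA|T}{\delta}\Big),
\]
in which the second term is the familiar error floor associated with stepsize $\eta^{(i)}$; by \eqref{eq:Q_est} the same bound transfers verbatim to $\widehat{Q}_{t_{i+1}-1}$.

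Telescoping this recursion across $\widetilde{O}(\log\frac{1}{(1-\gamma)\varepsilon})$ epochs — that is, until the geometrically shrinking floor drops below $\varepsilon$ — yields $\|\widehat{Q}_T - \Qstar\|_\infty \le \varepsilon$ with the advertised probability, and summing the (geometrically growing) epoch lengths reproduces the sample complexity \eqref{eq:sample-new}, the extra $\log T$ originating from the $\log t$ appearing in \eqref{eq:learning-rate-implement}. The hard part I anticipate is the per-epoch warm-started contraction: the proof of Theorem~\ref{thm:main-asyn-Q-learning} implicitly leverages $Q_0 = 0$ both to bootstrap $\ell_\infty$-boundedness of the iterates and to cap the initial error at $1/(1-\gamma)$, so its key lemmas must be revisited with a general initial condition to verify that the geometric-series factor $(1-\eta^{(i)}(1-\gamma))^{|\mathcal{E}_i|}$ dominates the inherited error before the Markovian-noise terms take over. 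A secondary nuisance is the stochastic coupling between $\widehat{\mu}_{\mathsf{min},t}$, the epoch boundaries, and the Markov samples, which needs to be handled via a union bound on the single high-probability event furnished by the burn-in step.
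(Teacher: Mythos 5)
Your plan follows the paper's proof essentially step for step: a burn-in phase (via Lemma~\ref{lemma:Bernstein-state-occupancy}) after which $\widehat{\mu}_{\mathsf{min},t}$ locks onto a constant multiple of $\mumin$; a partition of the remaining iterations into segments $(t_k,t_{k+1}]$ on which $\eta_t$ is constant and $t_{k+1}\asymp t_k$; an application of the constant-stepsize result (Theorem~\ref{thm:main-asyn-Q-learning}) on each segment with warm start $Q_{t_k}$, controlled by a crude a priori bound (the paper uses $\|\bm{Q}_t-\bm{Q}^\star\|_\infty\le 3t$, which is where the extra $\log T$ in \eqref{eq:sample-new} comes from); and the observation that $\widehat{Q}_T=Q_{t_{k_{\max}}}$ with $t_{k_{\max}}\ge T/3$. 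The one substantive discrepancy is your per-epoch error floor $\sqrt{\eta^{(i)}/(\mumin(1-\gamma)^3)}$: the constant-stepsize analysis (Theorem~\ref{thm:intermediate}) yields a floor of order $\frac{\gamma}{(1-\gamma)^2}\sqrt{\eta^{(i)}\log(|\cS||\cA|T/\delta)}$, with no $\mumin$ inside the square root; substituting $\eta^{(i)}\asymp \frac{\log t_i}{\mumin(1-\gamma)\gamma^2 t_i}$ then gives $\varepsilon_i\asymp\big(\frac{\log t_i\,\log(|\cS||\cA|t_i/\delta)}{\mumin(1-\gamma)^5 t_i}\big)^{1/2}$, which is exactly what telescopes to the advertised $\frac{1}{\mumin(1-\gamma)^5\varepsilon^2}$ term, whereas your stated floor would produce a spurious $\frac{1}{\mumin^2(1-\gamma)^4\varepsilon^2}$ dependence. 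Apart from this arithmetic slip the argument is sound, and the difficulties you flag (warm-started contraction and the coupling of $\widehat{\mu}_{\mathsf{min},t}$ with the sample path) are precisely the ones the paper handles.
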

%
%

\begin{remark}
	The interested reader might wonder whether our sample complexity guarantees continue to hold under the linear learning rate $\eta_t = \frac{1}{K_t(s_t,a_t)}$ --- a learning rate schedule that has been previously studied in \citet{tsitsiklis1994asynchronous,even2003learning}.  
	Nevertheless, as discussed in \citet[Section 3.3.1]{wainwright2019stochastic}, this linear learning rate can lead to 
	a sample complexity that scales exponentially in the effective horizon $\frac{1}{1-\gamma}$, which is clearly outperformed by a properly  rescaled linear learning rate. 
\end{remark}

\section{Extension: asynchronous variance-reduced Q-learning}
\label{sec:VR-Q}

As pointed out in prior literature, the classical form of Q-learning \eqref{eqn:q-learning} often suffers from sub-optimal dependence on the 
effective horizon $\frac{1}{1-\gamma}$. For instance, in the synchronous setting, the minimax lower bound is proportional to $\frac{1}{(1-\gamma)^3}$ (see,~\cite{azar2013minimax}), while the sharpest known upper bound for vanilla Q-learning scales as $\frac{1}{(1-\gamma)^5}$;  
see detailed discussions in \cite{wainwright2019stochastic}. To remedy this issue, recent work proposed to leverage the idea of variance reduction to develop accelerated RL algorithms in the synchronous setting \citep{sidford2018near,wainwright2019variance}, as   
inspired by the seminal SVRG algorithm \citep{johnson2013accelerating} that originates from the stochastic optimization literature. In this section, we adapt this idea to asynchronous Q-learning and characterize its sample efficiency.

\subsection{Algorithm}

In order to accelerate the convergence, it is instrumental to reduce the variability of the empirical Bellman operator $\mathcal{T}_t$ employed in the update rule \eqref{eqn:q-learning} of classical Q-learning. This can be achieved via the following means. Simply put, assuming we have access to \emph{(i)} a reference $Q$-function estimate, denoted by $\overline{Q}$, and \emph{(ii)} an estimate of $\mathcal{T}(\overline{Q})$, denoted by $\widetilde{\mathcal{T}}(\overline{Q})$, 
the variance-reduced Q-learning update rule is given by
\begin{equation}
\label{eqn:vr-q-learning}
\begin{split}
	Q_t(s_{t-1},a_{t-1}) &= (1- \eta_t ) Q_{t-1}(s_{t-1},a_{t-1}) + 
	\eta_t \Big(  \mathcal{T}_t (Q_{t-1}) - \mathcal{T}_t (\overline{Q})  
	+\widetilde{\mathcal{T}}(\overline{Q}) \Big)  (s_{t-1},a_{t-1}),  \\
	Q_t(s ,a) & = Q_{t-1}(s ,a ), \qquad \forall (s,a)\neq (s_{t-1},a_{t-1}), 
\end{split}
\end{equation}
where $\mathcal{T}_t$ denotes the empirical Bellman operator at time $t$ (cf.~\eqref{defn:empirical-Bellman-t}). The empirical estimate $\widetilde{\mathcal{T}}(\overline{Q})$ can be computed using a set of samples; more specifically, by drawing $N$ consecutive sample transitions $\{ (s_{i},a_{i}, s_{i+1}) \}_{0\leq i< N}$ from the observed trajectory,  we compute
\begin{equation} \label{eq:surrogate_TQ}
\widetilde{\mathcal{T}}(\overline{Q}) (s,a) = r(s,a) + \frac{\gamma \sum_{i=0}^{N-1}   \ind\{(s_i,a_i)=(s,a) \}\max_{a'} \overline{Q}(s_{i+1},a') }{\sum_{i=0}^{N-1} \ind\{(s_i,a_i)=(s,a)\} } .
\end{equation}
Compared with the classical form~\eqref{eqn:q-learning}, the original update term $\mathcal{T}_t(Q_{t-1})$ has been replaced by $ \mathcal{T}_t (Q_{t-1}) - \mathcal{T}_t (\overline{Q})  
	+\widetilde{\mathcal{T}}(\overline{Q}) $, in the hope of achieving reduced variance  as long as $\overline{Q}$ (which serves as a proxy to $Q^{\star}$) is chosen properly.
%
%
%

We now take a moment to elucidate the rationale behind the variance-reduced update rule \eqref{eqn:vr-q-learning}. 
In the vanilla Q-learning update rule \eqref{eqn:q-learning}, the variability in each iteration (conditional on the past) comes primarily from the stochastic term $ \mathcal{T}_t (Q_{t-1})$. 
In order to accelerate convergence, it is advisable to reduce the variability of this term. 
Suppose now that we have access to a reference point $\overline{Q}$ that is close to $Q_{t-1}$. By replacing  $\mathcal{T}_t (Q_{t-1})$ with 
\[
	\Big\{ \mathcal{T}_t (Q_{t-1}) - \mathcal{T}_t (\overline{Q}) \Big\}
	+\widetilde{\mathcal{T}}(\overline{Q}) , 
\]
we see that the variability of the first term $\mathcal{T}_t (Q_{t-1}) - \mathcal{T}_t (\overline{Q})$ can be small if $Q_{t-1}\approx \overline{Q}$, 
while the uncertainty of the second term $\widetilde{\mathcal{T}}(\overline{Q})$ can also be well controlled via the use of batch data. 
Motivated by this simple idea, the variance-reduced Q-learning rule attempts to operate in an epoch-based manner, 
computing $\widetilde{\mathcal{T}} (\overline{Q})$ once every epoch (so as not to increase the overall sampling burden) and leveraging it to help reduce variability. 
\begin{figure}[H]
\begin{center}
	\includegraphics[width=0.7\textwidth]{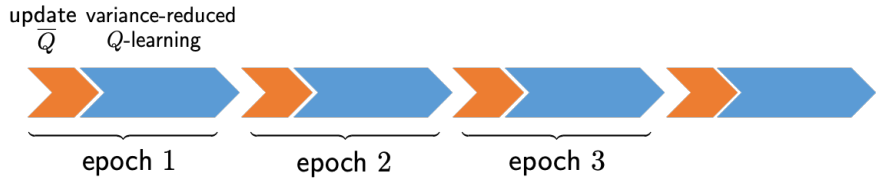}
\end{center}
\caption{A pictorial illustration of variance-reduced Q-learning.}
\end{figure}

For convenience of presentation, we introduce the following notation
\begin{align} 
\label{eq:asyn_vrq}
Q = \vrq (\, \overline{Q}, N, \tepoch \,)
\end{align}
 to represent the above-mentioned update rule, 
which starts with a reference point $\overline{Q}$ and operates upon a total number of $N+\tepoch$ consecutive sample transitions. The first $N$ samples are employed to 
construct $\widetilde{\mathcal{T}}(\overline{Q})$ via \eqref{eq:surrogate_TQ}, with the remaining samples employed in $\tepoch$ iterative updates \eqref{eqn:vr-q-learning}; see Algorithm~\ref{alg:async-vr-q-block}.  
To achieve the desired acceleration, the proxy $\overline{Q}$ needs to be periodically updated so as to better approximate the truth $Q^{\star}$ and hence 
 reduce the bias. 
It is thus natural to run the algorithm in a multi-epoch manner. Specifically, we divide the samples into contiguous subsets called epochs, each containing $\tepoch$ iterations and using $N+\tepoch$ samples. We then proceed as follows
\begin{align}\label{eqn:vr-q-learning-epoch}
	Q_{m}^{\mathsf{epoch}}  = \vrq(\, Q_{m-1}^{\mathsf{epoch}}, N , \tepoch \,), \quad m= 1, \ldots, M,
\end{align}
where $M$ is the total number of epochs, and $Q_m^{\mathsf{epoch}}$ denotes the output of the $m$-th epoch. 
%
The whole procedure is summarized in Algorithm~\ref{alg:async-vr-q}. Clearly, the total number of samples used in this algorithm is given by $M(N+\tepoch)$. We remark that the idea of performing variance reduction in RL is certainly not new, and has been explored in a number of recent works \citep{du2017stochastic, wainwright2019variance,khamaru2020temporal,sidford2018near,sidford2018variance,xu2020reanalysis}.


%
%
%
 

\subsection{Theoretical guarantees for variance-reduced Q-learning}

This subsection develops a non-asymptotic sample complexity bound for asynchronous variance-reduced Q-learning on a single trajectory. Before presenting our theoretical guarantees, there are several algorithmic parameters that we shall specify; for given target levels $(\varepsilon, \delta)$, choose
	\begin{subequations}\label{eqn:alg-parameters}
	\begin{align}
		\eta_t \equiv \eta & =
		\frac{c_{0}}{\log\big(\frac{|\mathcal{S}||\mathcal{A}|\tepoch}{\delta}\big)}\min\left\{ \frac{(1-\gamma)^{2} }{\gamma^2},~\frac{1}{\tmix}\right\} ,  \label{eq:learning-rate-VRQ}\\
		N & \geq  
		\frac{c_1}{\mumin} 
		\Big(\frac{1}{(1-\gamma)^{3} \min\{1, \varepsilon^2\}} + \tmix  \Big)
		\log\Big(\frac{|\mathcal{S}||\mathcal{A}|\tepoch}{\delta}\Big),
		\\
		\tepoch & \geq \frac{c_{2}}{\mumin}
		\Big(\frac{1}{(1-\gamma)^{3}}+\frac{\tmix}{1-\gamma}\Big) 
		\log \Big(\frac{1}{(1-\gamma)^2\varepsilon} \Big)
		\log\Big( \frac{|\mathcal{S}||\mathcal{A}|\tepoch}{\delta} \Big) , 
	\end{align}
	\end{subequations}
where $c_0>0$ is some sufficiently small constant,  $c_1,c_2>0$ are some sufficiently large constants, and we recall the definitions of $\mumin$ and $\tmix$ in \eqref{defn:mu-min} and \eqref{defn:mixing-time-MC}, respectively. Note that the learning rate \eqref{eq:learning-rate-VRQ}  chosen here could be larger than the choice \eqref{eq:learning-rate-asynQ} for the classical form by a factor of $O\big(\frac{1}{(1-\gamma)^2}\big)$ (which happens if $\tmix$ is not too large), allowing the algorithm to progress more aggressively.

\begin{theorem}[Asynchronous variance-reduced Q-learning]
	\label{thm:main-asyn-VR-Q-learning}

	Let ${Q}_{M}^{\mathsf{epoch}}$ be the output of Algorithm~\ref{alg:async-vr-q} with parameters chosen according to \eqref{eqn:alg-parameters}. 
	There exists some constant $c_3>0$ such that for any $0<\delta<1$ and $0<\varepsilon \leq \frac{1}{1-\gamma}$,   one has
	\begin{align*}
		\forall (s,a)\in \cS\times \cA: \qquad |{Q}_{M}^{\mathsf{epoch}} (s,a) - Q^\star(s,a)| \leq \varepsilon
	\end{align*}
	with probability at least $1-\delta$, provided that the total number of epochs exceeds 
	%
	\begin{align}
	\label{eqn:numbersteps-vr}
		M &\geq  c_3 \log \frac{1}{\varepsilon (1-\gamma)^2}  . 
	\end{align}
	%
	%
\end{theorem}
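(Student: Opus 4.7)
The plan is to reduce the theorem to a per-epoch geometric contraction and then iterate. I would show that on a high-probability event, each epoch satisfies
\begin{align*}
\big\|Q_m^{\mathsf{epoch}} - Q^\star\big\|_\infty \;\le\; \tfrac{1}{2}\big\|Q_{m-1}^{\mathsf{epoch}} - Q^\star\big\|_\infty \;+\; \tfrac{\varepsilon}{4},
\qquad m = 1, \ldots, M.
\end{align*}
Unrolling from $Q_0^{\mathsf{epoch}} = \mathbf{0}$ and using $\|Q_0^{\mathsf{epoch}}-Q^\star\|_\infty \le 1/(1-\gamma)$ gives $\|Q_M^{\mathsf{epoch}}-Q^\star\|_\infty \le 2^{-M}/(1-\gamma) + \varepsilon/2$, which falls below $\varepsilon$ once $M$ satisfies \eqref{eqn:numbersteps-vr}; the extra $\log\frac{1}{1-\gamma}$ factor there absorbs the logarithmic slack incurred by a union bound over the $M$ epochs, the concentration events for $\widetilde{\mathcal{T}}(\overline{Q})$, and the visit-count events within each epoch.

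For a single epoch with reference point $\overline{Q}$, it is useful to introduce the deterministic shifted operator $\mathcal{F}(Q) := \mathcal{T}(Q) - \mathcal{T}(\overline{Q}) + \widetilde{\mathcal{T}}(\overline{Q})$, a $\gamma$-contraction in $\ell_\infty$ whose unique fixed point $Q^\diamond$ obeys $\|Q^\diamond-Q^\star\|_\infty \le \tfrac{1}{1-\gamma}\|\widetilde{\mathcal{T}}(\overline{Q})-\mathcal{T}(\overline{Q})\|_\infty$. I would then split the per-epoch analysis into (i) bias control of $\|Q^\diamond - Q^\star\|_\infty$ and (ii) inner-loop convergence of $\|Q_{\tepoch}-Q^\diamond\|_\infty$. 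For (i), since $N \gtrsim \tmix/\mumin$, a mixing-time argument ensures that every $(s,a)$ is visited at least $\Omega(N\mumin)$ times during the sampling block; a Bernstein-type inequality for uniformly ergodic chains then yields, modulo logarithmic factors,
\begin{align*}
\big\|\widetilde{\mathcal{T}}(\overline{Q})-\mathcal{T}(\overline{Q})\big\|_\infty \;\lesssim\; \gamma\sqrt{\frac{\sigma(\overline{Q})^2}{N\mumin}} \;+\; \frac{\|\overline{Q}\|_\infty\,\tmix}{N\mumin},
\end{align*}
where $\sigma(\overline{Q})^2$ denotes the pointwise variance of $\max_{a'}\overline{Q}(\cdot,a')$ under $P(\cdot\mid s,a)$. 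Splitting $\sigma(\overline{Q})^2 \lesssim \sigma(V^\star)^2 + \|\overline{Q}-Q^\star\|_\infty^2$ and invoking a Bellman/total-variance bound on $\sigma(V^\star)^2$ is the key source of the improved $(1-\gamma)^{-3}$ scaling in $N$, producing $\|Q^\diamond-Q^\star\|_\infty \le \tfrac{1}{8}\|\overline{Q}-Q^\star\|_\infty + \tfrac{\varepsilon}{8}$. For (ii), the $\tepoch$ inner updates constitute a perturbed Q-learning recursion on Markovian samples whose target is $Q^\diamond$; the analysis parallels that of Theorem~\ref{thm:main-asyn-Q-learning}, with the crucial difference that the martingale noise in the variance-reduced increment has variance proportional to $\|Q_{t-1}-\overline{Q}\|_\infty^2$, which shrinks as the iterates approach $Q^\star$ rather than remaining of order $1/(1-\gamma)^2$. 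This in turn permits the enlarged stepsize $\eta \asymp (1-\gamma)^2$ prescribed in \eqref{eq:learning-rate-VRQ}, and after $\tepoch$ iterations yields $\|Q_{\tepoch}-Q^\diamond\|_\infty \le \tfrac{3}{8}\|\overline{Q}-Q^\star\|_\infty + \tfrac{\varepsilon}{8}$. The triangle inequality then delivers the advertised per-epoch contraction.

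The principal obstacle will be step (ii): executing a variance-reduced convergence analysis on a \emph{single} Markovian trajectory while propagating the sharpened variance bound through the recursion. Three ingredients must be orchestrated simultaneously: the $\tmix$-block decoupling already developed for Theorem~\ref{thm:main-asyn-Q-learning} (to convert Markovian samples into near-independent blocks), an inductive a priori bound $\|Q_t-Q^\star\|_\infty = O(1/(1-\gamma))$ held uniformly across the epoch so that the recursive variance estimate remains valid, and a careful tracking of how the stepsize ceiling $\eta \lesssim 1/\tmix$ interacts with the desired geometric decay within only $\tepoch$ iterations. Once these are in hand, the final union bound over epochs, concentration events, and visit-count events closes the proof.
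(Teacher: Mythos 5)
Your proposal follows essentially the same architecture as the paper's proof: an epoch-wise contraction obtained by introducing the fixed point of the recentered Bellman operator (your $Q^\diamond$ is exactly the paper's auxiliary vector $\Qhat$ solving $\Qhat = \bm r + \gamma \bm P(\Vhat - \Vbar) + \gamma \Ptil \Vbar$), splitting each epoch into a bias term $\|Q^\diamond - Q^\star\|_\infty$ controlled by concentration of $\widetilde{\mathcal{T}}(\overline{Q})$ and an inner-loop term $\|Q_{\tepoch}-Q^\diamond\|_\infty$ controlled by recycling the recursion of Theorem~\ref{thm:main-asyn-Q-learning}, and then unrolling over $M$ epochs. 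The one genuine organizational difference is that the paper runs a two-phase argument --- a cruder Phase~1 contraction $\|Q_{\tepoch}-\Qstar\|_\infty \le \frac12\max\{1/\sqrt{1-\gamma},\,\|\Qbar-\Qstar\|_\infty\}$ that only drives the error down to the floor $1/\sqrt{1-\gamma}$, followed by the $\Qhat$-based Phase~2 once $\|\Qbar-\Qstar\|_\infty \le 1/\sqrt{1-\gamma}$ --- whereas you propose a single uniform per-epoch inequality $\frac12\|Q_{m-1}^{\mathsf{epoch}}-Q^\star\|_\infty + \varepsilon/4$ valid from the first epoch, achieved by keeping the $\|\overline{Q}-Q^\star\|_\infty$-dependent part of the variance explicit (absorbed into the $\frac18\|\overline{Q}-Q^\star\|_\infty$ term) rather than bounding it crudely by $1/\sqrt{1-\gamma}$. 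That is a legitimate and arguably cleaner bookkeeping; it buys a slightly simpler epoch count and avoids the case split, at the cost of having to carry the error-dependent variance term through the fixed-point perturbation analysis in full generality.

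One step deserves a sharper statement than you give it. The displayed inequality $\|Q^\diamond - Q^\star\|_\infty \le \frac{1}{1-\gamma}\|\widetilde{\mathcal{T}}(\overline{Q})-\mathcal{T}(\overline{Q})\|_\infty$, used literally with a uniform Bernstein bound $\|\widetilde{\mathcal{T}}(\overline{Q})-\mathcal{T}(\overline{Q})\|_\infty \lesssim \gamma\sqrt{\sup_{s,a}\Var_{\bm{P}_{s,a}}[\bm{V}^\star]\,/\,(N\mumin)} \lesssim \frac{\gamma}{1-\gamma}\sqrt{1/(N\mumin)}$, yields only $N \gtrsim \frac{1}{\mumin(1-\gamma)^4\varepsilon^2}$ --- a full factor of $\frac{1}{1-\gamma}$ worse than the theorem. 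To realize the $(1-\gamma)^{-3}$ scaling you must \emph{not} take the supremum before inverting the Bellman operator: the perturbation has to be propagated entrywise through $(\bm I - \gamma \bm{P}_{\pi})^{-1}$ and combined with the total-variance identity $\|(\bm I-\gamma\bm{P}_{\pi})^{-1}\sqrt{\Var_{\bm P}(\bm{V}^\star)}\|_\infty \lesssim (1-\gamma)^{-3/2}$. You gesture at this with the phrase ``Bellman/total-variance bound,'' and this is precisely the content of the argument the paper imports from \citet[Lemma 4]{wainwright2019variance} (adapted to Markovian sampling via the visit-count concentration of Lemma~\ref{lemma:Bernstein-state-occupancy}), but as written your two inequalities are in tension and the crude one must be discarded at this point. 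With that replacement, and with the independence-across-visits observation \eqref{eq:independence-Markov-chain} justifying the Bernstein/Hoeffding bounds on a single trajectory, your outline closes.
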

\noindent The proof of this result is postponed to Section~\ref{Sec:Asynchronous-VR}.

	In view of Theorem~\ref{thm:main-asyn-VR-Q-learning}, the $\ell_{\infty}$-based sample complexity for variance-reduced Q-learning to yield $\varepsilon$ accuracy --- which is characterized by $M(N+\tepoch)$ --- can be as low as 
\begin{align}
	\label{eq:sample-size-VRQ}
	 \widetilde{O}  \Big(\frac{1}{\mumin(1-\gamma)^{3}\min\{1,\varepsilon^{2}\}} +\frac{\tmix}{\mumin(1-\gamma)}\Big). 
\end{align}
	Except for the second term that depends on the mixing time, the first term matches the result of \cite{wainwright2019variance} derived for the synchronous settings with independent samples.  In the range $\varepsilon \in (0, \min\{1,\frac{1}{(1-\gamma)\sqrt{\tmix}}\}]$, the sample complexity reduce to $\widetilde{O}\big(\frac{1}{\mumin(1-\gamma)^{3}\varepsilon^{2}}\big)$; the scaling $\frac{1}{(1-\gamma)^3}$  matches the minimax lower bound derived in \cite{azar2013minimax} for the synchronous setting.

Once again, we can immediately deduce guarantees for asynchronous variance-reduced TD learning by reducing the action space to a singleton set (akin to Section~\ref{sec:TD-connection}), which extends the analysis \cite{khamaru2020temporal} to Markovian noise. 
In addition, similar to Section~\ref{section:adaptive-learning-rates}, we can also employ adaptive learning rates in variance-reduced Q-learning --- which do not require prior knowledge of  $\tmix$ and $\mumin$ --- without compromising the sample complexity. 
For the sake of brevity, we omit these extensions in the current paper. 

\begin{algorithm}[t]
\DontPrintSemicolon
  \textbf{input parameters:} number of epochs $M$, epoch length $\tepoch$, recentering length $N$, learning rate $\eta$. \\
  \textbf{initialization:} set $Q_{0}^{\mathsf{epoch}}\leftarrow 0$. \\

    \For{$\mathrm{each~epoch~}m=1, \cdots, M$ }
    { 
	     \tcc{Call Algorithm~\ref{alg:async-vr-q-block}.}
	     $Q_{m}^{\mathsf{epoch}}$ = \vrq(\,$Q_{m-1}^{\mathsf{epoch}}, N, \tepoch$) . \\
    }

    \caption{Asynchronous variance-reduced Q-learning}
 \label{alg:async-vr-q}
\end{algorithm}

\begin{algorithm}[t]
\DontPrintSemicolon

     	     Draw $N$ new consecutive samples from the sample trajectory; compute $\widetilde{\mathcal{T}}(\overline{Q})$ according to \eqref{eq:surrogate_TQ}.   \\
	     Set $s_0 \leftarrow$  current state, and $Q_0 \leftarrow \overline{Q}$. \\
	     \For{$t=1,2,\cdots,\tepoch$}
	     {
		     Draw action $a_{t-1} \sim \pi_{\mathsf{b}}(s_{t-1})$, observe reward $r(s_{t-1},a_{t-1})$, and draw next state $s_{t}\sim P(\cdot\,|\,s_{t-1},a_{t-1})$. \\
	     	Update $Q_t$ according to \eqref{eqn:vr-q-learning}.
	     }

    \textbf{return:} ${Q} \leftarrow Q_{\tepoch}$. 
	\caption{ $\mathsf{function}$ $Q = \vrq(\,\overline{Q}, N, \tepoch)$}
 \label{alg:async-vr-q-block}
\end{algorithm}

\section{Related works} 
\label{sec:related-work}

In this section, we review several recent lines of works and compare our results with them.

%
%

\paragraph{The Q-learning algorithm and its variants.} The Q-learning algorithm, originally proposed in \cite{watkins1989learning}, has been analyzed in the asymptotic regime  by \cite{tsitsiklis1994asynchronous,szepesvari1998asymptotic,jaakkola1994convergence,borkar2000ode} since more than two decades ago.  Additionally,  
finite-time performance of Q-learning and its variants have been analyzed by \cite{even2003learning,kearns1999finite,beck2012error,chen2020finite,wainwright2019stochastic,qu2020finite,li2021tightening,xiong2020finitetime,li2021q} in the tabular setting, by \cite{chen2019performance,xu2019finite,yang2019theoretical,bhandari2018finite,du2019provably,du2020agnostic,cai2019neural,yang2019sample,weng2020momentum,weng2020provably} in the context of function approximations, and by \cite{shah2018q} with nonparametric regression. In addition, \cite{sidford2018near,wainwright2019variance,strehl2006pac,ghavamzadeh2011speedy,azar2011reinforcement,devraj2020q} studied modified Q-learning algorithms that might potentially improve sample complexities and accelerate convergence. Another line of work studied Q-learning with sophisticated exploration strategies such as UCB exploration (e.g.~\cite{jin2018q,Wang2020Qlearning,bai2019provably}), which is beyond the scope of the current work.

\paragraph{Finite-sample $\ell_{\infty}$ guarantees for Q-learning.}
We now expand on non-asymptotic $\ell_{\infty}$ guarantees available in prior literature, which are the most relevant to the current work. 
An interesting aspect that we shall highlight is the importance of learning rates. For instance, when a linear learning rate (i.e.~$\eta_t=1/t$) is adopted, the sample complexity results derived in past works \citep{even2003learning,szepesvari1998asymptotic} exhibit an exponential blow-up in $\frac{1}{1-\gamma}$, which is clearly undesirable.   In the synchronous setting, 
 \cite{even2003learning,beck2012error,wainwright2019stochastic,chen2020finite} studied the finite-sample complexity of Q-learning under various learning rate rules; the best sample complexity known to date is $\widetilde{O}\big(\frac{|\cS||\cA|}{(1-\gamma)^5\varepsilon^2}\big)$, achieved via either a rescaled linear learning rate \citep{wainwright2019stochastic,chen2020finite} or a constant learning rate \citep{chen2020finite}. When it comes to asynchronous Q-learning (in its classical form),
our work provides the first analysis that achieves linear scaling with $1/\mumin$ or  $\tcovertime$; see Table~\ref{table:comparison_asyncQ} for detailed comparisons. 
%
Going beyond classical Q-learning, the speedy Q-learning algorithm, which adds a momentum term in the update by using previous Q-function estimates, provably achieves a sample complexity of $\widetilde{O}\big(\frac{\tcovertime}{(1-\gamma)^4\varepsilon^2}\big)$  \citep{azar2011reinforcement} in the asynchronous setting, whose update rule takes twice the storage of classical Q-learning.
However, the proof idea adopted in the speedy Q-learning paper relies heavily on the specific update rules of speedy Q-learning, which cannot be readily used here to help improve the sample complexity of asynchronous Q-learning in terms of its dependency on $\frac{1}{1-\gamma}$. 
 In comparison, our analysis of the variance-reduced Q-learning algorithm achieves a sample complexity of $\widetilde{O}\big(\frac{1}{\mumin(1-\gamma)^3\varepsilon^2} + \frac{\tmix}{\mumin (1-\gamma)}\big)$ when $\varepsilon<1$.

%
%

\paragraph{Finite-sample guarantees for model-free algorithms.} Convergence properties of several model-free RL algorithms have been studied recently in the presence of Markovian data, including but not limited to TD learning and its variants \citep{bhandari2018finite,xu2019two,srikant2019finite,gupta2019finite,doan2019finite,kaledin2020finite,dalal2018finite_two,dalal2018finite,lee2019target,lin2020distributed,mou2020linear,xu2020reanalysis}, Q-learning \citep{chen2019performance,xu2019finite}, and SARSA \citep{zou2019finite}. However, these recent papers typically focused on the (weighted) $\ell_2$ error rather than the $\ell_\infty$ risk, where the latter is often more relevant in the context of RL. 
In addition, \cite{khamaru2020temporal} investigated the $\ell_\infty$ bounds of (variance-reduced) TD learning, although they did not account for Markovian noise.

 \paragraph{Finite-sample guarantees for model-based algorithms.} 
Another contrasting approach for learning the optimal Q-function is the class of model-based algorithms, which has been shown to enjoy minimax-optimal sample complexity in the synchronous setting. More precisely, it is known that by planning over an empirical MDP constructed from $\widetilde{O}\big(\frac{|\cS||\cA|}{(1-\gamma)^3\varepsilon^2}\big)$ samples, we are guaranteed to find not only an $\varepsilon$-optimal Q-function but also an $\varepsilon$-optimal policy \citep{azar2013minimax,agarwal2019optimality,li2020breaking}. It is worth emphasizing that the minimax optimality of model-based approach has been shown to  hold for the entire $\varepsilon$-range; in comparison, the sample optimality of the model-free approach has only been shown for a smaller range of accuracy level $\varepsilon$ in the synchronous setting.  We also remark that existing sample complexity analysis for model-based approaches might be generalizable to Markovian data.

\section{Analysis of asynchronous Q-learning}
\label{Sec:Analysis-asynchronous}


This section is devoted to establishing Theorem~\ref{thm:main-asyn-Q-learning}. 
Before proceeding, we find it convenient to introduce some matrix
notation. Let $\bm{\Lambda}_{t}\in\mathbb{R}^{|\mathcal{S}||\mathcal{A}|\times|\mathcal{S}||\mathcal{A}|}$
be a diagonal matrix obeying
\begin{equation}
\bm{\Lambda}_{t}\big((s,a),(s,a)\big):=
\begin{cases}
\eta,\quad & \text{if }(s,a)=(s_{t-1},a_{t-1}),\\
0, & \text{otherwise},
\end{cases}
\label{eq:defn-Lambda-t}
\end{equation}
where $\eta>0$ is the learning rate. In addition, we use the vector
$\bm{Q}_{t}\in\mathbb{R}^{|\mathcal{S}||\mathcal{A}|}$ (resp.~$\bm{V}_{t}\in\mathbb{R}^{|\mathcal{S}|}$)
to represent our estimate $Q_{t}$ (resp.~$V_{t}$) in the
$t$-th iteration, so that the $(s,a)$-th (resp.~$s$th) entry of $\bm{Q}_{t}$ (resp.~$\bm{V}_t$) is given by $Q_t(s,a)$ (resp.~$V_t(s)$). 
Similarly,  let the vectors $\bm{Q}^{\star}\in\mathbb{R}^{|\mathcal{S}||\mathcal{A}|}$
and $\bm{V}^{\star}\in\mathbb{R}^{|\mathcal{S}|}$ represent the optimal Q-function
$Q^{\star}$ and the optimal value function $V^{\star}$, respectively. 
We also let the vector $\bm{r}\in\mathbb{R}^{|\mathcal{S}||\mathcal{A}|}$
stand for the reward function $r$, so that the $(s,a)$-th entry of $\bm{r}$ is given by $r(s,a)$. In addition, we define the matrix
$\bm{P}_{t}\in\{0,1\}^{|\mathcal{S}||\mathcal{A}|\times|\mathcal{S}|}$ such that
\begin{equation}
\bm{P}_{t}\big((s,a),s'\big):=\begin{cases}
1, & \text{if }(s,a,s')=(s_{t-1},a_{t-1},s_{t}),\\
0, & \text{otherwise}.
\end{cases}\label{eq:defn-Pt}
\end{equation}
Clearly, this set of notation allows us to express the Q-learning
update rule \eqref{eqn:q-learning} in the following matrix form
\begin{equation}
\bm{Q}_{t}=\big(\bm{I}-\bm{\Lambda}_{t}\big)\bm{Q}_{t-1}+\bm{\Lambda}_{t}\big(\bm{r}+\gamma\bm{P}_{t}\bm{V}_{t-1}\big).\label{eq:Q-learning-matrix-notation}
\end{equation}

\subsection{Error decay in the presence of constant learning rates}

The main step of the analysis is to establish the following result concerning the dynamics of asynchronous Q-learning. In order to state it formally, we find it convenient to introduce several auxiliary quantities
\begin{subequations}
\begin{align}
	\tcover &:=   \frac{443\tmix}{\mumin}\log\Big(\frac{4|\mathcal{S}||\mathcal{A}|T}{\delta}\Big), \label{defn:tepoch} \\
	\tth &:= \max\Bigg\{ \frac{2 \log\frac{1}{(1-\gamma)^2\varepsilon}}{ \eta \mumin },\: \tcover \Bigg\} , \label{eqn:tth} \\
		\muepo &\defn \frac{1}{2}\mumin\tcover, \label{eqn:muepo} \\  
	\rho &\defn (1-\gamma)\big(1-(1-\eta)^{\muepo} \big). \label{eqn:rho}  
\end{align}
\end{subequations}
%
With these quantities in mind, we have the following result. 
\begin{theorem}
\label{thm:intermediate}
Consider the asynchronous Q-learning algorithm~in Algorithm~\ref{alg:async-q} with $\eta_t\equiv \eta$. 
For any $\delta \in (0,1)$ and any $\varepsilon \in (0, \frac{1}{1-\gamma}]$,  there exists a universal constant $c > 0$ such that with probability at least $1 - 6 \delta$, 
the following relation holds uniformly for all $t \leq T$ (defined in \eqref{eqn:numbersteps})
\begin{align}
\label{eqn:beethoven}
	\| \bm{Q}_t - \bm{Q}^{\star} \|_{\infty} \leq (1- \rho )^{k} \frac{\| \bm{Q}_0 - \bm{Q}^{\star} \|_{\infty}}{1-\gamma} + \frac{c \gamma}{1-\gamma} 
	 \|\bm{V}^{\star}\|_{\infty} \sqrt{\eta\log\Big(\frac{|\mathcal{S}||\mathcal{A}|T}{\delta}\Big)} 
	 +\varepsilon, 
\end{align}
provided that $0<\eta\log\big(\frac{|\mathcal{S}||\mathcal{A}|T}{\delta}\big)<1$. 
Here, we define $k \defn \max \big\{0, ~\big\lfloor \frac{t - \tth}{\tcover} \big\rfloor \big\}$.
\end{theorem}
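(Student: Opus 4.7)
The plan is to analyze the error recursion satisfied by $\bm{\Delta}_t := \bm{Q}_t - \Qstar$, decompose it into an initial-error contribution, a propagation term, and a martingale noise term, then control each via mixing-time concentration and a Freedman-type inequality. Starting from the matrix form \eqref{eq:Q-learning-matrix-notation} together with the Bellman identity $\Qstar = \bm{r} + \gamma \bm{P} \Vstar$, where $\bm{P}$ is the population transition matrix, one obtains
\begin{equation*}
\bm{\Delta}_t = \big(\bm{I} - \bm{\Lambda}_t\big) \bm{\Delta}_{t-1} + \gamma \bm{\Lambda}_t \bm{P}_t \big(\bm{V}_{t-1} - \Vstar\big) + \gamma \bm{\Lambda}_t \big(\bm{P}_t - \bm{P}\big) \Vstar.
\end{equation*}
Unrolling this recursion expresses $\bm{\Delta}_t$ as a product-of-$(\bm I - \bm\Lambda_j)$ operator applied to $\bm{\Delta}_0$, plus a ``propagation'' term weighted by $\gamma \bm{P}_j$ through past errors, plus a pure noise term driven by $\bm{P}_j - \bm{P}$.

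The contraction factor $(1-\rho)^k$ will come from an epoch-wise analysis. First I would establish a visitation lemma: by uniform ergodicity of the behavior chain and the choice of $\tcover$ in \eqref{defn:tepoch}, with probability at least $1-\delta$, every state-action pair is visited at least $\muepo = \tfrac{1}{2}\mumin\tcover$ times inside every contiguous window of length $\tcover$ in $[0,T]$. This is a standard Markovian Chernoff argument after partitioning into nearly-independent sub-blocks of length $\Theta(\tmix)$. On this event, each visit to $(s,a)$ multiplies the $(s,a)$-entry of the initial-error contribution by $(1-\eta)$, so after one epoch it is damped by at least $(1-\eta)^{\muepo}$. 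Combined with the extra $\gamma$ factor picked up each time the recursion is pushed through $\bm{P}_t$, the per-epoch $\ell_\infty$ multiplier becomes $1-\rho = 1-(1-\gamma)(1-(1-\eta)^{\muepo})$. Iterating over $k = \lfloor (t-\tth)/\tcover\rfloor$ epochs, with the burn-in $\tth$ absorbing both the initial mixing and the $\tfrac{2\log(1/((1-\gamma)^2\varepsilon))}{\eta\mumin}$ iterations needed to push slow-mixing entries down to order $\varepsilon$, yields the $(1-\rho)^k\|\bm{\Delta}_0\|_\infty/(1-\gamma)$ prefactor (with the $1/(1-\gamma)$ arising from summing the geometric propagation series).

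For the noise term $\gamma \sum_j w_j \bm{\Lambda}_j(\bm{P}_j - \bm{P}) \Vstar$ with $w_j = \prod_{i>j}(\bm I - \bm\Lambda_i)$, I would fix a coordinate $(s,a)$ and treat the nonzero contributions as a sum of bounded martingale differences: each increment has magnitude at most $\eta\|\Vstar\|_\infty$ and conditional variance $O(\eta^2\|\Vstar\|_\infty^2)$. A Freedman/Azuma inequality then yields a tail of order $\|\Vstar\|_\infty\sqrt{\eta\log(|\cS||\cA|T/\delta)}$ after a union bound over coordinates and over $t\leq T$, while the $\gamma/(1-\gamma)$ factor emerges from summing the geometrically-weighted contributions across past epochs. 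The constraint $\eta\log(|\cS||\cA|T/\delta) < 1$ keeps us in the sub-Gaussian regime where this bound is tight.

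The main obstacle is the self-referential nature of the propagation term $\gamma \bm{\Lambda}_t \bm{P}_t (\bm{V}_{t-1}-\Vstar)$: because $\|\bm{V}_{t-1}-\Vstar\|_\infty \leq \|\bm{\Delta}_{t-1}\|_\infty$, a naive triangle inequality loses the contraction, and one must instead run an induction on the epoch index $k$, hypothesizing $\|\bm{\Delta}_t\|_\infty \leq R_k$ for a sequence $R_k$ that shrinks geometrically to a noise floor of the claimed form. A secondary technical point is making the visitation event and the martingale tail event hold uniformly for all $t\leq T$ and all $(s,a)$; this forces a union bound over $O(T|\cS||\cA|)$ outcomes and accounts for the logarithmic factors in \eqref{eqn:beethoven}. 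Handling Markovian (rather than i.i.d.) samples in both of these concentration steps is the principal departure from the synchronous analysis.
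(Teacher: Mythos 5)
Your overall architecture is the same as the paper's: the identical one-step error decomposition, the same unrolled three-term split (initial error, self-referential propagation term, transition noise), the same visitation lemma giving at least $\muepo=\tfrac12\mumin\tcover$ visits to each $(s,a)$ per frame of length $\tcover$, and the same epoch-indexed induction (after a crude stability bound) to break the circularity caused by $\|\bm{V}_{t-1}-\Vstar\|_\infty\le\|\bm{\Delta}_{t-1}\|_\infty$. The one place you diverge --- and where your argument as written would fail --- is the concentration step for the noise term. You propose to treat $\gamma\sum_{i\le t}\prod_{j>i}(\bm I-\bm\Lambda_j)\bm\Lambda_i(\bm P_i-\bm P)\Vstar$ coordinatewise as a sum of bounded martingale differences and invoke Freedman/Azuma. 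But the weight multiplying the $i$-th increment at coordinate $(s,a)$ equals $\eta(1-\eta)^{N_{i+1}^{t}(s,a)}\ind\{(s_{i-1},a_{i-1})=(s,a)\}$, which depends on the \emph{future} visit count $N_{i+1}^{t}(s,a)$; the summands are therefore not adapted to the forward filtration and the sum is not a martingale, so Azuma/Freedman does not apply directly.

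The paper circumvents this by reindexing the sum over the successive visit times $t_1<t_2<\cdots$ of $(s,a)$, proving via the strong Markov property that the transitions observed at those times are i.i.d.\ draws from $P(\cdot\mid s,a)$, applying the scalar Bernstein inequality to the resulting deterministically weighted i.i.d.\ sum for each \emph{fixed} total visit count $K$, and then union bounding over all $K\le T$ and all $(s,a)$ (which is where the $\log(|\mathcal{S}||\mathcal{A}|T/\delta)$ factor comes from). Your bound can be salvaged the same way, or by a reverse-time martingale construction, but the step needs this extra idea. Everything else in your outline matches the paper's proof, including the identification of $1-\rho=\gamma+(1-\gamma)(1-\eta)^{\muepo}$ as the per-frame contraction factor and the role of $\tth$ as the burn-in that pushes the $(1-\eta)^{\frac{1}{2}t\mumin}\|\bm{\Delta}_0\|_\infty$ term below $(1-\gamma)\varepsilon$.
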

In words, Theorem~\ref{thm:intermediate} asserts that the $\ell_{\infty}$ estimation error decays linearly --- in a blockwise manner --- to some error floor that scales with $\sqrt{\eta}$. This result suggests how to set the learning rate based on the target accuracy level, which in turn allows us to pin down the sample complexity under consideration. In what follows, we shall first establish Theorem~\ref{thm:intermediate}, and then return to prove Theorem~\ref{thm:main-asyn-Q-learning} using this result.

Before embarking on the proof of Theorem~\ref{thm:intermediate}, we would like to point out a few key technical ingredients: 
(i) an epoch-based analysis that focuses on macroscopic dynamics as opposed to per-iteration dynamics, 
(ii) measure concentration of Markov chains (see Section~\ref{sec:concentration-MC}) that helps reveal the similarity between epoch-based dynamics and the synchronous counterpart, and (iii) careful analysis of recursive relations. 
These key ingredients taken collectively lead to a sample complexity bound  that improves upon prior analysis in \citet{qu2020finite}.

\subsection{Proof of Theorem~\ref{thm:intermediate}}

We are now positioned to outline the proof of Theorem~\ref{thm:intermediate}. 
We remind the reader that for any two vectors $\bm{z}=[z_i]$ and $\bm{w}=[w_i]$, 
the notation $\bm{z}\leq\bm{w}$ (resp.~$\bm{z}\geq\bm{w}$) denotes entrywise comparison (cf.~Section~\ref{sec:intro}), meaning that $z_i\leq w_i$ (resp.~$z_i\geq w_i$) holds for all $i$. As a result, for any non-negative matrix $\bm{A}$, one has $\bm{A}\bm{z}\leq \bm{A}\bm{w}$ as long as $\bm{z}\leq \bm{w}$.  

\subsubsection{Key decomposition and a recursive formula}
The starting point of our proof is the following elementary decomposition
\begin{align}
\bm{\Delta}_{t} := \bm{Q}_{t}-\bm{Q}^{\star} & =\big(\bm{I}-\bm{\Lambda}_{t}\big)\bm{Q}_{t-1}+\bm{\Lambda}_{t}\big(\bm{r}+\gamma\bm{P}_{t}\bm{V}_{t-1}\big)-\bm{Q}^{\star}\nonumber \\
 & =\big(\bm{I}-\bm{\Lambda}_{t}\big)\big(\bm{Q}_{t-1}-\bm{Q}^{\star}\big)+\bm{\Lambda}_{t}\big(\bm{r}+\gamma\bm{P}_{t}\bm{V}_{t-1}-\bm{Q}^{\star}\big)\nonumber \\
 & =\big(\bm{I}-\bm{\Lambda}_{t}\big)\big(\bm{Q}_{t-1}-\bm{Q}^{\star}\big)+\gamma\bm{\Lambda}_{t}\big(\bm{P}_{t}\bm{V}_{t-1}-\bm{P}\bm{V}^{\star}\big)\nonumber \\
 & =\big(\bm{I}-\bm{\Lambda}_{t}\big)\bm{\Delta}_{t-1}+\gamma\bm{\Lambda}_{t}\big(\bm{P}_{t}-\bm{P}\big)\bm{V}^{\star}+\gamma\bm{\Lambda}_{t}\bm{P}_{t}\big(\bm{V}_{t-1}-\bm{V}^{\star}\big)\label{eq:Delta-t-identity2}
\end{align}
for any $t>0$, where the first line results from the update rule \eqref{eq:Q-learning-matrix-notation}, and the penultimate line follows from the Bellman equation $\bm{Q}^{\star}=\bm{r}+\gamma\bm{P}\bm{V}^{\star}$ (see~\cite{bertsekas2017dynamic}). 
%
%
%
%
Applying this relation recursively gives
\begin{equation}
	\bm{\Delta}_{t}=
	\underset{=:\bm{\beta}_{1,t}}{\underbrace{\gamma\sum_{i=1}^{t}\prod_{j=i+1}^{t}\big(\bm{I}-\bm{\Lambda}_{j}\big)\bm{\Lambda}_{i}\big(\bm{P}_{i}-\bm{P}\big)\bm{V}^{\star}}}
	+ \underset{=:\bm{\beta}_{2,t}}{\underbrace{\gamma\sum_{i=1}^{t}\prod_{j=i+1}^{t}\big(\bm{I}-\bm{\Lambda}_{j}\big)\bm{\Lambda}_{i}\bm{P}_{i}\big(\bm{V}_{i-1}-\bm{V}^{\star}\big)}}+
	\underset{=:\bm{\beta}_{3,t}}{\underbrace{\prod_{j=1}^{t}\big(\bm{I}-\bm{\Lambda}_{j}\big)\bm{\Delta}_{0}}}.
	\label{eq:defn-beta1-beta3}
\end{equation}
Applying the triangle inequality, we obtain
\begin{align}
|\bm{\Delta}_{t}|
	\leq |\bm{\beta}_{1,t} |+ | \bm{\beta}_{2,t} |+ |\bm{\beta}_{3,t}|,
	\label{eq:Delta-t-decomposition}
\end{align}
where we recall the notation $|\bm{z}|:=[|z_{i}|]_{1\leq i\leq n}$
for any vector $\bm{z}=[z_{i}]_{1\leq i\leq n}$. 
In what follows, we shall look at these terms separately. 
\begin{itemize}
\item
First of all, given that $\bm{I}-\bm{\Lambda}_{j}$ and $\bm{\Lambda}_{j}$ are
both non-negative diagonal matrices and that
\[
\big\|\bm{P}_{i}\big(\bm{V}_{i-1}-\bm{V}^{\star}\big)\big\|_{\infty}\leq\|\bm{P}_{i}\|_{1}\|\bm{V}_{i-1}-\bm{V}^{\star}\|_{\infty}=\|\bm{V}_{i-1}-\bm{V}^{\star}\|_{\infty}\leq\|\bm{Q}_{i-1}-\bm{Q}^{\star}\|_{\infty}=\|\bm{\Delta}_{i-1}\|_{\infty},
\]
we can easily see that
\begin{equation}
	\big|\bm{\beta}_{2,t}\big|\leq\gamma \sum_{i=1}^{t} \|\bm{\Delta}_{i-1}\|_{\infty} \prod_{j=i+1}^{t}\big(\bm{I}-\bm{\Lambda}_{j}\big)\bm{\Lambda}_{i}\bm{1} .
	\label{eq:defn-beta4}
\end{equation}

\item
Next, the term $\bm{\beta}_{1,t}$ can be controlled by exploiting some sort of statistical independence across different transitions and applying the Bernstein inequality. This is summarized in the following lemma, with the proof deferred to Section~\ref{sec:proof-lemma-control-beta1}.

\begin{lemma}
\label{lemma:control-beta1}
Consider any fixed vector $\bm{V}^{\star} \in \mathbb{R}^{|\cS|}$. There exists some universal
constant $c >0$ such that for any $0<\delta<1$, one has
\begin{align}
\label{eqn:beta1}
	\forall 1\leq t\leq T: \quad
	\Bigg| \gamma\sum_{i=1}^{t}\prod_{j=i+1}^{t}\big(\bm{I}-\bm{\Lambda}_{j}\big)\bm{\Lambda}_{i}\big(\bm{P}_{i}-\bm{P}\big)\bm{V}^{\star} \Bigg|
	\leq \tau_1 \|\bm{V}^{\star}\|_{\infty} \bm{1}
\end{align}
with probability at least $1-\delta$, provided that $0<\eta\log\big(\frac{|\mathcal{S}||\mathcal{A}|T}{\delta}\big)<1$. Here, we define
\begin{align}
\label{eqn:tau}
	\tau_1 := c \gamma   \sqrt{\eta\log\Big(\frac{|\mathcal{S}||\mathcal{A}|T}{\delta}\Big)} .
\end{align}

\end{lemma}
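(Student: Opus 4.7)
The plan is to bound each entry of $\bm{\beta}_{1,t}$ separately by exploiting a clean martingale structure on the subsequence of visits to $(s,a)$, then take a union bound.

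First, I would unpack the $(s,a)$-th entry of $\bm{\beta}_{1,t}$. Because $\bm{\Lambda}_j$ is diagonal with $[\bm{\Lambda}_j]_{(s,a),(s,a)} = \eta\ind\{(s_{j-1},a_{j-1})=(s,a)\}$, only those indices $i$ at which $(s,a)$ is visited contribute. Listing the visit times to $(s,a)$ in $[1,t]$ as $i_1 < i_2 < \cdots < i_{N_t(s,a)}$, the entry collapses to
\begin{equation*}
    [\bm{\beta}_{1,t}]_{s,a} \;=\; \gamma\eta\sum_{l=1}^{N_t(s,a)} (1-\eta)^{N_t(s,a)-l}\,Y_l^{(s,a)}, \qquad Y_l^{(s,a)} := V^\star(s_{i_l}) - \mathbb{E}_{s'\sim P(\cdot\mid s,a)}[V^\star(s')].
\end{equation*}
By the Markov property, $\{Y_l^{(s,a)}\}_l$ is a martingale difference sequence with respect to the natural filtration along the visit times; moreover $|Y_l^{(s,a)}| \le \|\bm{V}^\star\|_\infty$ and $\mathbb{E}[(Y_l^{(s,a)})^2 \mid \mathcal{F}_{l-1}] \le \|\bm{V}^\star\|_\infty^2$.

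Next, I would apply Freedman's inequality to the weighted martingale sum. Conditioned on any realized value $N_t(s,a)=n$, the predictable quadratic variation is bounded by
\begin{equation*}
    \eta^2 \|\bm{V}^\star\|_\infty^2 \sum_{l=1}^{n}(1-\eta)^{2(n-l)} \;\le\; \frac{\eta^2\|\bm{V}^\star\|_\infty^2}{1-(1-\eta)^2} \;\le\; \eta\|\bm{V}^\star\|_\infty^2,
\end{equation*}
using $1-(1-\eta)^2 = \eta(2-\eta) \ge \eta$, and each weight is trivially bounded by $\eta$. Freedman's inequality then yields, with probability at least $1-\delta/(T|\mathcal{S}||\mathcal{A}|)$,
\begin{equation*}
    \bigl|[\bm{\beta}_{1,t}]_{s,a}\bigr| \;\lesssim\; \gamma\|\bm{V}^\star\|_\infty\sqrt{\eta\log(|\mathcal{S}||\mathcal{A}|T/\delta)} \;+\; \gamma\eta\|\bm{V}^\star\|_\infty\log(|\mathcal{S}||\mathcal{A}|T/\delta).
\end{equation*}
Under the assumed bound $\eta\log(|\mathcal{S}||\mathcal{A}|T/\delta)<1$, the second (variance-proxy) term is dominated by the first, which is precisely $\tau_1\|\bm{V}^\star\|_\infty$ for a suitable universal constant $c$. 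A union bound over all $(s,a)\in \cS\times\cA$ and all $t\in[1,T]$ completes the proof.

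The main obstacle I anticipate is that the weights $(1-\eta)^{N_t(s,a)-l}$ are not predictable with respect to the natural filtration (they depend on the random future number of visits up to time $t$). The resolution is to re-index along the visits to $(s,a)$: once we view the sum as running over $l=1,\dots,N_t(s,a)$, the weight $(1-\eta)^{N_t(s,a)-l}$ is determined by the terminal index $N_t(s,a)$, so we can either (i) fix $t$ and apply Freedman's inequality to the terminal value of the martingale $M_n := \eta\sum_{l=1}^n (1-\eta)^{-l}Y_l^{(s,a)}$ (then rescale by $(1-\eta)^{N_t(s,a)}$), or (ii) take a net over possible values $N_t(s,a)\in\{0,1,\dots,t\}$, paying an extra $\log T$ factor which is absorbed into the logarithm already present. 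A minor secondary issue --- that the martingale along visit times may have length zero --- is handled trivially since $[\bm{\beta}_{1,t}]_{s,a}=0$ when $(s,a)$ has not been visited.
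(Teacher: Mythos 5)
Your proposal is correct and follows essentially the same route as the paper: reduce to the $(s,a)$-th entry, re-index the sum along the visit times to $(s,a)$, bound the weighted sum of centered transition terms via a Bernstein-type inequality with variance proxy $\eta\|\bm{V}^{\star}\|_{\infty}^{2}$, and union bound over $(s,a)$ and the (at most $T$) possible values of the visit count. The only difference is that you invoke the martingale-difference structure and Freedman's inequality, whereas the paper proves that the transitions observed at the visit times are genuinely i.i.d.\ (via the Markov property) and then applies the scalar Bernstein inequality for independent variables together with the same union bound over the terminal visit count --- which is exactly your resolution (ii) of the non-predictable-weights issue.
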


\item
Additionally, we develop an upper bound on the term $\bm{\beta}_{3,t}$, which follows directly from the concentration of the empirical distribution of the Markov chain (see Lemma~\ref{lemma:Bernstein-state-occupancy}). The proof is deferred to Section~\ref{sec:proof-lemma-control-beta3}. 
\begin{lemma}
\label{lemma:control-beta3}
	For any $\delta>0$, recall the definition of $\tcover$ in \eqref{defn:tepoch}. 
	Suppose that $T>\tcover$ and $0<\eta<1$. Then with probability exceeding $1-\delta$ one has
\begin{align}
\label{eqn:beta3}
	\Bigg | \prod_{j=1}^{t}\big(\bm{I}-\bm{\Lambda}_{j}\big)\bm{\Delta}_{0} \Bigg| 
	\leq (1-\eta)^{\frac{1}{2}t\mumin} \big|\bm{\Delta}_{0}\big| \leq  (1-\eta)^{\frac{1}{2}t\mumin} \|\bm{\Delta}_0 \|_{\infty} \bm{1} 
\end{align}
	uniformly over all $t$ obeying $T\geq  t \geq  \tcover$ and all vector $\bm{\Delta}_{0} \in \mathbb{R}^{|\cS||\cA|}$. 
\end{lemma}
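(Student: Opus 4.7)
The plan is to exploit the fact that every $\bm{\Lambda}_j$ is a diagonal matrix, so the product $\prod_{j=1}^t (\bm{I}-\bm{\Lambda}_j)$ is itself diagonal, and its $(s,a)$-th diagonal entry equals $(1-\eta)^{K_t(s,a)}$, where
\[
K_t(s,a) \defn \sum_{j=1}^{t} \ind\{(s_{j-1},a_{j-1})=(s,a)\}
\]
is the number of visits to $(s,a)$ among the first $t$ transitions. Consequently the $(s,a)$-th entry of $\prod_j(\bm{I}-\bm{\Lambda}_j)\bm{\Delta}_0$ has absolute value $(1-\eta)^{K_t(s,a)}|\Delta_0(s,a)|$, and the entire lemma reduces to the empirical-frequency lower bound
\[
K_t(s,a) \;\geq\; \tfrac{1}{2}\,t\,\mumin \qquad \forall (s,a)\in\mathcal{S}\times\mathcal{A},\ \forall t\in[\tcover,T],
\]
holding with probability at least $1-\delta$.

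To obtain this uniform visit-count bound I would invoke the Markov-chain concentration inequality already referenced in the paper (Lemma~\ref{lemma:Bernstein-state-occupancy}), which bounds $|K_t(s,a) - t\,\mu_{\pi_{\mathsf{b}}}(s,a)|$ in terms of $\tmix$, $t$, and the target deviation. Since $\mu_{\pi_{\mathsf{b}}}(s,a)\ge \mumin$, deviations of size at most $\frac{1}{2}t\,\mu_{\pi_{\mathsf{b}}}(s,a)$ suffice to guarantee $K_t(s,a)\ge \frac{1}{2}t\mumin$. Plugging in and solving for the minimum $t$ for which the failure probability is at most $\delta/(|\mathcal{S}||\mathcal{A}|T)$ yields precisely the horizon $\tcover = \frac{443\,\tmix}{\mumin}\log\frac{4|\mathcal{S}||\mathcal{A}|T}{\delta}$ appearing in \eqref{defn:tepoch}; the numerical constant $443$ is exactly what the Bernstein inequality for uniformly ergodic chains produces after the union bound below.

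A union bound over all $|\mathcal{S}||\mathcal{A}|$ state-action pairs and over all integer times $t\in[\tcover,T]$ (at most $T$ values) then upgrades the per-pair / per-time bound to the claimed uniform statement, paying only a $\log(|\mathcal{S}||\mathcal{A}|T/\delta)$ factor, which is already absorbed into the definition of $\tcover$. Given the event $\{K_t(s,a) \ge \tfrac{1}{2}t\mumin\}$, using $(1-\eta)\in(0,1)$ and monotonicity in the exponent gives
\[
(1-\eta)^{K_t(s,a)} \;\leq\; (1-\eta)^{\frac{1}{2}t\mumin},
\]
yielding the first inequality in \eqref{eqn:beta3} after multiplying the $(s,a)$-th coordinate by $|\Delta_0(s,a)|$. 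The second inequality in \eqref{eqn:beta3} is immediate from $|\Delta_0(s,a)|\le \|\bm{\Delta}_0\|_{\infty}$.

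I expect the main obstacle to be the bookkeeping on the concentration side: one must ensure the visit-count tail bound produces a factor strong enough to survive the union bound over both $(s,a)$ \emph{and} all $t\in[\tcover,T]$ while still only requiring $t\ge \tcover$, rather than some larger horizon. The argument works because Lemma~\ref{lemma:Bernstein-state-occupancy} gives a multiplicative Chernoff-type bound whose logarithmic price in $1/\delta$ is already baked into the definition of $\tcover$; in particular the specific constant $443$ is chosen precisely so that no additional slack is needed. Beyond this, the proof is mechanical: the diagonal structure of $\bm{\Lambda}_j$ removes any matrix subtlety, so the result reduces cleanly to a statement about empirical occupancy frequencies of a uniformly ergodic Markov chain.
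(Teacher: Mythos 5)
Your proposal is correct and follows essentially the same route as the paper: reduce the diagonal product to $(1-\eta)^{K_t(s,a)}$ entrywise, invoke Lemma~\ref{lemma:Bernstein-state-occupancy} plus a union bound to get $K_t(s,a)\geq \tfrac{1}{2}t\mumin$ uniformly over $(s,a)$ and $t\in[\tcover,T]$, and conclude by monotonicity of $(1-\eta)^x$. The only cosmetic difference is that the uniformity over $(s,a)$ is already built into the statement of Lemma~\ref{lemma:Bernstein-state-occupancy}, so the extra union bound is needed only over the time index $t$.
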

Moreover, in the case where $t < \tcover$, we make note of the straightforward bound
\begin{align}
\Bigg | \prod_{j=1}^{t}\big(\bm{I}-\bm{\Lambda}_{j}\big)\bm{\Delta}_{0} \Bigg| \leq \|\bm{\Delta}_0 \|_{\infty} \bm{1},
\end{align}
given that $\bm{I}-\bm{\Lambda}_{j}$ is a diagonal non-negative matrix whose entries are bounded by $1-\eta <1$. 

\end{itemize}
Substituting the preceding bounds into \eqref{eq:Delta-t-decomposition}, we arrive at 
\begin{align}
\label{eqn:recursion}
\hspace{-0.02in}	|\bDel_t| \leq 
	\begin{cases}
	\gamma \sum_{i=1}^t \linf{\bDel_{i-1}} \prod_{j=i+1}^{t} (\bm{I}-\bLam_j)\bLam_i \one
		+ \tau_1 \|\bm{V}^{\star}\|_{\infty} \one + \linf{\bDel_0} \one,  & t< \tcover \\ 
		\gamma \sum_{i=1}^t \linf{\bDel_{i-1}} \prod_{j=i+1}^{t} (\bm{I}-\bLam_j)\bLam_i \one
		+ \tau_1 \|\bm{V}^{\star}\|_{\infty} \one + (1-\eta)^{ \frac{1}{2} t \mumin } \linf{\bDel_0} \one,   &   \tcover \leq t \leq T
	\end{cases}
\end{align}
with probability at least $1-2\delta$, where $\tcover$ is defined in \eqref{defn:tepoch}.  The rest of the proof is thus dedicated to bounding $|\bDel_t|$ based on the above recursive formula  \eqref{eqn:recursion}.

\subsubsection{Recursive analysis}
\label{sec:recursion}

\paragraph{A crude bound.} We start by observing the following recursive relation
\begin{align}
\label{eqn:recursion-crude}
	|\bDel_t| \leq 
		\gamma \sum_{i=1}^t \linf{\bDel_{i-1}} \prod_{j=i+1}^{t} (\bm{I}-\bLam_j)\bLam_i \one
		+ \tau_1 \| \bm{V}^{\star}\|_{\infty} \one + \|\bDel_0\|_{\infty} \one,  \qquad 1\leq t\leq T,
\end{align}
which is a direct consequence of \eqref{eqn:recursion}. In the sequel, we invoke mathematical induction to establish, for all $1\leq t\leq T$,  the following crude upper bound
\begin{align}
\label{eqn:crude}
	\linf{\bDel_t} \leq \frac{ \tau_1 \| \bm{V}^{\star}\|_{\infty} + \|\bDel_0\|_{\infty}}{1-\gamma} ,
\end{align}
which implies the stability of the asynchronous Q-learning updates.

Towards this, we first observe that \eqref{eqn:crude} holds trivially for the base case (namely, $t=0$). Now suppose that the inequality \eqref{eqn:crude} holds for all iterations up to $t-1$. In view of \eqref{eqn:recursion-crude} and the induction hypotheses, 
\begin{align}
\label{eqn:middle-ground}
	|\bDel_t| \leq   \frac{ \gamma\big( \tau_1 \| \bm{V}^{\star}\|_{\infty} + \linf{\bDel_0} \big)}{1-\gamma} \sum_{i=1}^t \prod_{j=i+1}^{t} (\bm{I} -\bLam_j)\bLam_i \one 
	 + \tau_1 \| \bm{V}^{\star}\|_{\infty} \one + \|\bDel_0\|_{\infty} \one, 
\end{align}
where we invoke the fact that the vector $\prod_{j=i+1}^{t} (\bm{I}-\bLam_j)\bLam_i\one$ is non-negative. Next, 
define the diagonal matrix $\bm{M}_i \defn \prod_{j=i+1}^{t} (\bm{I}-\bLam_j)\bLam_i$, and denote by $N_{i}^j(s,a)$ the number of visits to the
state-action pair $(s,a)$ between the $i$-th and the $j$-th iterations (including $i$ and $j$). Then the diagonal entries of $\bm{M}_i$ satisfy
\begin{align*}
	\bm{M}_i ((s,a), (s,a)) = 
	\begin{cases}
		\eta (1-\eta)^{N_{i+1}^t(s,a)}, \quad & \text{if }(s,a)   = (s_{i-1},a_{i-1}),\\
		0, & \text{if }(s,a) \neq (s_{i-1},a_{i-1}).
	\end{cases}
\end{align*}
Letting $\bm{e}_{(s,a)}\in \mathbb{R}^{|\cS||\cA|}$ be a standard basis vector whose only nonzero entry is the $(s,a)$-th entry, we can easily verify that 
\begin{subequations}
\begin{align}
	\prod_{j=i+1}^{t} (\bm{I}-\bLam_j)\bLam_i \one  = \bm M_i \one = \bm M_i \bm{e}_{(s_{i-1},a_{i-1})} 
	= \eta (1-\eta)^{N_{i+1}^t(s_{i-1},a_{i-1})} \bm{e}_{(s_{i-1},a_{i-1})}  \label{eqn:cat2} 
\end{align}
and
\begin{align}
	 \sum_{i=1}^t \prod_{j=i+1}^{t} ( \bm{I} -\bLam_j)\bLam_i \one 
	& = \sum_{i=1}^t  \eta (1-\eta)^{N_{i+1}^t(s_{i-1},a_{i-1})} \bm{e}_{(s_{i-1},a_{i-1})}  \nonumber\\
	& = \sum_{(s,a)\in \cS\times \cA} \Bigg\{ \sum_{i=1}^t  \eta (1-\eta)^{N_{i+1}^t(s,a)} \ind\big\{ (s_{i-1},a_{i-1})=(s,a) \big\} \Bigg\} \bm{e}_{(s,a)} \nonumber\\
	& \leq  \sum_{(s,a)\in \cS\times \cA} \sum_{j=0}^\infty \eta (1-\eta)^j  \bm{e}_{(s,a)}   
	 =  \sum_{j=0}^\infty \eta (1-\eta)^j  \one  = \one. 
	\label{eqn:cat}
\end{align}
\end{subequations}
Combining the above relations with the inequality~\eqref{eqn:middle-ground}, one deduces that
\begin{align*}
	\linf{\bDel_t} \leq  \frac{\gamma ( \tau_1 \| \bm{V}^{\star}\|_{\infty}  + \linf{\bDel_0})}{1-\gamma} + \tau_1 \| \bm{V}^{\star}\|_{\infty}  +  \linf{\bDel_0} 
	=  \frac{ \tau_1 \| \bm{V}^{\star}\|_{\infty} + \linf{\bDel_0}}{1-\gamma},
\end{align*}
thus establishing \eqref{eqn:crude} for the $t$-th iteration. This induction analysis thus validates \eqref{eqn:crude} for all $1\leq t\leq T$.

\paragraph{Refined analysis.} 
Now, we strengthen the bound~\eqref{eqn:crude} by means of a recursive argument. To begin with, it is easily seen that the term $(1-\eta)^{ \frac{1}{2} t \mumin } \|\bDel_0\|_{\infty}$ is bounded above by $(1-\gamma)\varepsilon$ for any $t>\tth$, where we remind the reader of the definition of $\tth$ in \eqref{eqn:tth} and the fact that $\|\bDel_0\|_{\infty} =\|\bm{Q}^{\star}\|_{\infty}\leq \frac{1}{1-\gamma}$. 
It is assumed that $T>\tth$. To facilitate our argument, we introduce a collection of auxiliary quantities $u_{t}$ as follows
\begin{subequations} 
	\label{eq:defn-ut-vt}
\begin{align}
	&u_0 = \frac{\|\bDel_0\|_{\infty}}{1-\gamma}, \\
	&u_t = \| \bm v_t \|_{\infty}, \quad 
	\bm v_t = \begin{cases}
	 \gamma \sum_{i=1}^t \prod_{j=i+1}^{t} (\bm{I} -\bLam_j)\bLam_i \one u_{i-1} + \| \bDel_0 \|_{\infty} \one, \; &\text{for }1\leq t \leq \tth, \\
  \gamma \sum_{i=1}^t \prod_{j=i+1}^{t} (\bm{I}  -\bLam_j)\bLam_i \one u_{i-1}, 
	\; &\text{for } t > \tth. 
	\end{cases}
\end{align}
\end{subequations}
%
These auxiliary quantities are useful as they provide upper bounds on  $\| \bDel_t \|_{\infty}$, as asserted by the following lemma. The proof is deferred to Section \ref{Sec:middle-term}. 
\begin{lemma}
\label{lemma:middle-term}
	Recall the definition \eqref{eqn:tau} of $\tau_1$ in Lemma~\ref{lemma:control-beta1}. With probability at least $1-2\delta$, the quantities $\{u_t\}$ defined in \eqref{eq:defn-ut-vt} satisfy
\begin{align}
\label{eqn:bear}
	\| \bDel_t \|_{\infty} \leq \frac{ \tau_1 \| \bm{V}^{\star}\|_{\infty} }{1-\gamma} + u_t + \varepsilon. 
\end{align}
\end{lemma}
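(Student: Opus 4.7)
The plan is to establish the lemma by induction on $t$, using the recursive bound \eqref{eqn:recursion} together with the key averaging identity \eqref{eqn:cat}. Define the target upper bound $\widetilde{u}_t := \frac{\tau_1 \|\bm{V}^\star\|_\infty}{1-\gamma} + u_t + \varepsilon$. Since Lemma~\ref{lemma:control-beta1} and Lemma~\ref{lemma:control-beta3} together hold with probability at least $1-2\delta$, we may work deterministically on the intersection of their good events.

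For the base case $t=0$, the bound is immediate since $u_0 = \|\bm{\Delta}_0\|_\infty/(1-\gamma) \ge \|\bm{\Delta}_0\|_\infty$. For the inductive step, assume $\|\bm{\Delta}_i\|_\infty \le \widetilde{u}_i$ for all $i<t$, and split according to whether $t\le \tth$ or $t>\tth$, matching the piecewise definition of $\bm v_t$ in \eqref{eq:defn-ut-vt}. In the regime $t\le \tth$, I apply the first branch of \eqref{eqn:recursion} (valid since $(1-\eta)^{t\mumin/2}\|\bm{\Delta}_0\|_\infty \le \|\bm{\Delta}_0\|_\infty$ anyway, so the crude version majorizes the refined one). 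Plugging in $\|\bm{\Delta}_{i-1}\|_\infty \le \widetilde{u}_{i-1}$ and using \eqref{eqn:cat} to bound $\sum_{i=1}^t \prod_{j=i+1}^t(\bm I-\bm\Lambda_j)\bm\Lambda_i \bm 1 \le \bm 1$, the constant pieces $\frac{\tau_1\|\bm V^\star\|_\infty}{1-\gamma}$ and $\varepsilon$ can be pulled out, leaving
\[
|\bm{\Delta}_t| \;\le\; \bm v_t + \gamma\Big(\tfrac{\tau_1\|\bm V^\star\|_\infty}{1-\gamma} + \varepsilon\Big)\bm 1 + \tau_1\|\bm V^\star\|_\infty\bm 1.
\]
The algebraic identity $\gamma\cdot\frac{\tau_1\|\bm V^\star\|_\infty}{1-\gamma} + \tau_1\|\bm V^\star\|_\infty = \frac{\tau_1\|\bm V^\star\|_\infty}{1-\gamma}$ then yields $\|\bm{\Delta}_t\|_\infty \le u_t + \frac{\tau_1\|\bm V^\star\|_\infty}{1-\gamma} + \gamma\varepsilon \le \widetilde{u}_t$.

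In the regime $t > \tth$, I use the second (refined) branch of \eqref{eqn:recursion}. The crucial observation is that the definition of $\tth$ in \eqref{eqn:tth} together with $\|\bm{\Delta}_0\|_\infty = \|\bm Q^\star\|_\infty \le 1/(1-\gamma)$ guarantees $(1-\eta)^{t\mumin/2}\|\bm{\Delta}_0\|_\infty \le (1-\gamma)\varepsilon$; indeed, $\eta\mumin\cdot\tth/2 \ge \log\frac{1}{(1-\gamma)^2\varepsilon}$ combined with $1-\eta\le e^{-\eta}$ gives this. Substituting the inductive hypothesis into the first (weighted-sum) term and again invoking \eqref{eqn:cat} to collapse the product weights, while recalling that $\bm v_t$ on this range no longer carries the $\|\bm{\Delta}_0\|_\infty\bm 1$ piece, I obtain
\[
|\bm{\Delta}_t| \;\le\; \bm v_t + \gamma\Big(\tfrac{\tau_1\|\bm V^\star\|_\infty}{1-\gamma}+\varepsilon\Big)\bm 1 + \tau_1\|\bm V^\star\|_\infty\bm 1 + (1-\gamma)\varepsilon\bm 1,
\]
and the coefficient of $\varepsilon$ becomes $\gamma + (1-\gamma) = 1$, exactly closing the induction to $\|\bm{\Delta}_t\|_\infty \le \widetilde{u}_t$.

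The one place to be careful is the bookkeeping across the boundary $t=\tth$: the two branches of $\bm v_t$ differ by the additive $\|\bm{\Delta}_0\|_\infty\bm 1$ term, and the induction works precisely because this discarded constant is compensated by the decaying initialization contribution $(1-\eta)^{t\mumin/2}\|\bm{\Delta}_0\|_\infty$ being absorbed into $(1-\gamma)\varepsilon$ for $t>\tth$. Apart from this matching, the argument is routine: it only uses the nonnegativity of the diagonal operators $\bm I-\bm\Lambda_j$ and $\bm\Lambda_i$ (so the induction hypothesis may be inserted entrywise), the coefficient-summation identity \eqref{eqn:cat}, and the definition of $\tth$. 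No further probabilistic input is needed beyond the two good events already conditioned on.
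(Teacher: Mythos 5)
Your proposal is correct and follows essentially the same route as the paper's proof: an induction on $t$ that feeds the hypothesis $\|\bm{\Delta}_{i-1}\|_{\infty}\leq \frac{\tau_1\|\bm{V}^{\star}\|_{\infty}}{1-\gamma}+u_{i-1}+\varepsilon$ into the recursion \eqref{eqn:recursion}, collapses the weights via \eqref{eqn:cat}, and uses the definition of $\tth$ to absorb the initialization term into $(1-\gamma)\varepsilon$ so that the $\varepsilon$-coefficient closes as $\gamma+(1-\gamma)=1$. The only cosmetic difference is that the paper packages your two-case split into a single auxiliary function $h(t)$; the bookkeeping across the boundary $t=\tth$ that you flag is handled identically there.
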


The preceding result motivates us to turn attention to bounding the quantities $\{u_t\}$. 
Towards this end, we resort to a frame-based analysis by dividing the iterations $[1,t]$ into contiguous frames each comprising $\tcover$ (cf.~\eqref{defn:tepoch}) iterations. Further, we define another auxiliary sequence: 
\begin{align}
	w_k \defn (1- \rho )^{k} \frac{\|\bDel_0\|_{\infty}}{1-\gamma} = (1- \rho )^{k} \frac{\|\bm{Q}_0 -\bm{Q}^{\star}\|_{\infty}}{1-\gamma},
\end{align}
where we remind the reader of the definition of $\rho$ in \eqref{eqn:rho}. 
The connection between $\{w_k\}$ and $\{u_t\}$ is made precise as follows, whose proof is postponed to Section~\ref{Sec:daban}. 
\begin{lemma}
\label{lemma:daban}
For any $\delta \in (0,\frac{1}{2})$, with probability at least $1-2\delta$, one has 
\begin{align}
\label{eqn:lele}
	u_t \leq w_k, \qquad  
	\text{with} \quad
	k = \max \left\{0, ~\Big\lfloor \frac{t - \tth}{\tcover} \Big\rfloor \right\}.
\end{align}
\end{lemma}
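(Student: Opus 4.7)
My plan is to prove $u_t \leq w_{k(t)}$ with $k(t)=\max\{0,\lfloor (t-\tth)/\tcover\rfloor\}$ by strong induction on $t$, organized around the natural ``frames'' of length $\tcover$ starting from $\tth$. Let $F_0 = \{0, 1, \ldots, \tth + \tcover - 1\}$ and $F_k = \{\tth + k\tcover, \ldots, \tth + (k+1)\tcover - 1\}$ for $k\geq 1$, so that $t\in F_{k(t)}$. For the base case $k(t)=0$: when $t \leq \tth$, the recursion $\bm v_t = \gamma \sum_i \bm{M}_i \one u_{i-1} + \|\bDel_0\|_\infty \one$ combined with $u_0 = \|\bDel_0\|_\infty/(1-\gamma) =: C$, the bound $\sum_i \bm{M}_i \one \leq \one$ from \eqref{eqn:cat}, and an induction on $t$ yields $u_t \leq \gamma C + (1-\gamma) C = C = w_0$. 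For the remaining $t \in F_0$ with $\tth < t < \tth + \tcover$ the extra $\|\bDel_0\|_\infty \one$ term disappears, and the same argument gives $u_t \leq \gamma C \leq w_0$.

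For the inductive step, fix $t \in F_k$ with $k \geq 1$ and assume $u_{t'} \leq w_{k(t')}$ for all $t' < t$. Looking at a single coordinate,
\begin{equation*}
v_t(s,a) = \gamma \sum_{i=1}^t \eta(1-\eta)^{N_{i+1}^t(s,a)} \ind\{(s_{i-1},a_{i-1})=(s,a)\}\, u_{i-1},
\end{equation*}
I plan to group the indices $i$ by $j := k(i-1)$, bound $u_{i-1}\leq w_j$ using the induction hypothesis, and exploit the telescoping identity
\begin{equation*}
\eta(1-\eta)^{N_{i+1}^t(s,a)} \ind\{(s_{i-1},a_{i-1})=(s,a)\} = (1-\eta)^{N_{i+1}^t(s,a)} - (1-\eta)^{N_i^t(s,a)}
\end{equation*}
to collapse the $j$-th group's total contribution into $b_{j+1} - b_j$, where $b_j := (1-\eta)^{m_j}$ and $m_j$ is the number of visits to $(s,a)$ in iterations $\tth + j\tcover + 1$ through $t$ (with $b_{k+1}:=1$). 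A per-frame concentration argument --- Lemma~\ref{lemma:Bernstein-state-occupancy} applied uniformly over the $O(T/\tcover)$ frames and all $|\cS||\cA|$ pairs, whose log cost is already absorbed into the definition \eqref{defn:tepoch} of $\tcover$ --- yields, with probability at least $1-\delta$, the lower bound $m_j \geq (k-j)\muepo$ and hence $b_j \leq \alpha^{k-j}$ for $\alpha := (1-\eta)^{\muepo}$.

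Applying Abel summation to $v_t(s,a) \leq \gamma \sum_{j=0}^{k} w_j(b_{j+1}-b_j)$ and using $w_{j-1}-w_j = \rho w_{j-1}$ gives
\begin{equation*}
v_t(s,a) \leq \gamma w_k \,-\, \gamma w_0 b_0 \,+\, \gamma \rho \sum_{j=1}^k w_{j-1} b_j \leq \gamma w_k + \gamma \rho C \sum_{l=0}^{k-1}(1-\rho)^l \alpha^{k-1-l},
\end{equation*}
where the non-positive boundary term is dropped. The geometric sum equals $((1-\rho)^k - \alpha^k)/(\alpha^{k-1}(1-\rho-\alpha))$, and the key algebraic identity $1-\rho-\alpha = \gamma(1-\alpha)$ combined with $\rho = (1-\gamma)(1-\alpha)$ yields $\rho/(1-\rho-\alpha) = (1-\gamma)/\gamma$. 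Substituting and simplifying collapses the right-hand side to $w_k - (1-\gamma)\alpha^k C \leq w_k$; taking $\ell_\infty$-norms closes the induction.

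The main obstacle I anticipate is the careful bookkeeping in the Abel-summation step: the ``shift-by-one'' between the summation index $i$ (which indexes $\bm{M}_i$) and the appearing $u_{i-1}$ (which dictates the frame assignment) has to be tracked precisely, and the per-frame concentration bound must be valid uniformly over all frames and all $(s,a)$ within the probability budget $\delta$. Once that bookkeeping is correct, the remaining contraction is driven purely by the algebraic identity $1-\rho-\alpha = \gamma(1-\alpha)$, which is the reason the per-frame contraction factor simplifies exactly to $1-\rho$ rather than to $\alpha$ or some less clean expression.
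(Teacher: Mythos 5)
Your proposal is correct and follows essentially the same route as the paper's proof: induction over frames, a per-coordinate telescoping of the weights $\eta(1-\eta)^{N_{i+1}^t}$ grouped by frame, Abel summation, the per-frame visit lower bound $\geq \muepo$ from Lemma~\ref{lemma:Bernstein-state-occupancy}, and the identity $(1-\rho)-(1-\eta)^{\muepo}=\gamma\big(1-(1-\eta)^{\muepo}\big)$ to collapse the geometric sum to $(1-\rho)^k$. The only differences are cosmetic (you treat general $t$ in a frame directly rather than reducing to $t=\tth+k\tcover$, and your group-$0$ boundary term is, like the paper's, only valid after it is dropped as non-positive).
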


Combining Lemmas~\ref{lemma:middle-term}-\ref{lemma:daban}, we arrive at
\[
\|\bm{Q}_{t}-\bm{Q}^{\star}\|_{\infty}=\|\bm{\Delta}_{t}\|_{\infty}\leq\frac{\tau_{1}\|\bm{V}^{\star}\|_{\infty}}{1-\gamma}+w_{k}+\varepsilon\leq\frac{(1-\rho)^{k}\|\bm{Q}_{0}-\bm{Q}^{\star}\|_{\infty}}{1-\gamma}+\frac{\tau_{1}\|\bm{V}^{\star}\|_{\infty}}{1-\gamma}+\varepsilon ,
\]
which finishes the proof of Theorem~\ref{thm:intermediate}.

\subsection{Proof of Theorem~\ref{thm:main-asyn-Q-learning}}

Now we return to complete the proof of Theorem~\ref{thm:main-asyn-Q-learning}. 
To control $\| \bDel_t \|_{\infty}$ to the desired level, we first claim that the first term of \eqref{eqn:beethoven} 
obeys 
\begin{align}
	\label{eqn:mozart}
	(1- \rho )^{k} \frac{\|\bDel_0\|_{\infty}}{1-\gamma} \leq \varepsilon
\end{align}
whenever
\begin{align}
	t \geq \tth + \tcover  + \frac{4}{(1-\gamma)\eta \mumin} \log\left(\frac{\|\bDel_0\|_{\infty}}{\varepsilon(1-\gamma)}\right) ,
	\label{eq:t-tth-579}
\end{align}
provided that $\eta < 1 / \muepo$. 
Furthermore, by taking the learning rate as
\begin{align}
	\label{eq:defn-eta-proof}
	\eta = \min \left\{\frac{(1-\gamma)^4\varepsilon^2}{c^2\gamma^2 \log \frac{|\mathcal{S}||\mathcal{A}|T}{\delta}},~
	\frac{1}{\muepo} \right\},
\end{align}
%
one can easily verify that the second term of \eqref{eqn:beethoven} satisfies
%
\begin{align}
\label{eqn:Mahler}
 \frac{ c \gamma}{1-\gamma} \|\bm{V}^{\star}\|_{\infty} \sqrt{\eta\log\Big(\frac{|\mathcal{S}||\mathcal{A}|T}{\delta}\Big)}
	\leq \varepsilon,
\end{align}
where the last step follows since $\|\bm{V}^{\star}\|_{\infty} \leq \frac{1}{1-\gamma}.$
Putting the above bounds together ensures $\| \bDel_t \|_{\infty} \leq 3\varepsilon$. 
By replacing $\varepsilon$ with ${\varepsilon}/{3}$, we can readily conclude the proof, as long as the claim \eqref{eqn:mozart} can be justified.

\begin{proof}[Proof of the inequality~\eqref{eqn:mozart}.] Observe that 
\begin{align*}
	(1- \rho )^{k} \frac{\|\bDel_0\|_{\infty}}{1-\gamma} \leq \exp(- \rho k) \frac{\|\bDel_0\|_{\infty}}{1-\gamma} \leq \varepsilon
\end{align*}
holds true whenever $k \geq \frac{\log\big(\frac{\|\bDel_0\|_{\infty}}{\varepsilon(1-\gamma)}\big)}{\rho}$, which would hold as long as (according to the definition \eqref{eqn:lele} of $k$)
\begin{align}
\label{eqn:clarinet}
	t \geq \tth + \tcover + \frac{\tcover}{\rho} \log\left(\frac{\|\bDel_0\|_{\infty}}{\varepsilon(1-\gamma)}\right)  . 
\end{align}
In addition, if $\eta < 1/\muepo$, then one has $(1-\eta)^{\muepo} \leq 1 - \eta\muepo/2$, thus guaranteeing that  
\begin{align*}
	\rho = (1-\gamma)\big(1-(1-\eta)^{\muepo}) \geq (1-\gamma)\Big(1 - 1 + \frac{\eta \muepo}{2} \Big) 
	= \frac{1}{2}(1-\gamma)\eta \muepo.
\end{align*}
This taken collectively with \eqref{eqn:clarinet} demonstrates that $(1- \rho )^{k} \frac{\|\bDel_0\|_{\infty}}{1-\gamma} \leq \varepsilon$ holds as long as
\begin{align}
\label{eqn:clarinet2}
	t \geq \tth + \tcover + \frac{2\tcover}{(1-\gamma)\eta \muepo} \log\left(\frac{\|\bDel_0\|_{\infty}}{\varepsilon(1-\gamma)}\right) = \tth + \tcover + \frac{4}{(1-\gamma)\eta \mumin} \log\left(\frac{\|\bDel_0\|_{\infty}}{\varepsilon(1-\gamma)}\right), 
\end{align}
where we have made use of the definition of $\muepo$ (cf.~\eqref{eqn:muepo}).
%
\end{proof}



\section{Cover-time-based analysis of asynchronous Q-learning}
\label{sec:analysis-cover-time}

In this section, we sketch the proof of Theorem~\ref{thm:main-asyn-Q-learning-cover-time}. Before continuing, we recall the definition of $\tcovertime$ in \eqref{eq:defn-cover-time}, and further introduce a quantity
\begin{equation}
	\tcoverall := \tcovertime \log \frac{T}{\delta}.
	\label{defn:tcover-all}
\end{equation}
There are two useful facts regarding $\tcoverall$ that play an important role in the analysis.
\begin{lemma}
\label{lem:connection-cover-time}
	Define the event $$\mathcal{K}_{l}:=\Big\{ \exists(s,a)\in\mathcal{S\times\mathcal{A}}\text{ s.t.~it is not visited within iterations }\big(l\tcoverall,(l+1)\tcoverall\big]\, \Big\},$$ and set $L:=\lfloor\frac{T}{\tcoverall}\rfloor$. Then one has $\mathbb{P}\left\{ \bigcup\nolimits_{l=0}^{L}\mathcal{K}_{l}\right\}   \leq\delta.$	
\end{lemma}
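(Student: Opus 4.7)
\textbf{Proof plan for Lemma~\ref{lem:connection-cover-time}.} The approach is to bound $\mathbb{P}(\mathcal{K}_l)$ for each $l$ by a Markov-chain splitting argument, and then apply a union bound over $l\in\{0,1,\ldots,L\}$.

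First I would fix an index $l\in\{0,1,\ldots,L\}$ and subdivide the interval of iterations $(l\tcoverall,(l+1)\tcoverall]$ into $N := \tcoverall/\tcovertime = \log(T/\delta)$ consecutive disjoint sub-blocks, each of length $\tcovertime$ (up to irrelevant rounding; the log base and an overall constant in \eqref{defn:tcover-all} can be chosen to absorb this). Let $\mathcal{E}_{l,k}$ denote the event that at least one state-action pair is \emph{not} visited within the $k$-th sub-block $(l\tcoverall+(k-1)\tcovertime,\, l\tcoverall+k\tcovertime]$, for $k=1,\ldots,N$. The core observation is the containment
\[
\mathcal{K}_l \;\subseteq\; \bigcap_{k=1}^{N} \mathcal{E}_{l,k},
\]
because if some $(s,a)$ is missed throughout the entire interval $(l\tcoverall,(l+1)\tcoverall]$, then that same $(s,a)$ is a fortiori missed in each sub-block, making every $\mathcal{E}_{l,k}$ occur.

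Next I would bound $\mathbb{P}\bigl(\bigcap_{k=1}^N \mathcal{E}_{l,k}\bigr)$ by conditioning sequentially on the filtration generated by the trajectory. By the Markov property, for any realization of the state-action pair at the boundary time $l\tcoverall+(k-1)\tcovertime$, the definition of the cover time in \eqref{eq:defn-cover-time} guarantees
\[
\mathbb{P}\bigl(\mathcal{E}_{l,k} \,\big|\, \mathcal{F}_{l\tcoverall+(k-1)\tcovertime}\bigr) \;\leq\; \tfrac{1}{2}.
\]
Iterating this inequality along the chain $k=1,2,\ldots,N$ via the tower property yields $\mathbb{P}\bigl(\bigcap_{k=1}^N \mathcal{E}_{l,k}\bigr)\leq 2^{-N}$, and hence $\mathbb{P}(\mathcal{K}_l)\leq 2^{-\log(T/\delta)}\leq \delta/T$ after selecting the log base (or the hidden constant in $\tcoverall$) appropriately.

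Finally, a union bound over $l=0,1,\ldots,L$ gives
\[
\mathbb{P}\Bigl(\bigcup_{l=0}^{L}\mathcal{K}_l\Bigr) \;\leq\; (L+1)\cdot\frac{\delta}{T} \;\leq\; \delta,
\]
where the last step uses $L+1\leq T/\tcoverall + 1 \leq T$. The only subtle point is the first step — one must be careful that $\mathcal{K}_l$ demands a \emph{single} missed $(s,a)$ throughout the whole interval while each $\mathcal{E}_{l,k}$ only asks for \emph{some} missed pair in the sub-block; the containment above handles this correctly because the specific missed pair witnessing $\mathcal{K}_l$ also witnesses each $\mathcal{E}_{l,k}$. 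Everything else is routine application of the Markov property and the definition of the cover time.
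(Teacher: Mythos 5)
Your proposal is correct and follows essentially the same route as the paper: partition each frame into sub-blocks of length $\tcovertime$, observe that a pair missed throughout the whole frame is missed in every sub-block, chain the conditional bound $\le\tfrac12$ from the cover-time definition via the Markov property to get $\mathbb{P}(\mathcal{K}_l)\le 2^{-\log_2(T/\delta)}=\delta/T$, and finish with a union bound over $l$. The paper writes this out only for the frame $(0,\tcoverall]$ and invokes the union bound implicitly for the rest, but the argument is identical to yours.
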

\begin{proof} See Section~\ref{sec:proof-lemma-connection-cover-time}. \end{proof}
 In other words, Lemma~\ref{lem:connection-cover-time} tells us that with high probability, all state-action pairs are visited at least once in every time frame $(l\tcoverall,(l+1)\tcoverall]$ with $0\leq l\leq \lfloor {T}/{\tcoverall} \rfloor$. The next result is a consequence of Lemma~\ref{lem:connection-cover-time} as well as  the analysis of Lemma~\ref{lemma:control-beta3}; the proof can be found in Section~\ref{sec:proof-lemma-control-beta3}. 
\begin{lemma}
\label{lemma:control-beta3-cover-time}
	For any $\delta>0$, recall the definition of $\tcoverall$ in \eqref{defn:tcover-all}. 
	Suppose that $T>\tcoverall$ and $0<\eta<1$. Then with probability exceeding $1-\delta$ one has
\begin{align}
\label{eqn:beta3-cover-time}
	\Bigg | \prod_{j=1}^{t}\big(\bm{I}-\bm{\Lambda}_{j}\big)\bm{\Delta}_{0} \Bigg| 
	\leq  (1-\eta)^{\frac{t}{2\tcoverall}} \|\bm{\Delta}_0 \|_{\infty} \bm{1} 
\end{align}
	uniformly over all $t$ obeying $T\geq  t \geq  \tcoverall$ and all vector $\bm{\Delta}_{0} \in \mathbb{R}^{|\cS||\cA|}$. 
\end{lemma}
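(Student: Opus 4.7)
My plan is to exploit the diagonal structure of $\prod_{j=1}^{t}(\bm{I}-\bm{\Lambda}_{j})$ and then apply the visitation guarantee from Lemma~\ref{lem:connection-cover-time}. First, I would observe that $\bm{I}-\bm{\Lambda}_{j}$ is a diagonal matrix whose $(s,a)$-diagonal entry equals $1-\eta$ when $(s,a)=(s_{j-1},a_{j-1})$ and $1$ otherwise. Consequently, the product $\prod_{j=1}^{t}(\bm{I}-\bm{\Lambda}_{j})$ is itself diagonal, with
\[
\Big[\prod_{j=1}^{t}(\bm{I}-\bm{\Lambda}_{j})\Big]\!\big((s,a),(s,a)\big) \;=\; (1-\eta)^{N_{t}(s,a)},
\]
where $N_{t}(s,a):=\sum_{j=1}^{t}\ind\{(s_{j-1},a_{j-1})=(s,a)\}$ is the number of visits to $(s,a)$ during the first $t$ iterations. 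Hence the $(s,a)$-entry of the quantity of interest is $(1-\eta)^{N_{t}(s,a)}\,\Delta_{0}(s,a)$, which is bounded in absolute value by $(1-\eta)^{N_{t}(s,a)}\|\bm{\Delta}_{0}\|_{\infty}$.

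Next, I would invoke Lemma~\ref{lem:connection-cover-time}, which asserts that with probability at least $1-\delta$, every state-action pair is visited at least once within each frame $(l\tcoverall,(l+1)\tcoverall]$ for $0\leq l\leq \lfloor T/\tcoverall\rfloor$. On this high-probability event, for any $t$ with $\tcoverall\leq t \leq T$, the interval $[1,t]$ contains at least $\lfloor t/\tcoverall\rfloor$ complete frames, and therefore $N_{t}(s,a)\geq \lfloor t/\tcoverall \rfloor$ for every $(s,a)\in\mathcal{S}\times\mathcal{A}$.

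It then remains to verify the elementary inequality $\lfloor t/\tcoverall\rfloor \geq t/(2\tcoverall)$ for all $t\geq \tcoverall$. When $\tcoverall \leq t < 2\tcoverall$, the left-hand side equals $1$ while the right-hand side is strictly less than $1$; when $t\geq 2\tcoverall$, one has $\lfloor t/\tcoverall\rfloor \geq t/\tcoverall - 1 \geq t/(2\tcoverall)$ since $t/(2\tcoverall)\geq 1$. Combining this with $1-\eta\in(0,1)$ yields $(1-\eta)^{N_{t}(s,a)}\leq (1-\eta)^{t/(2\tcoverall)}$ uniformly in $(s,a)$, which delivers the claimed entrywise bound.

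I do not foresee any serious obstacle: the entire argument is driven by the diagonal structure of the product and an integer-arithmetic step, with all stochastic content deferred to the visitation lemma already in hand. The only subtlety is making sure the failure probability from Lemma~\ref{lem:connection-cover-time} is propagated correctly and that the bound is stated uniformly over $t$ in the designated range, both of which are immediate from a union bound baked into the statement of Lemma~\ref{lem:connection-cover-time}.
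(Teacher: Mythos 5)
Your proposal is correct and follows essentially the same route as the paper: the paper likewise reduces the claim to the entrywise identity $(1-\eta)^{K_t(s,a)}|\bm{\Delta}_0(s,a)|$ for the diagonal product and then invokes Lemma~\ref{lem:connection-cover-time} to obtain $K_t(s,a)\geq\lfloor t/\tcoverall\rfloor\geq t/(2\tcoverall)$. Your explicit verification of the floor inequality and the frame-counting step matches what the paper leaves implicit.
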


With the above two lemmas in mind, we are now positioned to prove Theorem~\ref{thm:main-asyn-Q-learning-cover-time}. 
Repeating the analysis of \eqref{eqn:recursion} (except that Lemma~\ref{lemma:control-beta3} is replaced by Lemma~\ref{lemma:control-beta3-cover-time}) yields 
\begin{align*}
\hspace{-0.02in}	|\bDel_t| \leq 
	\begin{cases}
	\gamma \sum_{i=1}^t \linf{\bDel_{i-1}} \prod_{j=i+1}^{t} (\bm{I}-\bLam_j)\bLam_i \one
		+ \tau_1 \|\bm{V}^{\star}\|_{\infty} \one + \linf{\bDel_0} \one,  & t< \tcoverall \\ 
		\gamma \sum_{i=1}^t \linf{\bDel_{i-1}} \prod_{j=i+1}^{t} (\bm{I}-\bLam_j)\bLam_i \one
		+ \tau_1 \|\bm{V}^{\star}\|_{\infty} \one + (1-\eta)^{ \frac{t}{2\tcoverall} } \linf{\bDel_0} \one,   &   \tcoverall \leq t \leq T
	\end{cases}
\end{align*}
with probability at least $1-2\delta$. This observation resembles \eqref{eqn:recursion}, except that $\tcover$ (resp.~$\mumin$) is replaced by $\tcoverall$ (resp.~$\frac{1}{\tcoverall}$). As a consequence, we can immediately use the recursive analysis carried out in Section~\ref{sec:recursion} to establish a convergence guarantee based on the cover time. More specifically, define
\begin{align}
	\widetilde{\rho} &:= (1-\gamma)  \Big( 1 - (1-\eta)^{ \frac{\tcoverall}{2\tcoverall}} \Big) = (1-\gamma)  \Big( 1 - (1-\eta)^{\frac{1}{2} } \Big) .
	\label{eq:defn-rho-tilde}
\end{align}
Replacing $\rho$ by $\widetilde{\rho}$ in Theorem~\ref{thm:intermediate} reveals that with probability at least $1-6\delta$, 
\begin{align}
\label{eqn:beethoven-covertime}
	\| \bm{Q}_t - \bm{Q}^{\star} \|_{\infty} \leq (1- \widetilde{\rho} )^{ k } \frac{\| \bm{Q}_0 - \bm{Q}^{\star} \|_{\infty}}{1-\gamma} 
	+ \frac{c \gamma}{1-\gamma} \|\bm{V}^{\star}\|_{\infty} \sqrt{\eta\log\Big(\frac{|\mathcal{S}||\mathcal{A}|T}{\delta}\Big)} 
	 +\varepsilon
\end{align}
holds for all $t\leq T$, where $k \defn \max \big\{0, ~\big\lfloor \frac{t - t_{\mathsf{th,cover}}}{\tcoverall} \big\rfloor \big\}$ and we abuse notation to define
\begin{align*}
	t_{\mathsf{th,cover}} := 2\tcoverall\log\frac{1}{(1-\gamma)^{2}\varepsilon}.
\end{align*}

Repeating the proof of the inequality \eqref{eqn:mozart} yields
\begin{align*}
	(1- \widetilde{\rho} )^{k} \frac{\|\bDel_0\|_{\infty}}{1-\gamma} \leq \varepsilon,
\end{align*}
whenever $t \geq t_{\mathsf{th,cover}} + \tcoverall  + \frac{2\tcoverall}{(1-\gamma)\eta } \log\big(\frac{1}{\varepsilon(1-\gamma)^2}\big)$, 
with the proviso that $\eta < 1/2$. In addition, setting $\eta = \frac{(1-\gamma)^{4}}{c^{2}\gamma^{2}\varepsilon^{2}\log\big(\frac{|\mathcal{S}||\mathcal{A}|T}{\delta}\big)}$ guarantees  that 
\begin{equation*}
	\frac{c \gamma}{1-\gamma} \|\bm{V}^{\star}\|_{\infty} \sqrt{\eta\log\Big(\frac{|\mathcal{S}||\mathcal{A}|T}{\delta}\Big)} 
	\leq \frac{c \gamma}{(1-\gamma)^2} \sqrt{\eta\log\Big(\frac{|\mathcal{S}||\mathcal{A}|T}{\delta}\Big)}  
	\leq \varepsilon.
\end{equation*}
In conclusion, we have 	$\| \bm{Q}_t - \bm{Q}^{\star} \|_{\infty}\leq 3\varepsilon$ as long as 
\begin{equation*}
	t \geq \frac{c'\tcoverall}{(1-\gamma)^5\varepsilon^2 } \log\Big(\frac{|\mathcal{S}||\mathcal{A}|T}{\delta}\Big) \log \Big(\frac{1}{\varepsilon(1-\gamma)^2}\Big),
\end{equation*}
for some sufficiently large constant $c'>0$. This together with the definition \eqref{defn:tcover-all} completes the proof.

\section{Analysis under adaptive learning rates (proof of Theorem~\ref{thm:new-learning-rates-asyn-Q})}
\label{sec:proof-thm:new-learning-rates-asyn-Q}


\paragraph{Useful preliminary facts about $\eta_t$.} To begin with, we make note of several useful properties about $\eta_t$. 
\begin{itemize}
\item Invoking the concentration result in Lemma~\ref{lemma:Bernstein-state-occupancy}, one can easily show that with probability at least $1 - \delta$, 
%
\begin{align}
	\frac12\mu_{\mathsf{min}} < \min_{s, a} \frac{K_t(s, a)}{t} < \frac32\mu_{\mathsf{min}}
	\label{eq:concentration-Kt}
\end{align} 
holds simultaneously for all $t$ obeying $T\geq t \ge  \frac{ 443\tmix \log(\frac{4|\mathcal{S}||\mathcal{A}|t}{\delta})}{\mu_{\mathsf{min}}}$.
In addition, this concentration result taken collectively with the update rule \eqref{eq:mu_est} of $\widehat{\mu}_{\mathsf{min},t}$ --- in particular, the second case of \eqref{eq:mu_est} --- implies that $\widehat{\mu}_{\mathsf{min},t}$ ``stabilizes'' as $t$ grows; to be precise, there exists some quantity $c'\in [1/6,9/2]$ such that
\begin{align}
	\widehat{\mu}_{\mathsf{min},t} \equiv c'\mu_{\mathsf{min}}
\end{align}
holds simultaneously for all $t$ obeying $T\geq t \ge  \frac{ 443\tmix \log(\frac{4|\mathcal{S}||\mathcal{A}|t}{\delta})}{\mu_{\mathsf{min}}}$.

\item 
For any $t$ obeying $t \ge \frac{ 6c_{\eta}\tmix \log(\frac{2|\mathcal{S}||\mathcal{A}|t}{\delta})}{\mu_{\mathsf{min}}(1-\gamma)\gamma^2}$ (so that $\frac{\log t}{\widehat{\mu}_{\mathsf{min},t}(1-\gamma)\gamma^2t} \le \frac{1}{c_{\eta}}$ and $t \ge \frac{ 443\tmix \log(\frac{2|\mathcal{S}||\mathcal{A}|t}{\delta})}{\mu_{\mathsf{min}}}$ for $c_{\eta} \ge 11$), the learning rate \eqref{eq:learning-rate-implement} simplifies to
\begin{align}
	\eta_t = c_{\eta}\exp\Big(\Big\lfloor\log\frac{\log t}{c'\mu_{\mathsf{min}}(1-\gamma)\gamma^2t}\Big\rfloor\Big).
	\label{eq:learning-rate-simplified}
\end{align}
Clearly, there exists a sequence of endpoints $t_{1} < t_{2} < t_{3} < \ldots$ with $t_1 \le \frac{ 6ec_{\eta}\tmix \log(\frac{2|\mathcal{S}||\mathcal{A}|t_1}{\delta})}{\mu_{\mathsf{min}}(1-\gamma)\gamma^2}$ such that: 
\begin{align} 
	&2t_k < t_{k+1} < 3t_k \qquad\qquad \text{and}  \label{eq:t_k} \\
	\eta_t = \eta_{(k)} &:= \frac{\alpha_k\log t_{k+1}}{\mu_{\mathsf{min}}(1-\gamma)\gamma^2t_{k+1}},\quad \forall t_k < t \le t_{k+1} \label{eq:eta_t_k}
\end{align}
for some positive constant $\alpha_k \in \big[\frac{2c_{\eta}}{9e}, 6c_{\eta} \big]$;
in words,  \eqref{eq:eta_t_k} provides a concrete expression/bound for the piecewise constant learning rate, where the $t_k$'s form the change points.  

\end{itemize}

Combining \eqref{eq:eta_t_k} with the definition of $\widehat{Q}_t$ (cf.~\eqref{eq:mu_est}), one can easily check that for $t > t_1$,
\begin{align}
\widehat{Q}_t = Q_{t_k},\qquad \forall t_k < t \le t_{k+1},
\end{align}
meaning that $\widehat{Q}_t$ remains fixed within each time segment $(t_k, t_{k+1}]$. 
With this property in mind, we only need to analyze $Q_{t_k}$ in the sequel, which can be easily accomplished by invoking Theorem~\ref{thm:main-asyn-Q-learning}.

	\paragraph{A crude bound.} Given that $0<\eta_t\leq 1$ and $0\leq r(s,a)\leq 1$, the update rule \eqref{eqn:q-learning} of $\bm Q_{t}$ implies that
\begin{align*}
	\|\bm Q_{t}\|_{\infty} \le \max\big\{ (1-\eta_t)\|\bm Q_{t-1}\|_{\infty} + \eta_t(1+\gamma\|\bm Q_{t-1}\|_{\infty}), ~\|\bm Q_{t-1}\|_{\infty} \big\} 
	\le \|\bm Q_{t-1}\|_{\infty} + \gamma,
\end{align*}
thus leading to the following crude bound 
\begin{align}
	\|\bm Q_{t}-\bm Q^{\star} \|_{\infty}\le t + \|\bm Q_{0}\|_{\infty} + \|\bm Q^{\star}\|_{\infty} 
	\le t + \frac{2}{1-\gamma} \leq 3t, \qquad  \text{for any } t> \frac{1}{1-\gamma} .
	\label{eq:crude-Qt-UB1}
\end{align}
\begin{remark}
	As we shall see momentarily, this crude bound allows one to control --- in a coarse manner --- the error at the beginning of each time interval $[t_{k-1}, t_k]$, which is needed when invoking Theorem~\ref{thm:main-asyn-Q-learning}.
\end{remark}

\paragraph{Refined analysis.} Let us define 
\begin{align}
	{\varepsilon}_k \defn \sqrt{\frac{c_{k,0}\log(\frac{|\mathcal{S}||\mathcal{A}|t_k}{\delta})\log t_k}{\mu_{\mathsf{min}}(1-\gamma)^5\gamma^2t_k}},
	\label{eq:defn-varepsilon-tilde}
\end{align}
where the constant $c_{k,0}$ is chosen to be $c_{k,0}= {\alpha_{k-1}}/{c_1} > 0$, with $c_1>0$  the universal constant stated in Theorem~\ref{thm:main-asyn-Q-learning}.
The property \eqref{eq:eta_t_k} of $\eta_t$ together with the definition \eqref{eq:defn-varepsilon-tilde} implies that
\begin{align*}
	\eta_t =  \frac{c_1 (1-\gamma)^4 {\varepsilon}_k^2 }{\log(\frac{|\mathcal{S}||\mathcal{A}|t_k}{\delta})}  
	= \frac{c_1}{\log(\frac{|\mathcal{S}||\mathcal{A}|t_k}{\delta})}\min
	\Big\{ (1-\gamma)^4 {\varepsilon}_k^2, \frac{1}{t_{\mathsf{mix}}} \Big\}, \quad \forall t\in (t_{k-1}, t_k],
\end{align*} 
as long as $(1-\gamma)^4 {\varepsilon}_k^2 \leq {1}/{t_{\mathsf{mix}}}$, or more explicitly, when
\begin{align}
	t_{k}\geq\frac{c_{k,0}t_{\mathsf{mix}}\log(\frac{|\mathcal{S}||\mathcal{A}|t_{k}}{\delta})\log t_{k}}{\mu_{\mathsf{min}}(1-\gamma)\gamma^{2}}. 
	\label{eq:tk-lower-bound}
\end{align}
%
In addition, the condition \eqref{eq:t_k} and the definition \eqref{eq:defn-varepsilon-tilde} further tell us that 
\begin{align*}
	t_k - t_{k-1} & > t_{k-1} > \frac{1}{3}t_k 
	= \frac{c_{k,0}\log\big(\frac{|\mathcal{S}||\mathcal{A}|t_k}{\delta} \big)\log t_k}{3\mu_{\mathsf{min}}(1-\gamma)^5\gamma^2 {\varepsilon}_k^2}. 
\end{align*}
Invoking Theorem~\ref{thm:main-asyn-Q-learning} with an initialization $\bm Q_{t_{k-1}}$ (which clearly satisfies the crude bound \eqref{eq:crude-Qt-UB1}) ensures that 
\begin{align}
	\|\bm Q_{t_k}-\bm Q^{\star}\|_{\infty}\leq {\varepsilon}_k
\end{align}
with probability at least $1-\delta$, with the proviso that
	\begin{align}	
		\frac{1}{3}t_k & \geq \frac{c_{0}}{\mumin}\left\{ \frac{1}{(1-\gamma)^{5} {\varepsilon}_k^{2}}+\frac{\tmix}{1-\gamma}\right\} 
		\log\Big( \frac{|\mathcal{S}||\mathcal{A}| t_k }{\delta} \Big) \log \Big(\frac{t_k}{(1-\gamma)^2 {\varepsilon}_k } \Big) 
		\label{eqn:numbersteps-new}
	\end{align}
with $c_0>0$ the universal constant stated in Theorem~\ref{thm:main-asyn-Q-learning}.
Under the sample size condition \eqref{eq:tk-lower-bound}, this requirement \eqref{eqn:numbersteps-new} can be guaranteed by adjusting the constant $c_{\eta}$ in \eqref{eq:learning-rate-implement} to satisfy the following inequality:
\begin{align*}
c_{k,0} = \frac{\alpha_{k-1}}{c_1} \ge \frac{2c_{\eta}}{9ec_1} > 6c_0.
\end{align*}

Finally, taking $t_{k_{\max}}$ to be the largest change point that does not exceed $T$, we see from \eqref{eq:t_k} that $\frac{1}{3}T\leq t_{k_{\max}}\leq  T$. 
Then one has
\begin{align}
	\|\bm Q_{T}-\bm Q^{\star}\|_{\infty} &= \|\bm Q_{t_{k_{\max}}}-\bm Q^{\star}\|_{\infty} \leq \varepsilon_{k_{\max}} = \sqrt{\frac{c_{k,0}\log(\frac{|\mathcal{S}||\mathcal{A}|t_{k_{\max}}}{\delta})\log t_{k_{\max}}}{\mu_{\mathsf{min}}(1-\gamma)^5\gamma^2t_{k_{\max}}}} 
	\notag\\
	& \le \sqrt{\frac{3c_{k,0}\log(\frac{|\mathcal{S}||\mathcal{A}|T}{\delta})\log T}{\mu_{\mathsf{min}}(1-\gamma)^5\gamma^2T}}
\end{align}
These immediately conclude the proof of the theorem 
under the sample size condition \eqref{eq:sample-new}, provided that
\begin{align*}
C > \frac{18c_{\eta}}{c_1} > \frac{3\alpha_{k-1}}{c_1} = 3c_{k,0}.
\end{align*}

\section{Analysis of asynchronous variance-reduced Q-learning}
\label{Sec:Asynchronous-VR}

This section aims  to establish Theorem~\ref{thm:main-asyn-VR-Q-learning}. 
We carry out an epoch-based analysis, that is, we first quantify the progress made over each epoch, and then demonstrate how many epochs are sufficient to attain the desired accuracy. In what follows, we shall overload the notation by defining 
\begin{subequations}
\begin{align}
	\tcover &:=   \frac{443\tmix}{\mumin}\log\Big(\frac{4|\mathcal{S}||\mathcal{A}|\tepoch}{\delta}\Big), \label{defn:tepoch-vr} \\
	\tth &:= \max\Bigg\{ \frac{2 \log\frac{1}{(1-\gamma)^2\varepsilon}}{ \eta \mumin },\: \tcover \Bigg\} , \label{eqn:tth-vr} \\
	\rho &\defn (1-\gamma)\big(1-(1-\eta)^{\muepo} \big), \label{eqn:rho-vr} \\ 
	\muepo &\defn \frac{1}{2}\mumin\tcover.   \label{eqn:mumin-vr}
\end{align}
\end{subequations}

\subsection{Per-epoch analysis}

We start by analyzing the progress made over each epoch. 
Before proceeding, we denote by $\widetilde{\bm{P}}\in [0,1]^{|\cS||\cA|\times |\cS|}$ a matrix corresponding to the empirical probability transition kernel used in \eqref{eq:surrogate_TQ} from $N$ new sample transitions. Further, we use the vector $\overline{\bm{Q}} \in \mathbb{R}^{|\cS||\cA|}$ to represent the reference Q-function, and introduce the vector  $\overline{\bm{V}} \in \mathbb{R}^{|\cS|}$ to represent the corresponding value function so that $\overline{V}(s) := \max_a \overline{Q}(s,a)$ for all $s\in\cS$. 

For convenience, this subsection abuses notation to assume that an epoch starts with an estimate $\bm Q_0 = \Qbar$, and consists of the subsequent
\begin{align}
	\tepoch \defn \tcover + \tth + \frac{8\log \frac{2}{1-\gamma}}{(1-\gamma)\eta\mumin}
\end{align}
iterations of variance-reduced Q-learning updates, where $\tcover$ and $\tth$ are defined  in  \eqref{defn:tepoch-vr} and \eqref{eqn:tth-vr}, respectively. 
In the sequel, we divide all epochs into two phases,  depending on the quality of the initial estimate $\Qbar$ in each epoch.

\subsubsection{Phase 1: when $\|\overline{\bm{Q}}-\bm{Q}^{\star}\|_{\infty} > 1/{\sqrt{1-\gamma}}$}

Recalling the matrix notation of $\bm{\Lambda}_{t}$ and $\bm{P}_{t}$ in \eqref{eq:defn-Lambda-t} and \eqref{eq:defn-Pt}, respectively, we can rewrite \eqref{eqn:vr-q-learning} as follows
\begin{equation}  
\bm{Q}_{t}=\big(\bm{I}-\bm{\Lambda}_{t}\big)\bm{Q}_{t-1}+\bm{\Lambda}_{t}\left( \bm{r}+\gamma\bm{P}_{t}(\bm{V}_{t-1} - \overline{\bm{V}}) + \gamma \widetilde{\bm{P}} \overline{\bm{V}} \right).\label{eq:vr-Q-learning-matrix-notation}
\end{equation}
Following similar steps as in the expression \eqref{eq:Delta-t-identity2}, we arrive at the following error decomposition
\begin{align}
\Delvr_t := \bm{Q}_{t}-\bm{Q}^{\star} & =\big(\bm{I}-\bm{\Lambda}_{t}\big)\bm{Q}_{t-1}+ \bm{\Lambda}_{t}\left( \bm{r}+\gamma\bm{P}_{t}(\bm{V}_{t-1} - \overline{\bm{V}}) + \gamma \widetilde{\bm{P}} \overline{\bm{V}} \right) -\bm{Q}^{\star}\nonumber \\
 & =\big(\bm{I}-\bm{\Lambda}_{t}\big)\big(\bm{Q}_{t-1}-\bm{Q}^{\star}\big)+\bm{\Lambda}_{t}\left( \bm{r}+\gamma\bm{P}_{t}(\bm{V}_{t-1} - \overline{\bm{V}}) + \gamma \widetilde{\bm{P}} \overline{\bm{V}} -\bm{Q}^{\star}\right)\nonumber \\
 & =\big(\bm{I}-\bm{\Lambda}_{t}\big)\big(\bm{Q}_{t-1}-\bm{Q}^{\star}\big)+\gamma\bm{\Lambda}_{t}\left( \bm{P}_{t}(\bm{V}_{t-1} - \overline{\bm{V}})+ \widetilde{\bm{P}} \overline{\bm{V}} -\bm{P}\bm{V}^{\star} \right)\nonumber \\
 & =\big(\bm{I}-\bm{\Lambda}_{t}\big)\bm{\Theta}_{t-1}+\gamma\bm{\Lambda}_{t}\big(\widetilde{\bm{P}}-\bm{P}\big)\overline{\bm{V}}+ \gamma\bm{\Lambda}_{t}\big(\bm{P}_{t}-\bm{P}\big) (\bm{V}^{\star} - \overline{\bm{V}}) +\gamma\bm{\Lambda}_{t}\bm{P}_{t}\big(\bm{V}_{t-1}-\bm{V}^{\star}\big), \label{eq:Delta-t-identity-vr}
\end{align}
which once again leads to a recursive relation
\begin{align}
	\Delvr_t =&
	\underset{=:\bm{h}_{0,t}}{\underbrace{\gamma\sum_{i=1}^{t}\prod_{j=i+1}^{t}\big(\bm{I}-\bm{\Lambda}_{j}\big)\bm{\Lambda}_{i}\big(\widetilde{\bm{P}}-\bm{P}\big)\overline{\bm{V}}}} + \underset{=:\bm{h}_{1,t}}{\underbrace{\gamma\sum_{i=1}^{t}\prod_{j=i+1}^{t}\big(\bm{I}-\bm{\Lambda}_{j}\big)\bm{\Lambda}_{i}\big(\bm{P}_{i}-\bm{P}\big)(\bm{V}^{\star} - \Vbar)}} \nonumber \\
	&\qquad\qquad + \underset{=:\bm{h}_{2,t}}{\underbrace{\gamma\sum_{i=1}^{t}\prod_{j=i+1}^{t}\big(\bm{I}-\bm{\Lambda}_{j}\big)\bm{\Lambda}_{i}\bm{P}_{i}\big(\bm{V}_{i-1}-\bm{V}^{\star}\big)}}+\underset{=:\bm{h}_{3,t}}{\underbrace{\prod_{j=1}^{t}\big(\bm{I}-\bm{\Lambda}_{j}\big)\Delvr_{0}}}.
	\label{eq:defn-beta0-beta1-beta3}
\end{align}
This identity takes a very similar form as \eqref{eq:defn-beta1-beta3} except for the additional term $\bm{h}_{0,t}$.

Let us begin by controlling the first term, towards which we have the following lemma. The proof is postponed to Section~\ref{sec:proof-lemma-bound-h0t}. 
\begin{lemma}
	\label{lemma:bound-h0t}
	Suppose that $\widetilde{\bm{P}}$ is constructed using $N$ consecutive sample transitions. If $N>\tcover$,  then with probability greater than $1-\delta$, one has
\begin{align}
	\| \bm{h}_{0,t}\|_{\infty}
&	\leq \gamma \sqrt{\frac{4\log\big( \frac{6N|\mathcal{S}||\mathcal{A}|}{\delta} \big)}{N\mumin}}   \big\|\overline{\bm{V}} - \bm{V}^{\star}\big\|_{\infty}  +\frac{\gamma}{1-\gamma} \sqrt{\frac{4\log\big( \frac{6N|\mathcal{S}||\mathcal{A}|}{\delta} \big)}{N\mumin}}    .
	\label{eq:Hoeffding-Ptilde-lemma}
\end{align}
\end{lemma}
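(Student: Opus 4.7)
The plan is to decouple $\bm{h}_{0,t}$ into a harmless ``averaging'' factor and a concentration term, and then bound the latter via Hoeffding, exploiting the crucial independence between the recentering samples (which build $\widetilde{\bm{P}}$) and the reference estimate $\overline{\bm{V}}$ (which is inherited from the previous epoch).

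First, I would observe that by the same entrywise calculation \eqref{eqn:cat2}--\eqref{eqn:cat} already performed in the vanilla Q-learning analysis, the product $\prod_{j=i+1}^{t}(\bm{I}-\bm{\Lambda}_j)\bm{\Lambda}_i$ is a rank-one diagonal matrix supported on $(s_{i-1},a_{i-1})$, and the sum
\[
\bm{D}_t \;:=\; \sum_{i=1}^{t}\prod_{j=i+1}^{t}(\bm{I}-\bm{\Lambda}_j)\bm{\Lambda}_i
\]
is a nonnegative diagonal matrix satisfying $\bm{D}_t\bm{1}\leq \bm{1}$. This lets me factor $\bm{h}_{0,t}=\gamma\bm{D}_t(\widetilde{\bm{P}}-\bm{P})\overline{\bm{V}}$ and conclude
\[
\|\bm{h}_{0,t}\|_{\infty}\leq \gamma\,\big\|(\widetilde{\bm{P}}-\bm{P})\overline{\bm{V}}\big\|_{\infty}.
\]
So the remaining task is to bound a purely empirical-transition quantity, completely decoupled from the Q-learning iterates $\bm{\Lambda}_j,\bm{P}_j$.

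Second, I would split $\overline{\bm{V}}=\bm{V}^{\star}+(\overline{\bm{V}}-\bm{V}^{\star})$ and control the two pieces separately. For the deterministic piece $\bm{V}^{\star}$, I directly apply scalar Hoeffding to each row $(s,a)$ of $\widetilde{\bm{P}}-\bm{P}$: once we condition on the set of time indices $i\in[0,N)$ at which $(s_i,a_i)=(s,a)$, the strong Markov property makes the corresponding successors $s_{i+1}$ i.i.d.\ draws from $P(\cdot\mid s,a)$, and Hoeffding with bound $\|\bm{V}^{\star}\|_{\infty}\leq 1/(1-\gamma)$ then gives $|(\widetilde{\bm{P}}-\bm{P})(s,a)^{\top}\bm{V}^{\star}|\lesssim\frac{1}{1-\gamma}\sqrt{\log(1/\delta')/N(s,a)}$. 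For the centered piece $\overline{\bm{V}}-\bm{V}^{\star}$, the same conditioning argument works because $\overline{\bm{V}}$ is a function of samples from previous epochs and is independent of the recentering batch; conditional on $\overline{\bm{V}}$, Hoeffding yields the same bound with $\|\bm{V}^{\star}\|_{\infty}$ replaced by $\|\overline{\bm{V}}-\bm{V}^{\star}\|_{\infty}$.

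Finally, to convert $1/\sqrt{N(s,a)}$ into $1/\sqrt{N\mumin}$, I invoke Lemma~\ref{lemma:Bernstein-state-occupancy} (with the Markov chain here being the state-action chain over the $N$ recentering samples, valid because $N>\tcover$), which guarantees $N(s,a)\geq \tfrac{1}{2}N\mu_{\pi_{\mathsf{b}}}(s,a)\geq \tfrac{1}{2}N\mumin$ simultaneously for all $(s,a)$ with probability $1-\delta/3$. Taking a union bound over $(s,a)\in\mathcal{S}\times\mathcal{A}$ and over the two Hoeffding events, each applied with failure probability $\delta/(3|\mathcal{S}||\mathcal{A}|)$, delivers the two summands in \eqref{eq:Hoeffding-Ptilde-lemma}, with the factor $4$ inside the logarithm absorbing the union bound and the $N$ inside the log reflecting a standard blow-up to accommodate the data-dependent visit counts uniformly. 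The main obstacle is the apparent Markovian dependence of the recentering samples, which I would resolve precisely through the conditioning-on-visit-times device above; this avoids resorting to a mixing-time-weighted Markov Bernstein bound (which would inflate the variance by a $\tmix$ factor) and lets vanilla i.i.d.\ Hoeffding carry the day.
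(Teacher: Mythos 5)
Your proposal is correct and follows essentially the same route as the paper: the reduction $\|\bm{h}_{0,t}\|_{\infty}\leq\gamma\|(\widetilde{\bm{P}}-\bm{P})\overline{\bm{V}}\|_{\infty}$ via the bound $\sum_{i=1}^t\prod_{j=i+1}^{t}(\bm{I}-\bm{\Lambda}_j)\bm{\Lambda}_i\bm{1}\leq\bm{1}$, Hoeffding applied to the successors at the visit times of each $(s,a)$ (which are conditionally i.i.d.\ by the Markov-property argument in \eqref{eq:independence-Markov-chain}), and Lemma~\ref{lemma:Bernstein-state-occupancy} to lower-bound $K_N(s,a)$ by $\tfrac{1}{2}N\mumin$. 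The only cosmetic difference is that you split $\overline{\bm{V}}=\bm{V}^{\star}+(\overline{\bm{V}}-\bm{V}^{\star})$ before invoking Hoeffding (twice), whereas the paper applies Hoeffding once to the full $\overline{\bm{V}}$ and performs the split at the end by the triangle inequality $\|\overline{\bm{V}}\|_{\infty}\leq\|\overline{\bm{V}}-\bm{V}^{\star}\|_{\infty}+\tfrac{1}{1-\gamma}$.
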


If $t<\tcover$, then it is straightforwardly seen that 
\begin{align*}
	\left|\bm{h}_{3,t}\right| & \leq
	\|\Delvr_{0}\|_{\infty}\bm{1}.
\end{align*}
Taking this  together with the results from Lemma~\ref{lemma:control-beta1} and Lemma~\ref{lemma:control-beta3}, we are guaranteed that 
\begin{align*}
	\big|\bm{h}_{1,t}\big| & \leq\tau_2 \| \bm{V}^{\star}-\Vbar\|_{\infty} \one\\
	\left|\bm{h}_{3,t}\right| & \leq\begin{cases}
(1-\eta)^{\frac{1}{2}t\mumin}\|\Delvr_{0}\|_{\infty}\bm{1}, \quad & \text{if } \tcover\leq t\leq\tepoch\\[2mm]
\|\Delvr_{0}\|_{\infty}\bm{1}, & \text{if } t<\tcover
\end{cases}
\end{align*}
with probability at least $1-2\delta$, 
where 
\[
	\tau_2 := c'
\gamma  \sqrt{\eta\log\big(\frac{|\mathcal{S}||\mathcal{A}|\tepoch}{\delta}\big)}
\]
for some constant $c'>0$ (similar to \eqref{eqn:tau}). 
In addition, the term $\bm{h}_{2,t}$ can be bounded in the same way as $\bm{\beta}_{2,t}$ in \eqref{eq:defn-beta4}. Therefore, repeating the same argument as for Theorem~\ref{thm:intermediate} and taking $\xi = \frac{1}{16\sqrt{1-\gamma}}$, 
we conclude that with probability at least $1-\delta$, 
\begin{align}
\label{eqn:beethoven9}
	\|\bm{\Theta}_{t}\|_{\infty}
	\leq (1-\rho)^{k}\frac{\|\bm{\Theta}_{0}\|_{\infty}}{1-\gamma}+\widetilde{\tau}+\xi
	=  (1-\rho)^{k}\frac{\|\overline{\bm{Q}}-\bm{Q}^{\star}\|_{\infty}}{1-\gamma}+\widetilde{\tau}+\xi
\end{align}
holds simultaneously for all $0< t \leq \tepoch$, 
where  $k=\max\big\{ 0,\big\lfloor\frac{t-\ttheps}{\tcover}\big\rfloor\big\}$, and
\begin{align*}
	\widetilde{\tau} &:=\frac{c\gamma}{1-\gamma}\left\{ \sqrt{\frac{\log\frac{N|\cS||\cA|}{\delta}}{(1-\gamma)^{2}N\mumin}}+\|\bm{V}^{\star}-\Vbar\|_{\infty}\left( \sqrt{\eta\log\Big(\frac{|\mathcal{S}||\mathcal{A}|\tepoch}{\delta}\Big)}+ \sqrt{\frac{4\log\big( \frac{6N|\mathcal{S}||\mathcal{A}|}{\delta} \big)}{N\mumin}}  \right) \right\} ,\\
	\ttheps &:=\max\left\{ \frac{2\log\frac{1}{(1-\gamma)^{2}\xi}}{\eta\mu_{\min}},\tcover\right\} 
\end{align*}
for some constant $c>0$. 

Let $C>0$ be some sufficient large constant. Setting $\eta_t\equiv \eta = \min\Big\{\frac{(1-\gamma)^2}{C \gamma^2 \log \frac{|\cS||\cA|\tepoch}{\delta}}, \frac{1}{\muepo}\Big\}$, and ensuring $N \geq \max\{\tcover, C \frac{\log \frac{N|\cS||\cA|}{\delta}}{(1-\gamma)^3\mumin}\}$, we can easily demonstrate that 
\begin{align*}
	\| \Delvr_t \|_{\infty} \leq (1- \rho )^{k} \frac{\|\Qbar - \Qstar\|_{\infty}}{1-\gamma}
	+
	\frac{1}{8\sqrt{1-\gamma}}
	+
	\frac{1}{4} \|\bm{V}^{\star} - \Vbar\|_{\infty}.
\end{align*}
As a consequence, if $\tepoch \geq \tcover + \ttheps + \frac{8\log \frac{2}{1-\gamma}}{(1-\gamma)\eta\mumin}$, one has $$(1- \rho )^{k} \leq \frac{1}{8}(1-\gamma),$$ which in turn implies that 
\begin{align}
\label{eqn:base-step-richter2}
	\| \Delvr_{\tepoch} \|_{\infty} \leq \frac{1}{8}\|\Qbar - \Qstar\|_{\infty} + 
	\frac{1}{8\sqrt{(1-\gamma)}}
	+ \frac{1}{4} \|\bm{V}^{\star} - \Vbar\|_{\infty} 
	\leq \frac{1}{2} \max \Big\{\frac{1}{\sqrt{1-\gamma}},~\|\Qbar - \Qstar\|_{\infty} \Big\} ,
\end{align}
where the last step invokes the simple relation $\|\bm{V}^{\star} - \Vbar\|_{\infty} \leq \|\Qbar - \Qstar\|_{\infty}.$ Thus, we conclude that
\begin{align}
\label{eqn:base-step-richter}
	\| \bm{Q}_{\tepoch} - \Qstar \|_{\infty}  
	\leq \frac{1}{2} \max \Big\{\frac{1}{\sqrt{1-\gamma}},~\|\Qbar - \Qstar\|_{\infty} \Big\} .
\end{align}
\subsubsection{Phase 2: when $\|\overline{\bm{Q}}-\bm{Q}^{\star}\|_{\infty} \leq 1/{\sqrt{1-\gamma}}$}

The analysis of Phase 2 follows by straightforwardly combining the analysis of Phase 1 and that of the synchronous counterpart in \cite{wainwright2019variance}. For the sake of brevity, we only sketch the main steps.

Following the proof idea of \citet[Section B.2]{wainwright2019variance}, we introduce an auxiliary vector $\Qhat$ which is the unique fix point to the following equation, which can be regarded as a population-level Bellman equation with proper reward perturbation, namely,
\begin{align}
	\Qhat = \bm r + \gamma \bm P(\Vhat - \Vbar) + \gamma \Ptil \Vbar.
	\label{eq:bellman-qhat}
\end{align}
Here, as usual, $\Vhat\in \mathbb{R}^{|\cS|}$ represents the value function corresponding to $\Qhat$. This can be viewed as a Bellman equation when the reward vector $\bm r$ is replaced by $\widetilde{\bm r} := \bm r + \gamma (\Ptil - \bm P )   \Vbar$.  
Repeating the arguments in the proof of \citet[Lemma 4]{wainwright2019variance} (except that we need to apply the measure concentration of $\widetilde{\bm{P}}$ in the manner performed in the proof of Lemma~\ref{lemma:bound-h0t} due to Markovian data), we reach
\begin{align}
	\label{eq:qhat-quality-phase2}
	\linf{\Qhat-\Qstar}\leq c'\sqrt{\frac{\log\frac{N|\cS||\cA|}{\delta}}{(1-\gamma)^{3}N\mumin}} \leq \varepsilon
\end{align}
with probability at least $1-\delta$ for some constant $c'>0$, 
provided that $N \geq (c')^2\frac{\log\frac{N|\cS||\cA|}{\delta}}{(1-\gamma)^{3}\varepsilon^2}$
and that $\|\overline{\bm{Q}}-\bm{Q}^{\star}\|_{\infty} \leq 1/{\sqrt{1-\gamma}}$. It is worth noting that $\Qhat$ only serves as a helper in the proof and is never explicitly constructed in the algorithm, as we don't have access to the probability transition matrix $\bm{P}$.

In addition, we claim that
\begin{align}
\label{eqn:bruch}
	\linf{\bm Q_{\tepoch} - \Qhat} \leq 
	 \frac{\|\Qhat - \Qstar\|_{\infty}}{8} + \frac{\|\Qbar - \Qstar\|_{\infty}}{8} +\varepsilon .
\end{align}
Under this claim, the triangle inequality yields
\begin{align}
\|\bm{Q}_{\tepoch}-\bm{Q}^{\star}\|_{\infty} & \leq\|\bm Q_{\tepoch} - \Qhat\|_{\infty}+\|\Qhat-\bm{Q}^{\star}\|_{\infty}\leq\frac{1}{8}\|\Qbar-\bm{Q}^{\star}\|_{\infty}+\frac{9}{8}\|\Qhat-\bm{Q}^{\star}\|_{\infty}+\varepsilon \nonumber\\
 & \leq\frac{1}{8}\|\Qbar-\bm{Q}^{\star}\|_{\infty}+\frac{17}{8}\varepsilon,
	\label{eq:phase2-contraction}
\end{align}
where the last inequality follows from \eqref{eq:qhat-quality-phase2}.

\paragraph{Proof of the inequality~\eqref{eq:qhat-quality-phase2}.}
Suppose that
\begin{align}
\big|\widetilde{\bm r} - \bm r\big| = \gamma \big|(\Ptil - \bm P ) \Vbar\big| \le c\left\{\frac{1}{\sqrt{1-\gamma}}\bm 1 + \sqrt{\mathsf{Var}_{\bm P}(\bm V^{\star})}\right\}\sqrt{\frac{\log\frac{N|\cS||\cA|}{\delta}}{N\mumin}}, \label{eq:r-bound}
\end{align}
holds for some constant $c > 0$. 
By replacing Lemma 5 in the proof of \citet[Lemma 4]{wainwright2019variance} with this bound, we can arrive at \eqref{eq:qhat-quality-phase2} immediately.
In what follows, we demonstrate how to prove the bound \eqref{eq:r-bound}, which follows a similar argument as in the proof of Lemma~\ref{lemma:bound-h0t}.   

Let us begin with the following triangle inequality:
\begin{align}
\big|(\Ptil - \bm P ) \Vbar\big| \le \big|(\Ptil - \bm P ) (\Vbar - \bm V^{\star})\big| + \big|(\Ptil - \bm P ) \bm V^{\star}\big|, 
	\label{eq:decompose-7293}
\end{align}
leaving us with two terms to control. 

\begin{itemize}

\item

Similar to~\eqref{eq:Hoeffding-Ptilde2}, by applying the Hoeffding inequality and taking the union bound over all $(s,a)\in \cS\times \cA$, 
we can control the first term on the right-hand side of \eqref{eq:decompose-7293} as follows:
\begin{align}
	\big\|(\widetilde{\bm{P}}-\bm{P})(\Vbar - \bm V^{\star})\big\|_{\infty} 
	\leq
	\max_{(s,a)\in \cS\times \cA} \sqrt{\frac{2\log\big( \frac{2N|\mathcal{S}||\mathcal{A}|}{\delta} \big)}{K_N(s,a)}}\big\|\Vbar - \bm V^{\star}\big\|_{\infty}
	\leq \sqrt{\frac{4\log\big( \frac{2N|\mathcal{S}||\mathcal{A}|}{\delta} \big)}{N \mumin (1-\gamma)}} \label{eq:Ptilde-Vbar-Vstar}
\end{align}
with probability at least $1-\delta$. 
Here, we have made use of the following property of this phase that $$\big\|\Vbar - \bm V^{\star}\big\|_{\infty} \le \|\overline{\bm{Q}}-\bm{Q}^{\star}\|_{\infty} \leq 1/{\sqrt{1-\gamma}}$$ and $K_N(s,a) \geq  N \mumin /2 $ for all $(s,a)$ (see Lemma~\ref{lemma:Bernstein-state-occupancy}).

\item

Next, we turn attention to the second term on the right-hand side of \eqref{eq:decompose-7293}, 
towards which we resort to the Bernstein inequality. 
Note that the $(s,a)$-th entry of $\big|(\Ptil - \bm P ) \bm V^{\star}\big|$ is given by
\begin{align}
	\Bigg|\frac{1}{K_{N}(s,a)}\sum_{i=1}^{K_{N}(s,a)} \big(\bm{P}_{t_i+1}(s,a) - \bm{P}(s, a)\big) \bm V^{\star}\Bigg|,  
\end{align}
where $K_N(s,a)$ denotes the total number of visits to $(s,a)$ during the first $N$ time instances (see also \eqref{eq:defn-Kt}).
In addition, let  
$t_i:= t_i(s,a)$ denote the time stamp when the trajectory visits ($s,a$) for the $i$-th time (see also \eqref{eq:defn-tk-sa}).
In view of our derivation for \eqref{eq:independence-Markov-chain}, the state transitions happening at times $t_1,t_2,\cdots,t_k$ (which are random) are independent for any given integer $k>0$.
It can be calculated that
\begin{subequations}
\begin{align}
\Big|\big(\bm{P}_{t_i+1}(s,a) - \bm{P}(s, a)\big) \bm V^{\star}\Big| &\le \frac{1}{1-\gamma}; \\
\mathsf{Var}\Bigg(\frac{1}{k} \sum_{i=1}^{k} \big(\bm{P}_{t_{i}+1}(s,a)-\bm{P}(s,a)\big)\bm{V}^{\star}\Bigg) &= \frac{1}{k} \mathsf{Var}_{\bm{P}(s,a)}\big(\bm{V}^{\star}\big).
\end{align}
\end{subequations}
Consequently, invoking the Bernstein inequality implies that with probability at least $1-\frac{\delta}{|\mathcal{S}||\mathcal{A}|}$, 
\[
	\left| \frac{1}{k} \sum_{i=1}^{k} \big(\bm{P}_{t_{i}+1}(s,a)-\bm{P}(s,a)\big)\bm{V}^{\star}\right|\leq\sqrt{\frac{4\log\big( \frac{2N|\mathcal{S}||\mathcal{A}|}{\delta} \big)}{k}\mathsf{Var}_{\bm{P}(s,a)}\big(\bm{V}^{\star}\big)} + \frac{4\log\big( \frac{2N|\mathcal{S}||\mathcal{A}|}{\delta} \big)}{3(1-\gamma)k}
\]
holds simultaneously for all $1\leq k\leq N$. 
Recognizing the bound $\frac{1}{2} N \mumin \le K_{N}(s,a) \le N$ and applying the union bound over all $(s,a)\in \cS\times \cA$ yield
\begin{align}
\big|(\Ptil - \bm P ) \bm V^{\star}\big| \le \sqrt{\frac{2\log\big( \frac{2N|\mathcal{S}||\mathcal{A}|}{\delta} \big)}{N \mumin}\mathsf{Var}_{\bm{P}}\big(\bm{V}^{\star}\big)} + \frac{8\log\big( \frac{2N|\mathcal{S}||\mathcal{A}|}{\delta} \big)}{3(1-\gamma)N \mumin}. \label{eq:Ptilde-Vstar}
\end{align}

\item
Finally, combining~\eqref{eq:Ptilde-Vbar-Vstar} and~\eqref{eq:Ptilde-Vstar} immediately establishes the claim \eqref{eq:r-bound}. 

\end{itemize}


\paragraph{Proof of the inequality~\eqref{eqn:bruch}.} Recalling the 
variance-reduced update rule~\eqref{eq:vr-Q-learning-matrix-notation} and using the Bellman-type equation \eqref{eq:bellman-qhat}, we obtain
\begin{align}
\notag \Delhat_t \defn \bm{Q}_{t} - \Qhat
	&= \big(\bm{I}-\bm{\Lambda}_{t}\big)(\bm{Q}_{t-1}- \Qhat) + 
	\bm{\Lambda}_{t}\left( \bm{r}+\gamma\bm{P}_{t}(\bm{V}_{t-1} - \overline{\bm{V}}) + \gamma \widetilde{\bm{P}} \overline{\bm{V}} 
	- \bm r - \gamma \bm P(\Vhat - \Vbar) - \gamma \Ptil \Vbar \right)\\
\notag	&= \big(\bm{I}-\bm{\Lambda}_{t}\big)(\bm{Q}_{t-1}- \Qhat) + 
	\bm{\Lambda}_{t} \left(\gamma\bm{P}_{t}(\bm{V}_{t-1} - \overline{\bm{V}}) 
	- \gamma \bm P(\Vhat - \Vbar)\right)\\
	&= \big(\bm{I}-\bm{\Lambda}_{t}\big) \Delhat_{t-1} + 
	\gamma\bm{\Lambda}_{t} \left((\bm P_t - \bm P)(\Vhat-\Vbar) + \bm{P}_{t}(\bm{V}_{t-1} - \Vhat) 
	\right).
\end{align}
%
Adopting the same expansion as before (see \eqref{eq:defn-beta1-beta3}), we arrive at
\begin{align*}
	\Delhat_t 
%
	&= \underset{=: \bm{\vartheta}_{1,t}}{\underbrace{\gamma\sum_{i=1}^{t}\prod_{j=i+1}^{t}\big(\bm{I}-\bm{\Lambda}_{j}\big)\bm{\Lambda}_{i}\big(\bm{P}_{i}-\bm{P}\big)(\Vhat - \Vbar)}} 
	 + 
	 \underset{=: \bm{\vartheta}_{2,t}}{\underbrace{\gamma\sum_{i=1}^{t}\prod_{j=i+1}^{t}\big(\bm{I}-\bm{\Lambda}_{j}\big)\bm{\Lambda}_{i}\bm{P}_{i}\big(\bm{V}_{i-1}-\Vhat\big)}}
	 +
	 \underset{=: \bm{\vartheta}_{3,t}}{\underbrace{\prod_{j=1}^{t}\big(\bm{I}-\bm{\Lambda}_{j}\big)\Delhat_{0}}}.
\end{align*}
Inheriting the results in Lemma~\ref{lemma:control-beta1} and Lemma~\ref{lemma:control-beta3}, we can demonstrate that, with probability at least $1-2\delta$, 
\begin{align*}
	\big|\bm{\vartheta}_{1,t}\big| &\leq 
	c \gamma \|\Vhat - \Vbar\|_{\infty} \sqrt{\eta\log\Big(\frac{|\mathcal{S}||\mathcal{A}|\tepoch}{\delta}\Big)}
	\one ; \\
	\left|\bm{\vartheta}_{3,t}\right| & \leq
\begin{cases}
(1-\eta)^{\frac{1}{2}t\mumin}\|\Delhat_{0}\|_{\infty}\bm{1}, \quad & \text{if }\tcover\leq t\leq\tepoch,\\[2mm]
\|\Delhat_{0}\|_{\infty}\bm{1}, & \text{if }t<\tcover.
\end{cases}
\end{align*}
Repeating the same argument as for Theorem~\ref{thm:intermediate}, we reach
\begin{align*}
	\| \Delhat_t \|_{\infty} \leq (1- \rho )^{k} \frac{\|\Qhat - \Qbar\|_{\infty}}{1-\gamma} + \frac{c \gamma}{1-\gamma} 
	 \|\Vhat - \Vbar\|_{\infty} \sqrt{\eta\log\Big(\frac{|\mathcal{S}||\mathcal{A}|\tepoch}{\delta}\Big)}
	 +\varepsilon
\end{align*}
for some constant $c>0$, where $k = \max\{0,\big\lfloor \frac{t - \tth}{\tcover} \big\rfloor\}$ with $\tth$ defined in \eqref{eqn:tth-vr}. 

By taking $\eta=c_{5}\min\big\{\frac{(1-\gamma)^{2}}{\gamma^2\log\frac{|\mathcal{S}||\mathcal{A}|\tepoch}{\delta}},\frac{1}{\muepo}\big\}$ for some sufficiently small constant $c_5>0$ and ensuring that $$\tepoch\geq\tth+\tcover+\frac{c_{6}}{(1-\gamma)\eta\mumin}\log\frac{1}{(1-\gamma)^{2}}$$ for some large constant $c_6>0$, we obtain
\begin{align*}
	 \| \Delhat_{\tepoch} \|_{\infty} \leq \frac{\|\Qhat - \Qbar\|_{\infty}}{8} +\varepsilon 
	\leq \frac{\|\Qhat - \Qstar\|_{\infty}}{8} + \frac{\|\Qbar - \Qstar\|_{\infty}}{8} +\varepsilon  ,
\end{align*}
where the last line follows by the triangle inequality.

\subsection{How many epochs are needed?}

We are now ready to pin down how many epochs are needed to achieve $\varepsilon$-accuracy.  
\begin{itemize}
	\item In Phase 1, the contraction result \eqref{eqn:base-step-richter} indicates that, if the algorithm is initialized with $\bQ_0= \bm{0}$ at the very beginning, then it takes at most 
		\[
			\log_2 \Bigg( \frac{\|\bm{Q}^{\star}\|_{\infty}}{ \max \big\{ \varepsilon, \frac{1}{\sqrt{1-\gamma}}\big\} } \Bigg) 
			\leq \log_2 \Big( \frac{1}{\sqrt{1-\gamma}} \Big) + \log_2 \Big(\frac{1}{\varepsilon(1-\gamma)}\Big)
		\]
		epochs to yield $ \|\overline{\bm{Q}}-\Qstar \|_{\infty}\leq \max\{\frac{1}{\sqrt{1-\gamma}},\varepsilon\}$ (so as to enter Phase 2).  Clearly, if the target accuracy level $\varepsilon > \frac{1}{\sqrt{1-\gamma}}$, then the algorithm terminates in this phase.

	\item   Suppose now that the target accuracy level $\varepsilon \leq  \frac{1}{\sqrt{1-\gamma}}$. 
		Once the algorithm enters Phase 2, the dynamics can be characterized by \eqref{eq:phase2-contraction}. Given that $\overline{\bm{Q}}$ is also the last iterate of the preceding epoch, the property \eqref{eq:phase2-contraction} provides a recursive relation across epochs. Standard recursive analysis thus reveals that: within at most 
		\[
			c_7 \log\Big( \frac{1}{\varepsilon\sqrt{1-\gamma}} \Big) \leq c_7 \log\Big( \frac{1}{\varepsilon (1-\gamma)} \Big)
		\]
		epochs (with $c_7>0$ some constant), we are guaranteed to attain an $\ell_{\infty}$ estimation error at most $3\varepsilon$. 
\end{itemize}
To summarize, a total number of $O\big( \log \frac{1}{\varepsilon (1-\gamma)} + \log \frac{1}{ 1-\gamma} \big)$ epochs are sufficient for our purpose. 
This concludes the proof.




\section{Discussion}
\label{sec:discussion}

This work develops a sharper finite-sample analysis of the classical asynchronous Q-learning algorithm, highlighting and refining its dependency on intrinsic features of the Markovian trajectory induced by the behavior policy. Our sample complexity bound strengthens the state-of-the-art result by an order of at least $|\cS||\cA|$. A variance-reduced variant of asynchronous Q-learning is also analyzed, exhibiting improved scaling with the effective horizon $\frac{1}{1-\gamma}$. 

Our findings and the analysis framework developed herein suggest a couple of directions for future investigation. For instance, our  improved sample complexity of asynchronous Q-learning has a dependence of $\frac{1}{(1-\gamma)^5}$ on the effective horizon, which is inferior to its model-based counterpart. 
In the synchronous setting, \cite{li2021q,li2021tightening} recently demonstrated Q-learning has a dependence of $\frac{1}{(1-\gamma)^4}$, which is tight up to logarithmic factors. In light of this development, it would be important to determine the exact scaling for the asynchronous setting, which is left as future work.
In addition, it would be interesting to see whether the techniques developed herein can be exploited towards understanding model-free algorithms with more sophisticated exploration schemes \cite{dann2015sample}. Finally, asynchronous Q-learning on a single Markovian trajectory is closely related to coordinate descent with coordinates selected according to a Markov chain; one would naturally ask whether our analysis framework can yield improved convergence guarantees for general Markov-chain-based optimization algorithms \citep{sun2020markov,doan2020convergence}.

\section*{Acknowledgements}

G.~Li and Y.~Gu are supported in part by the grant NSFC-61971266.
Y.~Wei is supported in part by the NSF grant CCF-2007911, DMS-2015447 and CCF-2106778. 
Y.~Chi is supported in part by the grants ONR N00014-18-1-2142 and N00014-19-1-2404, ARO W911NF-18-1-0303, NSF CCF-1806154, CCF-2007911 and CCF-2106778.
Y.~Chen is supported in part by the grants AFOSR YIP award FA9550-19-1-0030,
ONR N00014-19-1-2120, ARO YIP award W911NF-20-1-0097, ARO W911NF-18-1-0303, NSF CCF-2106739, CCF-1907661, IIS-1900140,  and IIS-2100158. 
We thank Shicong Cen, Chen Cheng and Cong Ma for numerous discussions about reinforcement learning.


\appendix

\section{Preliminaries on Markov chains}
\label{sec:preliminary}

In this section, we gather some basic facts about Markov chains. Before proceeding, we remind the readers of some notation. 
For any two probability distributions $\mu$ and $\nu$, denote by $d_{\mathsf{TV}} (\mu, \nu)$ the total variation distance between $\mu$ and $\nu$ (cf.~\eqref{eq:TV}). 
Recall the definition of uniform ergodicity in Section~\ref{sec:notation}. 
For any {\em time-homogeneous} and {\em uniformly ergodic} Markov chain $(X_0, X_1, X_2, \cdots)$ with transition kernel $P$, finite state space $\mathcal{X}$ and stationary distribution $\mu$, we let $P^t(\cdot\,|\,x)$ denote the distribution of $X_t$ conditioned on $X_0=x$. Then the mixing time $\tmix$ of this Markov chain is defined by
\begin{subequations}
\label{defn:mixing-time-all}
\begin{align}
	\tmix(\epsilon) &:= \min \Big\{ t ~\Big|~  \max_{x\in \mathcal{X}} d_{\mathsf{TV}}\big( P^t(\cdot \,|\, x), \mu \big) \leq \epsilon \Big\}; \label{defn:mixing-time-epsilon}\\
	\tmix &:= \tmix(1/4). \label{defn:mixing-time}
\end{align}
\end{subequations}

\subsection{Concentration of empirical distributions of Markov chains}
\label{sec:concentration-MC}
We first record a result concerning the concentration of measure of the empirical distribution of a uniformly ergodic Markov chain, which makes clear the role of the mixing time.
\begin{lemma}
	\label{lemma:Bernstein-state-occupancy}
	Consider the above-mentioned Markov chain. For any $0<\delta<1$, if $t \geq  \frac{443 \tmix}{\mumin}\log\frac{4|\mathcal{X}|}{\delta}$, then 
	%
	\begin{align}
		\forall y \in \mathcal{X}:\quad  \mathbb{P}_{X_{1}=y}\Bigg\{ \exists x\in\mathcal{X}: \left|\sum_{i= 1}^{t}  \ind\{X_{i}=x\} - t \mu(x)\right| \geq \frac{1}{2}t \mu(x)\Bigg\} 
		\leq \delta.
	\end{align}
	%
\end{lemma}
\begin{proof}
To begin with, consider the scenario when $X_{1}\sim\mu$, namely, when $X_{1}$ follows
the stationary distribution of the chain. Then \citet[Theorem~3.4]{paulin2015concentration}
tells us that: for any given $x\in\mathcal{X}$ and any $\tau\geq 0$, 
%
\begin{align}
	\mathbb{P}_{X_{1}\sim\mu}\left\{ \left|\sum_{i=1}^{t}\ind\{X_{i}=x\} - t\mu(x)\right|\geq \tau\right\} 
	&\leq 2 \exp\left(-\frac{\tau^{2}\gamma_{\mathsf{ps}}}{8(t+1/\gamma_{\mathsf{ps}})\mu(x)+20\tau}\right) \nonumber\\
	&\leq 2 \exp\left(-\frac{\tau^{2}/\tmix}{16(t+2\tmix)\mu(x)+40\tau}\right),\label{eq:paulin-bound}
\end{align}
%
where $\gamma_{\mathsf{ps}}$ stands for the so-called {\em pseudo spectral gap} as
defined in \citet[Section~3.1]{paulin2015concentration}. Here, the first inequality relies on the fact $\mathsf{Var}_{X_i\sim \mu}[ \ind\{X_{i}=x\} ]=\mu(x)(1-\mu(x))\leq \mu(x)$, while the last
inequality results from the fact $\gamma_{\mathsf{ps}}\geq1/(2\tmix)$ that holds for uniformly ergodic chains (cf.~\citet[Proposition~3.4]{paulin2015concentration}). 
Consequently, for any $t\geq\tmix$ and any $\tau\geq 0$, one can continue the bound (\ref{eq:paulin-bound}) to obtain
\begin{align*}
	\eqref{eq:paulin-bound} & 
	\leq 2 \exp\left(-\frac{\tau^{2}}{48t\mu(x)\tmix+40\tau\tmix}\right)
	\leq 2 \max\left\{ \exp\left(-\frac{\tau^{2}}{96t\mu(x)\tmix}\right),\exp\left(-\frac{\tau}{80\tmix}\right)\right\} 
	\leq  \frac{\delta}{ |\mathcal{X}| },
\end{align*}
provided that $$\tau\geq\max\left\{ 10\sqrt{t\mu(x)\tmix\log\frac{2|\mathcal{X}|}{\delta}},\,80\tmix\log\frac{2|\mathcal{X}|}{\delta}\right\} .$$ As a result, by taking $\tau=\frac{10}{21}t\mu(x)$ and applying the
union bound, we reach
%
\begin{align}
	& \mathbb{P}_{X_{1}\sim\mu}\left\{ \exists x\in\mathcal{X}:\left|\sum_{i=1}^{t}\ind\{X_{i}=x\} - t\mu(x)\right|\geq\frac{10}{21}t\mu(x)\right\} \notag\\
	& \qquad\qquad \leq\sum_{x\in\mathcal{X}}\mathbb{P}_{X_{1}\sim\mu}\left\{ \left|\sum_{i=1}^{t}\ind\{X_{i}=x\} - t\mu(x)\right|\geq\frac{10}{21}t\mu(x)\right\} \leq\delta,\label{eq:paulin-bound-1}
\end{align}
%
as long as $\frac{10}{21}t\mu(x)\geq\max\big\{10\sqrt{t\mu(x)\tmix\log\frac{2|\mathcal{X}|}{\delta}},\,80\tmix\log\frac{2|\mathcal{X}|}{\delta}\big\}$ for all $x\in \mathcal{X}$,
or equivalently, when $$t\geq\frac{441\tmix}{\mumin}\log\frac{2|\mathcal{X}|}{\delta} \qquad
	\text{with }\mumin:=\min_{x\in\mathcal{X}}\mu(x).$$

	Next, we seek to extend the above result to the more general case when $X_1$ takes an arbitrary state $y\in \mathcal{X}$.  From the definition of $\tmix(\cdot)$ (cf.~\eqref{defn:mixing-time-epsilon}), we know that
	\begin{align}
		d_{\mathsf{TV}}\Big(  \sup_{y\in \mathcal{X}} P^{\tmix(\delta)}(\cdot\,|\, y) , \, \mu \Big) \leq \delta.
		\label{eq:TV-y-small-than-delta}
	\end{align}
	This taken together with the definition of $d_{\mathsf{TV}}$ (cf.~\eqref{eq:TV}) reveals that: for any event $\mathcal{B}$ belonging to the $\sigma$-algebra generated by $\{X_\tau\}_{\tau \geq \tmix(\delta)}$, 
	one has 
\begin{align}
 & \big|\mathbb{P}\{\mathcal{B}\mid X_{1}=y\}-\mathbb{P}\{\mathcal{B}\mid X_{1}\sim\mu\}\big| \notag\\
 & =\left|\sum_{s\in\mathcal{S}}\mathbb{P}\{\mathcal{B}\mid X_{\tmix(\delta)}=s\}\mathbb{P}\{X_{\tmix(\delta)}=s\mid X_{1}=y\}-\sum_{s\in\mathcal{S}}\mathbb{P}\{\mathcal{B}\mid X_{\tmix(\delta)}=s\}\mathbb{P}\{X_{\tmix(\delta)}=s\mid X_{1}\sim\mu\}\right| \notag\\
 & \leq\max\left\{ \sum_{s\in\mathcal{S}_{+}}\Big[\mathbb{P}\{X_{\tmix(\delta)}=s\mid X_{1}=y\}-\mathbb{P}\{X_{\tmix(\delta)}=s\mid X_{1}\sim\mu\}\Big],\right. \notag\\
 & \qquad\qquad\left.\sum_{s\in\mathcal{S}_{-}}\Big[\mathbb{P}\{X_{\tmix(\delta)}=s\mid X_{1}\sim \mu\}-\mathbb{P}\{X_{\tmix(\delta)}=s\mid X_{1} = y\}\Big]\right\} \notag\\
 & \leq\sup_{A\subseteq\mathcal{S}}\Big|\mathbb{P}\{X_{\tmix(\delta)}\in A\mid X_{1}=y\}-\mathbb{P}\{X_{\tmix(\delta)}\in A\mid X_{1}\sim\mu\}\Big| \leq \delta,
	\label{eq:TV-B-y-mu-157}
\end{align}
where we define
\begin{align*}
\mathcal{S}_{+} & \defn\left\{ s\in\mathcal{S}:\mathbb{P}\{X_{\tmix(\delta)}=s\mid X_{1}=y\}\,>\,\mathbb{P}\{X_{\tmix(\delta)}=s\mid X_{1}\sim\mu\}\right\} ;\\
\mathcal{S}_{-} & \defn\left\{ s\in\mathcal{S}:\mathbb{P}\{X_{\tmix(\delta)}=s\mid X_{1}=y\}\,<\,\mathbb{P}\{X_{\tmix(\delta)}=s\mid X_{1}\sim\mu\}\right\} .
\end{align*}
Here, the last inequality in \eqref{eq:TV-B-y-mu-157} follows from the inequality \eqref{eq:TV-y-small-than-delta} and the definition \eqref{eq:TV} of the total-variation distance. 
As a consequence, one obtains
	%
	\begin{align}
		& \sup_{y\in \mathcal{X}}  \mathbb{P}_{X_{1}=y}\left\{ \exists x\in\mathcal{X}: \left|\sum_{i=\tmix(\delta)}^{t}\ind\{X_{i}=x\}-\big(t-\tmix(\delta) \big)\mu(x)\right|\geq\frac{10}{21}\big(t-\tmix(\delta)\big)\mu(x)\right\} \nonumber\\
		& \qquad \leq \mathbb{P}_{X_{1}\sim\mu}\left\{ \exists x\in\mathcal{X}: \left|\sum_{i=\tmix(\delta)}^{t}\ind\{X_{i}=x\}-\big(t-\tmix(\delta)\big)\mu(x)\right|\geq\frac{10}{21}\big(t-\tmix(\delta)\big)\mu(x)\right\} + \delta 
		 \leq 2\delta,
		 \label{eq:sup-y-X-P-789}
	\end{align}
	%
	with the proviso that $t \geq \tmix(\delta) + \frac{441\tmix}{\mumin}\log\frac{2|\mathcal{X}|}{\delta}$.

	To finish up, we recall from \citet[Section~1.1]{paulin2015concentration} that $\tmix(\delta) \leq 2 \tmix \log \frac{2}{\delta}$.  
	Consequently, if   $t \geq  \frac{443\tmix}{\mumin}\log\frac{2|\mathcal{X}|}{\delta} \geq \tmix(\delta) + \frac{441\tmix}{\mumin}\log\frac{2|\mathcal{X}|}{\delta}$, then one has 
	\begin{align*}
		\frac{1}{2}\big(t-\tmix(\delta)\big)\mu(x)-\tmix(\delta)\geq\frac{441\tmix}{2}\log\frac{2|\mathcal{X}|}{\delta}\geq 100\tmix(\delta), \\
		\Longrightarrow\quad\frac{1}{2}\big(t-\tmix(\delta)\big)\mu(x)-\tmix(\delta)\geq\frac{10}{21}\big(t-\tmix(\delta)\big)\mu(x).
	\end{align*}
	These taken together lead to
	\begin{align*}
		& \sup_{y \in \mathcal{X}} \, \mathbb{P}_{X_{1}=y}\Bigg\{ \exists x\in\mathcal{X}: \left|\sum_{i=1}^{t}\ind\{X_{i}=x\} - t\mu(x)\right|\geq\frac{1}{2}t\mu(x)\Bigg\}  \nonumber\\
		& \leq 
		\sup_{y \in \mathcal{X}} \, \mathbb{P}_{X_{1}=y}\Bigg\{ \exists x\in\mathcal{X}: \left|\sum_{i=\tmix(\delta)}^{t}\ind\{X_{i}=x\}-(t-\tmix(\delta))\mu(x)\right|\geq\frac{1}{2}\big(t-\tmix(\delta) \big)\mu(x)-\tmix(\delta)\Bigg\} \notag\\
		& \leq 
		\sup_{y \in \mathcal{X}} \, \mathbb{P}_{X_{1}=y}\Bigg\{ \exists x\in\mathcal{X}: \left|\sum_{i=\tmix(\delta)}^{t}\ind\{X_{i}=x\}-(t-\tmix(\delta))\mu(x)\right|\geq\frac{10}{21}\big(t-\tmix(\delta) \big)\mu(x)\Bigg\} 
		\leq 2\delta,
	\end{align*}
	%
	where the last inequality results from  \eqref{eq:sup-y-X-P-789}. Replacing $\delta$ with $\delta/2$ thus concludes the proof. 	 
\end{proof}

\subsection{Connection between the mixing time and the cover time}
\label{sec:connection-tcover-tmix}

Lemma~\ref{lemma:Bernstein-state-occupancy} combined with the definition \eqref{eq:defn-cover-time} immediately reveals the following upper bound on the cover time: 
\begin{align}
	\tcovertime = O\Big( \frac{\tmix}{\mumin} \log |\mathcal{X}| \Big). 
	\label{eq:cover-time-UB-mixing}
\end{align}
In addition, while a general matching converse bound (namely, ${\tmix}/{\mumin}=\widetilde{O}(\tcovertime)$) is not available, we can come up with some special examples for which the bound \eqref{eq:cover-time-UB-mixing} is provably tight.

\begin{example} 
\label{example:cover-time}
Consider a time-homogeneous Markov chain with state space $\mathcal{X}:=\{1,\cdots,|\mathcal{X}|\}$
and probability transition matrix
\begin{align}
\bm{P}=\Big(1-\frac{q(k+1)}{2}\Big)\bm{I}_{|\mathcal{X}|}+\frac{q}{|\mathcal{X}|}\left[\begin{array}{cc}
k\bm{1}_{|\mathcal{X}|}\bm{1}_{|\mathcal{X}|/2}^{\top} & \bm{1}_{|\mathcal{X}|}\bm{1}_{|\mathcal{X}|/2}^{\top}\end{array}\right] \in \mathbb{R}^{|\mathcal{X}| \times |\mathcal{X}|}
\label{eq:construction-P}
\end{align}
for some quantities $q>0$ and $k\geq 1$. 
Suppose $q(k+1)<2$ and $|\mathcal{X}|\geq 3$. Then this chain obeys 
\begin{align}
	\tcovertime\geq \frac{\tmix}{\big(8\log2+4\log\frac{1}{\mumin}\big)\mumin}.
	\label{eq:cover-time-LB}
\end{align}
\end{example}
%
	With the lower bound \eqref{eq:cover-time-LB} in place, 
	we conclude that the upper bound \eqref{eq:cover-time-UB-mixing} is, in general, nearly un-improvable (up to some logarithmic factor). 
	
	\begin{remark}
	We shall take a moment to briefly discuss the key design rationale behind Example~\ref{example:cover-time}. 
		Let us partition the state space into two halves, denoted respectively by $\mathcal{X}_1$ and $\mathcal{X}_2$. From every state $s\in \mathcal{X}$, it is much easier to transition into the first half $\mathcal{X}_1$ rather than the second half $\mathcal{X}_2$. 
		This leads to two properties: (i) the stationary distribution of any state in $\mathcal{X}_2$ is much lower than that of a state in $\mathcal{X}_1$; (ii) the cover time also increases as the stationary distribution w.r.t.~$\mathcal{X}_2$ decreases, given that it becomes more difficult to traverse the second half. As a result, we can guarantee that $\tcovertime$ is proportional to $\mumin$ through this type of designs. 
		On the other hand, the example is also constructed in a way such that all states are ``lazy'', meaning that they are more inclined to stay unchanged rather than moving to a different state. The level of laziness clearly controls how fast the Markov chain mixes, as well as how long it takes to cover all states. This in turn allows one to ensure that $\tcovertime$ is proportional to $\tmix$. More details can be found in the proof below. 	
	\end{remark}

%
	

\begin{proof} As can be easily verified,  this chain is reversible, whose stationary distribution vector $\bm{\mu}\in\mathbb{R}^{|\mathcal{X}|}$
obeys
\[
\bm{\mu}=\frac{2}{(k+1)|\mathcal{X}|}\left[\begin{array}{c}
k\bm{1}_{|\mathcal{X}|/2}\\
\bm{1}_{|\mathcal{X}|/2}
\end{array}\right].
\]
As a result, the minimum state occupancy probability of the stationary distribution is given by
\begin{equation}
\mumin:=\min_{1\leq x\leq|\mathcal{X}|}\mu_{x}=\frac{2}{(k+1)|\mathcal{X}|}.\label{eq:mumin-q-expression}
\end{equation}
In addition, the reversibility of this chain implies that the matrix
$\bm{P}^{\mathrm{d}}:=\bm{D}^{\frac{1}{2}}\bm{P}\bm{D}^{-\frac{1}{2}}$
with $\bm{D}:=\mathrm{diag}\left[\bm{\mu}\right]$ is symmetric and
has the same set of eigenvalues as $\bm{P}$ \citep{bremaud2013markov}. A little algebra yields
\begin{align*}
\bm{P}^{\mathrm{d}} & =\Big(1-\frac{q(k+1)}{2}\Big)\bm{I}_{|\mathcal{X}|}+\frac{q}{|\mathcal{X}|}\left[\begin{array}{cc}
k\bm{1}_{|\mathcal{X}|/2}\bm{1}_{|\mathcal{X}|/2}^{\top} & \sqrt{k}\bm{1}_{|\mathcal{X}|/2}\bm{1}_{|\mathcal{X}|/2}^{\top}\\
\sqrt{k}\bm{1}_{|\mathcal{X}|/2}\bm{1}_{|\mathcal{X}|/2}^{\top} & \bm{1}_{|\mathcal{X}|/2}\bm{1}_{|\mathcal{X}|/2}^{\top}
\end{array}\right],
\end{align*}
allowing us to determine the eigenvalues $\{\lambda_{i}\}_{1\leq i\leq|\mathcal{X}|}$ as follows
\[
\lambda_{1}=1\qquad\text{and}\qquad\lambda_{i}=1-\frac{q(k+1)}{2} > 0 ~~ (i\geq2).
\]

We are now ready to establish the lower bound on the cover
time. First of all, the well-known
connection between the spectral gap and the mixing time gives \citet[Proposition 3.3]{paulin2015concentration}
\begin{equation}
\tmix\leq\frac{2\log2+\log\frac{1}{\mumin}}{2(1-\lambda_{2})}=\frac{2\log2+\log\frac{1}{\mumin}}{q(k+1)}.\label{eq:tmix-UB-q}
\end{equation}
In addition, let $(x_{0},x_{1},\cdots)$ be the corresponding
Markov chain, and assume that $x_{0}\sim\mu$, where $\mu$ stands
for the stationary distribution. Consider the last state --- denoted by $|\mathcal{X}|$, which enjoys the minimum state occupancy probability $\mumin$. 
For any integer $t>0$ one has
\begin{align*}
\mathbb{P}\left\{ x_{l}\neq|\mathcal{X}|,\text{ }\forall\, 0\leq l\leq t\right\}  
 & \overset{(\mathrm{i})}{=}\mathbb{P}\left\{ x_{0}\neq|\mathcal{X}|\right\} 
 \prod_{l=1}^{t}\mathbb{P}\Big\{ x_{l}\neq|\mathcal{X}| \,\Big| \,  x_{0}\neq|\mathcal{X}|,\cdots,x_{l-1}\neq|\mathcal{X}| \Big\} \\
 & \overset{(\mathrm{ii})}{\geq}\mathbb{P}\left\{ x_{0}\neq|\mathcal{X}|\right\} \prod_{l=1}^{t}\min_{j:j\neq|\mathcal{X}|}\mathbb{P}\big\{ x_{l}\neq|\mathcal{X}| \,\big| \, x_{l-1}=j\big\} \\
 & \overset{(\mathrm{iii})}{=} \Big(1-\frac{2}{(k+1)|\mathcal{X}|}\Big) \left(1-\frac{q}{|\mathcal{X}|}\right)^{t}\\
 & \overset{(\mathrm{iv})}{\geq} \Big(1-\frac{2}{(k+1)|\mathcal{X}|}\Big)\left(1-\frac{2qt}{|\mathcal{X}|}\right),
\end{align*}
where (i) follows from the chain rule, (ii)
relies on the Markovian property, (iii) results from the construction \eqref{eq:construction-P}, and (iv) holds as long
as $\frac{q}{|\mathcal{X}|}t<\frac{1}{2}$. Consequently, if $|\mathcal{X}|\geq3$
and if $t<\frac{|\mathcal{X}|}{8q}$, then one necessarily has 
\begin{align*}
\mathbb{P}\left\{ x_{l}\neq|\mathcal{X}|,~\forall \,0\leq l\leq t\right\}  & \geq\Big(1-\frac{2}{(k+1)|\mathcal{X}|}\Big)\left(1-\frac{2qt}{|\mathcal{X}|}\right)>\frac{1}{2}.
\end{align*}
This taken collectively with the definition of $\tcovertime$ (cf.~\eqref{eq:defn-cover-time}) reveals that
\[
\tcovertime\geq\frac{|\mathcal{X}|}{8q}\geq\frac{\tmix}{\big(8\log2+4\log\frac{1}{\mumin}\big)\mumin},
\]
where the last inequality is a direct consequence of (\ref{eq:mumin-q-expression})
and (\ref{eq:tmix-UB-q}). 
\end{proof}

\section{Proofs of technical lemmas}

\subsection{Proof of Lemma~\ref{lemma:control-beta1}}
\label{sec:proof-lemma-control-beta1}

%
Fix any state-action pair $(s,a)\in\mathcal{S}\times\mathcal{A}$, and let
us look at $\bm{\beta}_{1,t}(s,a)$, namely, the $(s,a)$-th entry of
\[
	\bm{\beta}_{1,t}= \gamma\sum_{i=1}^{t}\prod_{j=i+1}^{t}\big(\bm{I}-\bm{\Lambda}_{j}\big)\bm{\Lambda}_{i}\big(\bm{P}_{i}-\bm{P}\big)\bm{V}^{\star}.
\]
For convenience of presentation, we abuse the notation
to let $\bm{\Lambda}_{j}(s,a)$ denote the $(s,a)$-th diagonal entry
of the diagonal matrix $\bm{\Lambda}_{j}$, and $\bm{P}_{t}(s,a)$
(resp.~$\bm{P}(s,a)$) the $(s,a)$-th row of $\bm{P}_{t}$ (resp.~$\bm{P}$).
In view of the definition (\ref{eq:defn-beta1-beta3}), we can write
\begin{align}
	\bm{\beta}_{1,t}(s,a) & = \gamma \sum_{i=1}^{t}\prod_{j=i+1}^{t}\big(1-\bm{\Lambda}_{j}(s,a)\big)\bm{\Lambda}_{i}(s,a)\big(\bm{P}_{i}(s,a)-\bm{P}(s,a)\big)\bm{V}^{\star}.
	\label{eq:expression-beta1-sa}
\end{align}

As it turns out, it is convenient to study this expression by
defining
\begin{equation}\label{eq:defn-tk-sa}
	t_{k}(s,a):=\text{the time stamp when }\text{the trajectory visits }(s,a)\text{ for the }k\text{-th time}
\end{equation}
and
\begin{equation}
K_{t}(s,a):=\max\left\{ k\mid t_{k}(s,a) \leq t\right\} ,\label{eq:defn-Kt}
\end{equation}
namely, the total number of times --- during the first $t$ iterations
--- that the sample trajectory visits $(s,a)$. With
these in place, the special form of $\bm{\Lambda}_{j}$ (cf.~\eqref{eq:defn-Lambda-t}) allows us
to rewrite (\ref{eq:expression-beta1-sa}) as
\begin{align}
	\bm{\beta}_{1,t}(s,a) & = \gamma \sum_{k=1}^{K_{t}(s,a)}(1-\eta)^{K_{t}(s,a)-k}\eta\big(\bm{P}_{t_{k}+1}(s,a)-\bm{P}(s,a)\big)\bm{V}^{\star}.\label{eq:expression-beta1-sa-1}
\end{align}
where we suppress the dependency on $(s,a)$ and write $t_k:=t_k(s,a)$
to streamline notation. 
The main step thus boils down to controlling (\ref{eq:expression-beta1-sa-1}). 

Towards this, we claim that: 
with probability at least $1-\delta$,
\begin{align}
	& \Bigg|\sum_{k=1}^{K}(1-\eta)^{K-k}\eta\big(\bm{P}_{t_{k}+1}(s,a)-\bm{P}(s,a)\big)\bm{V}^{\star}\Bigg| 
	 \leq  \sqrt{\eta  \log\Big(\frac{|\mathcal{S}||\mathcal{A}|T}{\delta}\Big)} \|\bm{V}^{\star}\|_{\infty} 
	\label{eq:claim-Bernstein-fixed-K}
\end{align}
holds simultaneously for all $(s,a)\in\mathcal{S}\times\mathcal{A}$ and
all $1\leq K\leq T$, provided that $0< \eta  \log\big(\frac{|\mathcal{S}||\mathcal{A}|T}{\delta}\big)<1$.
Recognizing the trivial bound $K_{t}(s,a)\leq t\leq T$ (by construction (\ref{eq:defn-Kt})) and substituting the claimed bound \eqref{eq:claim-Bernstein-fixed-K} into the expression \eqref{eq:expression-beta1-sa-1}, we arrive at
\begin{align}
	\forall (s,a)\in \cS\times \cA: \quad 
	| \bm{\beta}_{1,t}(s,a) | \leq \gamma  \sqrt{\eta  \log\Big(\frac{|\mathcal{S}||\mathcal{A}|T}{\delta}\Big)} \|\bm{V}^{\star}\|_{\infty}
	\leq  \sqrt{\eta  \log\Big(\frac{|\mathcal{S}||\mathcal{A}|T}{\delta}\Big)} \|\bm{V}^{\star}\|_{\infty}, 
\end{align}
thus concluding the proof of this lemma. It remains to validate the inequality (\ref{eq:claim-Bernstein-fixed-K}). 

\begin{proof}[Proof of the inequality (\ref{eq:claim-Bernstein-fixed-K})]

	We first make the observation that: for any {\em fixed} integer $K>0$, the
following vectors
\[
\left\{ \bm{P}_{t_{k}+1}(s,a)\mid1\leq k\leq K\right\} 
\]
	are identically and independently distributed.\footnote{The Markov chain w.r.t.~the sample trajectory should be viewed as being infinitely long, although we only get to observe its first $T$ samples. The random variables $\{t_{k}\}$ are, in truth, independent of the choice of $T$.} To justify this observation,
let us denote by $\mathbb{P}_{s,a}(\cdot)$ the transition probability
from state $s$ when action $a$ is taken. For any $i_{1},\cdots,i_{K}\in\mathcal{S}$, one obtains
\begin{align*}
 & \mathbb{P}\left\{ s_{t_{k}+1}=i_{k}\text{ }(\forall1\leq k\leq K)\right\} =\mathbb{P}\left\{ s_{t_{k}+1}=i_{k}\ (\forall1\leq k\leq K-1)\ \text{and }s_{t_{K}+1}=i_{K}\right\} \\
 & \qquad=\sum_{m>0}\mathbb{P}\left\{ s_{t_{k}+1}=i_{k}\ (\forall1\leq k\leq K-1)\ \text{and }t_{K}=m\text{ and }s_{m+1}=i_{K}\right\} \\
 & \qquad\overset{(\mathrm{i})}{=}\sum_{m>0}\mathbb{P}\left\{ s_{t_{k}+1}=i_{k}\ (\forall1\leq k\leq K-1)\ \text{and }t_{K}=m\right\} \mathbb{P}\left\{ s_{m+1}=i_{K}\mid s_{m}=s,a_{m}=a\right\} \\
 & \qquad=\mathbb{P}_{s,a}(i_{K})\sum_{m>0}\mathbb{P}\left\{ s_{t_{k}+1}=i_{k}\ (\forall1\leq k\leq K-1)\ \text{and }t_{K}=m\right\} \\
 & \qquad=\mathbb{P}_{s,a}(i_{K})\mathbb{P}\left\{ s_{t_{k}+1}=i_{k}\ (\forall1\leq k\leq K-1)\right\} ,
\end{align*}
where (i) holds true from the Markov property as well as the fact
that $t_{K}$ is an iteration in which the trajectory visits state
$s$ and takes action $a$. Invoking the above identity recursively,
we arrive at
\begin{align}
	\label{eq:independence-Markov-chain}
\mathbb{P}\left\{ s_{t_{k}+1}=i_{k}\text{ }(\forall1\leq k\leq K)\right\}  & =\prod_{j=1}^{K}\mathbb{P}_{s,a}(i_{j}),
\end{align}
meaning that the state transitions happening at times $\{t_{1},\cdots,t_{K}\}$
are independent, each following the distribution $\mathbb{P}_{s,a}(\cdot)$.
This clearly demonstrates the independence of $\left\{ \bm{P}_{t_{k}+1}(s,a)\mid1\leq k\leq K\right\} $. 

With the above observation in mind, we resort to the Hoeffding inequality
to bound the quantity of interest (which has zero mean). To begin
with, notice the facts that for all $k \ge 1$,
\begin{equation}
0 \le \bm{P}_{t_{k}+1}(s,a)\bm{V}^{\star} \le \|\bm{V}^{\star}\|_{\infty},\qquad\text{and}\qquad 0 \le \bm{P}(s,a)\bm{V}^{\star} \le \|\bm{V}^{\star}\|_{\infty},
\end{equation}
which gives 
\begin{align*}
\left|(1-\eta)^{K-k}\eta\big(\bm{P}_{t_{k}+1}(s,a)-\bm{P}(s,a)\big)\bm{V}^{\star}\right| \le (1-\eta)^{K-k}\eta\|\bm{V}^{\star}\|_{\infty}.
\end{align*}
As a consequence, invoking the Hoeffding inequality \citep{boucheron2013concentration} implies that
\begin{align}
	& \Bigg|\sum_{k=1}^{K}(1-\eta)^{K-k}\eta\big(\bm{P}_{t_{k}}(s,a)-\bm{P}(s,a)\big)\bm{V}^{\star}\Bigg| 
	 \leq \sqrt{ \frac{1}{2} \sum_{k=1}^{K} \Big( (1-\eta)^{K-k}\eta \|\bm{V}^{\star}\|_{\infty} \Big) ^2\log\Big(\frac{2|\mathcal{S}||\mathcal{A}|T}{\delta}\Big)} \nonumber\\
	& \qquad \leq  \sqrt{  \eta  \log\Big(\frac{|\mathcal{S}||\mathcal{A}|T}{\delta}\Big)} \, \|\bm{V}^{\star}\|_{\infty}
	\label{eq:Bernstein-s-a-k}
\end{align}
with probability exceeding $1-\frac{\delta}{|\mathcal{S}||\mathcal{A}|T}$, where the last line holds since
\begin{align*}
	\sum_{k=1}^{K} \Big( (1-\eta)^{K-k}\eta \Big)^{2} \leq\eta^{2}\sum_{j=0}^{\infty}(1-\eta)^{j}=\frac{\eta^{2}}{1-(1-\eta)} =\eta.
\end{align*}
Taking the union bound over all $(s,a)\in\mathcal{S}\times\mathcal{A}$
and all $1\leq K\leq T$ then reveals that: with probability
at least $1-\delta$, the inequality (\ref{eq:Bernstein-s-a-k}) holds
simultaneously over all $(s,a)\in\mathcal{S}\times\mathcal{A}$ and
all $1\leq K\leq T$. This concludes the proof. \end{proof}

\subsection{Proof of Lemma~\ref{lemma:control-beta3} and Lemma~\ref{lemma:control-beta3-cover-time}}
\label{sec:proof-lemma-control-beta3}

\paragraph{Proof of Lemma~\ref{lemma:control-beta3}.}
Let $\bm{\beta}_{3,t}= \prod_{j=1}^{t}\big(\bm{I}-\bm{\Lambda}_{j}\big)\bm{\Delta}_{0}$. 
Denote by $\bm{\beta}_{3,t}(s,a)$ (resp.~$\bm{\Delta}_{0}(s,a)$)
the $(s,a)$-th entry of $\bm{\beta}_{3,t}$ (resp.~$\bm{\Delta}_{0}$).
From the definition of $\bm{\beta}_{3,t}$, it is easily seen that
\begin{align}
	\label{eq:beta3-sa-bound}
	\left|\bm{\beta}_{3,t}(s,a)\right|=(1-\eta)^{K_t(s,a)}\big|\bm{\Delta}_{0}(s,a)\big|, 
\end{align}
where $K_t(s,a)$ denotes the number of times the
sample trajectory visits $(s,a)$ during the iterations $[1,t]$ (cf.~\eqref{eq:defn-Kt}). By virtue of Lemma~\ref{lemma:Bernstein-state-occupancy} and the union bound, one has, with probability at least $1-\delta$, that 
\begin{align}
	 K_t(s,a) \geq t\mumin / 2
\end{align}
simultaneously over all $(s,a)\in\mathcal{S}\times\mathcal{A}$ and all $t$ obeying $\frac{443\tau_{\mathsf{mix}}}{\mumin}\log\frac{4|\mathcal{S}||\mathcal{A}|T}{\delta}\leq t\leq T$. Substitution into the relation \eqref{eq:beta3-sa-bound} establishes that, with probability greater than $1-\delta$, 
\begin{align}
	 \left|\bm{\beta}_{3}(s,a)\right| \leq (1-\eta)^{\frac{1}{2}t\mumin}\big|\bm{\Delta}_{0}(s,a)\big| . 
\end{align}
holds uniformly over all $(s,a)\in\mathcal{S}\times\mathcal{A}$ and all $t$ obeying $\frac{443\tau_{\mathsf{mix}}}{\mumin}\log\frac{4|\mathcal{S}||\mathcal{A}|T}{\delta}\leq t\leq T$, as claimed.

\paragraph{Proof of Lemma~\ref{lemma:control-beta3-cover-time}.} The proof of this lemma is essentially the same as that of Lemma~\ref{lemma:control-beta3}, except that we use instead the following lower bound on $K_t(s,a)$ (which is an immediate consequence of Lemma~\ref{lem:connection-cover-time})
\begin{align}
	K_t(s,a) \geq \Big\lfloor \frac{t}{\tcoverall} \Big\rfloor \geq \frac{t}{2\tcoverall}
\end{align}
for all $t>\tcoverall$. Therefore, replacing $t\mumin$ with $t/\tcoverall$ in the above analysis, we establish Lemma~\ref{lemma:control-beta3-cover-time}.

\subsection{Proof of Lemma~\ref{lemma:middle-term}} 
\label{Sec:middle-term}

We prove this fact via an inductive argument. 
The base case with $t=0$ is a consequence of the crude bound \eqref{eqn:crude}. Now, assume that the claim holds for all iterations up to $t-1$, and we would like to justify it for the $t$-th iteration as well. 
Towards this, define 
\begin{align}
	\label{defn:ht}
	h(t) :=
	\begin{cases}
	\|\bm{\Delta}_0\|_{\infty}, & \text{if }t\leq \tth,\\
		(1-\gamma) \varepsilon, & \text{if }t>\tth.
	\end{cases}  
\end{align}
Recall that $(1-\eta)^{\frac{1}{2}t\mumin}\leq (1-\gamma)\varepsilon$ for any $t\geq \tth$. 
Therefore, combining the inequality~\eqref{eqn:recursion} with the induction hypotheses indicates that
\begin{align*}
	|\bDel_t| & \leq \gamma \sum_{i=1}^t \prod_{j=i+1}^{t} (\bm{I}-\bLam_j)\bLam_i \one
	\cdot \left(\frac{\tau_1 \|\bm{V}^{\star}\|_{\infty} }{1-\gamma} + {u_{i-1}} + \varepsilon\right) + \tau_1  \|\bm{V}^{\star}\|_{\infty} \one + h(t) \one\\
	& 
	= \gamma \sum_{i=1}^t \prod_{j=i+1}^{t} (\bm{I}-\bLam_j)\bLam_i \one u_{i-1}
	+
	\gamma \sum_{i=1}^t \prod_{j=i+1}^{t} (\bm{I}-\bLam_j)\bLam_i \one \left(\frac{\tau_1 \|\bm{V}^{\star}\|_{\infty} }{1-\gamma} + \varepsilon\right) + \tau_1 \|\bm{V}^{\star}\|_{\infty} \one + h(t)  \one. 
\end{align*}
Taking this together with the inequality~\eqref{eqn:cat} and rearranging terms, we obtain
\begin{align}
	|\bDel_t| 
	& \leq   \gamma \sum_{i=1}^t \prod_{j=i+1}^{t} (\bm{I}-\bLam_j)\bLam_i \one u_{i-1}
	 + \frac{\gamma \tau_1 \|\bm{V}^{\star}\|_{\infty}}{1-\gamma}\one 
	+ \gamma \varepsilon \one + \tau_1 \|\bm{V}^{\star}\|_{\infty} \one + h(t) \one   \nonumber \\
	& = \frac{\tau_1 \|\bm{V}^{\star}\|_{\infty}}{1-\gamma}\one +  \gamma \varepsilon \one + \gamma \sum_{i=1}^t \prod_{j=i+1}^{t} (\bm{I}-\bLam_j)\bLam_i \one u_{i-1}
	 + h(t) \one \nonumber \\
	& = \frac{\tau_1 \|\bm{V}^{\star}\|_{\infty}}{1-\gamma}\one +  \gamma \varepsilon \one + \bm{v}_t + (1-\gamma) \varepsilon \ind\{ t>\tth \}   \one \nonumber\\
	& \leq \frac{\tau_1 \|\bm{V}^{\star}\|_{\infty}}{1-\gamma}\one +  \varepsilon \one + \bm{v}_t , 
\end{align}
where we have used the definition of $\bm{v}_t$ in \eqref{eq:defn-ut-vt}. 
This taken collectively with the definition $u_t = \|\bm{v}_t\|_{\infty}$ establishes that
\[
	\|\bDel_t\|_{\infty} \leq \frac{\tau_1 \|\bm{V}^{\star}\|_{\infty}}{1-\gamma} +  \varepsilon  + u_t
\]
as claimed. This concludes the proof.


\subsection{Proof of Lemma~\ref{lemma:daban}}
\label{Sec:daban}

We shall prove this result by induction over the index $k$. 
To start with, consider the base case where $k=0$ and $ t < \tth +\tcover$. By definition, it is straightforward to see that $u_{0}\leq \| \bDel_0 \|_{\infty}/(1-\gamma) = w_0$. In fact, repeating our argument for the crude bound (see Section~\ref{sec:recursion}) immediately reveals that
\begin{align}
	\forall t\geq 0: \qquad u_t \leq \frac{ \| \bDel_0 \|_{\infty} }{1-\gamma} = w_0 ,
\end{align}
thus indicating that the inequality~\eqref{eqn:lele} holds for the base case. 
In what follows, we assume that the inequality~\eqref{eqn:lele} holds up to $k-1$, and would like to extend it to the case with all $t$ obeying $\big\lfloor\frac{t-\tth}{\tcover}\big\rfloor = k$.


Consider any $0\leq j< \tcover$. In view of the definition of $\bm{v}_t$ (cf.~\eqref{eq:defn-ut-vt})  as well as our induction hypotheses, one can arrange terms to derive
%
%
\begin{align}
\label{eqn:into-epoch}
\notag	\bm v_{\tth + k \tcover + j} 
	&= \gamma \sum_{i=1}^{\tth + k \tcover + j} \, \prod_{n=i+1}^{{\tth + k \tcover + j}} (\Ind-\bLam_n)\bLam_i \one u_{i-1} \\
\notag	&= \gamma 
	\sum_{s=0}^{k-1} \Bigg\{\sum_{i:\,  \max\big\{ \lfloor \frac{i - j - 1 - \tth}{\tcover} \rfloor, 0 \big\}  = s} \prod_{n=i+1}^{{\tth + k \tcover + j}} (\Ind-\bLam_n)\bLam_i \one  u_{i-1}\Bigg\} 		
	\\
	&\leq \gamma 
	\sum_{s=0}^{k-1} \Bigg\{\sum_{i:\,  \max\big\{ \lfloor \frac{i - j - 1 - \tth}{\tcover} \rfloor, 0 \big\}  = s} \prod_{n=i+1}^{{\tth + k \tcover + j}} (\Ind-\bLam_n)\bLam_i \one  \Bigg\} w_{s}, 
\end{align}
where the last inequality follows from our induction hypotheses, the non-negativity of 
$(\Ind-\bLam_j)\bLam_i \one$, and the fact that $w_{s}$ is non-increasing.

Given any state-action pair $(s,a) \in \cirS\times \A$, let us look at the $(s,a)$-th entry 
of $\bm v_{\tth + k \tcover + j}$ --- denoted by $\bm{v}_{\tth + k \tcover + j}(s,a)$, towards which it is convenient to pause and introduce some notation.
Recall that $N_{i}^n(s,a)$ has been used to denote the number of visits to the state-action pair $(s,a)$ between iteration $i$ and iteration $n$ (including $i$ and $n$). To help study the behavior  in each timeframe, we introduce the following quantities
\begin{align}
	L_{h}^{k-1} \defn N_{i}^{n}(s,a) \qquad \text{with } 
	 i = \tth + h \tcover + j + 1, ~n = \tth + k \tcover + j
\end{align}
for every $h \leq k-1$. 
Lemma~\ref{lemma:Bernstein-state-occupancy} tells us that, with probability at least $1-2\delta$,  
\begin{align}
\label{eqn:amazing-times}
	L_{h}^{k-1} \geq (k-h) \muepo \qquad  \text{with }\muepo = \frac{1}{2}\mumin\tcover,
\end{align}
which holds uniformly over all state-action pairs $(s,a)$. 
 Armed with this set of notation, 
 it is straightforward to use the expression~\eqref{eqn:into-epoch} to verify that 
\begin{align}
 \bm{v}_{\tth + k \tcover + j}(s,a)  
	& \leq \gamma\sum_{h=0}^{k-1}\eta\left\{ (1-\eta)^{L_{h}^{k-1}-1}+(1-\eta)^{L_{h}^{k-1}-2}+\cdots+(1-\eta)^{L_{h+1}^{k-1}}\right\} w_{h} \nonumber \\
 & =\gamma\sum_{h=0}^{k-1}\left((1-\eta)^{L_{h+1}^{k-1}}-(1-\eta)^{L_{h}^{k-1}}\right)w_{h} =:\gamma\sum_{h=0}^{k-1}\left(\alpha_{h+1}-\alpha_{h}\right)w_{h},  \label{eqn:allegro}
\end{align}
where we denote $\alpha_{h} \defn (1-\eta)^{L_{h}^{k-1}}$ for any $h\leq k-1$ and 
$\alpha_{k} \defn 1.$

A little algebra further leads to
\begin{align}
\label{eqn:adagio}
	\gamma\sum_{h=0}^{k-1}\left(\alpha_{h+1}-\alpha_{h}\right)w_{h} & = \gamma( \alpha_k w_{k-1} - \alpha_{0}w_{0} )+ \gamma 
	\sum_{h=1}^{k-1}\alpha_{h}\left(w_{h-1}-w_{h}\right).
\end{align}
Thus, in order to control the quantity $\bm{v}_{\tth + k \tcover + j}(s,a)$, it suffices to control the right-hand side of 
\eqref{eqn:adagio}, for which
we start by bounding the last term. Plugging in the definitions of $w_h$ and $\alpha_h$ yields 
\begin{align*}
	\frac{1-\gamma}{\|\bDel_0\|_{\infty}} \sum_{h=1}^{k-1}\alpha_{h}\left(w_{h-1}-w_{h}\right) 
& =\sum_{h=1}^{k-1}(1-\eta)^{L_{h}^{k-1}}(1-\rho)^{h-1}\rho
\leq \rho\sum_{h=1}^{k-1}(1-\eta)^{\left(k-h\right)\muepo}(1-\rho)^{h-1},
\end{align*}
where the last inequality results from the fact~\eqref{eqn:amazing-times}.
Additionally, direct calculation yields
\begin{align}
\notag \rho\sum_{h=1}^{k-1}(1-\eta)^{\left(k-h\right)\muepo}(1-\rho)^{h-1}
	& =\rho(1-\eta)^{(k-1)\muepo}\sum_{h=1}^{k-1}\Big(\frac{1-\rho}{(1-\eta)^{\muepo}}\Big)^{h-1} \nonumber\\
  & =\rho(1-\eta)^{(k-1)\muepo}\frac{1-\big(\frac{1-\rho}{(1-\eta)^{\muepo}}\big)^{k-1}}{1-\frac{1-\rho}{(1-\eta)^{\muepo}}}\nonumber\\
\notag & =\rho(1-\eta)^{\muepo}\frac{(1-\rho)^{k-1}-(1-\eta)^{(k-1)\muepo}}{(1-\rho)-(1-\eta)^{\muepo}} \nonumber\\
 & \leq\rho(1-\eta)^{\muepo}\frac{(1-\rho)^{k-1}}{(1-\rho)-(1-\eta)^{\muepo}}, \label{eqn:schertzo}
\end{align}
where the last inequality makes use of the fact that 
\begin{align}
(1-\rho)-(1-\eta)^{\muepo} & =1-(1-\gamma)(1-(1-\eta)^{\muepo})-(1-\eta)^{\muepo} \nonumber\\
& =\gamma\left\{ 1-(1-\eta)^{\muepo}\right\} =\frac{\gamma}{1-\gamma}\rho \geq 0 .
	\label{eq:rho-eta-comparison}
\end{align}
Combining the inequalities~\eqref{eqn:allegro}, \eqref{eqn:adagio} and \eqref{eqn:schertzo} and using the fact $\alpha_0w_0\geq 0$ give 
\begin{align}
	\bm{v}_{\tth + k \tcover + j}(s,a) & \leq \gamma\sum_{h=1}^{k-1}\alpha_{h}\left(w_{h-1}-w_{h}\right)+ 
\gamma \alpha_k w_{k-1} \nonumber\\
 & \leq
\frac{\linf{\bDel_0}}{1-\gamma} \left\{ 
 \gamma\rho(1-\eta)^{\muepo}\frac{(1-\rho)^{k-1}}{(1-\rho)-(1-\eta)^{\muepo}}
	+ \gamma(1-\rho)^{k-1} \right\} . \label{eq:vth-bound1}
\end{align}

We are now ready to justify that $\bm{v}_{\tth + k \tcover + j}(s,a) \leq w_{k}$. 
Note that the observation \eqref{eq:rho-eta-comparison} implies
\begin{align*}
\gamma\frac{\rho(1-\eta)^{\muepo}}{(1-\rho)-(1-\eta)^{\muepo}}=\gamma\frac{\rho(1-\eta)^{\muepo}}{\frac{\gamma}{1-\gamma}\rho}=(1-\gamma)(1-\eta)^{\muepo}.	
\end{align*}
This combined with the bound \eqref{eq:vth-bound1} yields   
\begin{align}
\label{eqn:doupi}
	\notag	\bm{v}_{\tth + k \tcover + j}(s,a) &\leq \frac{\linf{\bDel_0}}{1-\gamma} \big\{ (1-\gamma)(1-\eta)^{\muepo} (1-\rho)^{k-1} +  \gamma(1-\rho)^{k-1} \big\}\\
	\notag &\leq \frac{\linf{\bDel_0}}{1-\gamma} \big(\gamma + (1-\gamma) (1-\eta)^{\muepo}\big) (1-\rho)^{k-1} \\
	&= (1-\rho)^{k} \frac{\linf{\bDel_0}}{1-\gamma} = w_k, 
\end{align}
where  the last line follows from the definition of $\rho$ (cf.~\eqref{eqn:rho}).
Since the above inequality holds for all state-action pair $(s,a)$, we conclude that
\begin{align}
	u_{\tth + k \tcover + j} = \linf{\bm v_{\tth + k \tcover + j}}  \leq w_k.
\end{align}
%

As a consequence, we have established the inequality~\eqref{eqn:lele} for all $t$ obeying  $\big\lfloor\frac{t-\tth}{\tcover}\big\rfloor= k$, which together with the induction argument completes the proof of this lemma.

\subsection{Proof of Lemma \ref{lemma:bound-h0t}}
\label{sec:proof-lemma-bound-h0t}

Recalling that $\bm{0} \leq \sum_{i=1}^t \prod_{j=i+1}^{t} ( \bm{I} -\bLam_j)\bLam_i \one \leq \one$ (cf.~\eqref{eqn:cat}), we obtain
\begin{align}
	\| \bm{h}_{0,t}\|_{\infty} 
	&\leq \gamma \Big\|\sum_{i=1}^{t}\prod_{j=i+1}^{t}\big(\bm{I}-\bm{\Lambda}_{j}\big)\bm{\Lambda}_{i}\Big\|_{1}  
	 \big\|(\widetilde{\bm{P}}-\bm{P}\big)\overline{\bm{V}}\big\|_{\infty}
	\leq \gamma \big\|(\widetilde{\bm{P}}-\bm{P}\big)\overline{\bm{V}}\big\|_{\infty}.
	\label{eq:H0t-simple-bound}
\end{align}
As a result, it remains  to upper bound $\big\|(\widetilde{\bm{P}}-\bm{P}\big)\overline{\bm{V}}\big\|_{\infty}$.

Suppose that $\widetilde{\bm{P}}$ is constructed
using $N$ consecutive sample transitions. Without loss of generality,
assume that these $N$ sample transitions are the transitions between
the following $N+1$ samples
\[
(s_{0},a_{0}),(s_{1},a_{1}),(s_{2},a_{2}),\cdots,(s_{N},a_{N}).
\]
Then the $(s,a)$-th row of $\widetilde{\bm{P}}$ --- denoted by
$\widetilde{\bm{P}}(s,a)$ --- is given by
\begin{align}
	\widetilde{\bm{P}}(s,a)=\frac{1}{K_{N}(s,a)}\sum_{i=0}^{N-1} \bm{P}_{i+1}(s,a) \overline{\bm{V}}\ind\{(s_{i},a_{i})=(s,a)\}=\frac{1}{K_{N}(s,a)}\sum_{i=1}^{K_{N}(s,a)} \bm{P}_{t_{i}+1}(s,a) \overline{\bm{V}}	, 
\end{align}
where $\bm{P}_i$ is defined in \eqref{eq:defn-Pt}, and $\bm{P}_i(s,a)$ denotes its $(s,a)$-th row. Here, $K_N(s,a)$ denotes the total number of visits to $(s,a)$ during the first $N$ time instances (cf. \eqref{eq:defn-Kt}), and  
$t_k:= t_k(s,a)$ denotes the time stamp when the trajectory visits ($s,a$) for the $k$-th time (cf. \eqref{eq:defn-tk-sa}).

In view of our derivation for \eqref{eq:independence-Markov-chain}, the state transitions happening at time $t_1,t_2,\cdots,t_k$ are independent for any given integer $k>0$. This together with the Hoeffding inequality implies that
\begin{equation}
	\mathbb{P}\left\{ \frac{1}{k} \left|\sum_{i=1}^{k} \big(\bm{P}_{t_{i}+1}(s,a)-\bm{P}(s,a) \big)\overline{\bm{V}}\right|\geq\tau \right\} \leq2\exp\left\{ -\frac{k\tau^{2}}{2\|\overline{\bm{V}}\|_{\infty}^{2}}\right\} .
	\label{eq:Hoeffding-Ptilde}
\end{equation}
Consequently, with probability at least $1-\frac{\delta}{|\mathcal{S}||\mathcal{A}|}$
one has
\[
	\left| \frac{1}{k} \sum_{i=1}^{k} \big(\bm{P}_{t_{i}+1}(s,a)-\bm{P}(s,a)\big)\overline{\bm{V}}\right|\leq\sqrt{\frac{2\log\big( \frac{2N|\mathcal{S}||\mathcal{A}|}{\delta} \big)}{k}}\big\|\overline{\bm{V}}\big\|_{\infty},\qquad1\leq k\leq N.
\]
Recognizing the simple bound $K_{N}(s,a) \leq N$, the above inequality holds for each state-action pair $(s,a)$ when $k$ is replaced by $K_{N}(s,a)$. Conditioning on these $K_{N}(s,a)$, applying the union bound over all $(s,a)\in \cS\times \cA$, we obtain 
\begin{align}
	\big\|(\widetilde{\bm{P}}-\bm{P})\overline{\bm{V}}\big\|_{\infty} 
	\leq
	\max_{(s,a)\in \cS\times \cA} \sqrt{\frac{2\log\big( \frac{2N|\mathcal{S}||\mathcal{A}|}{\delta} \big)}{K_N(s,a)}}\big\|\overline{\bm{V}}\big\|_{\infty}
	\label{eq:Hoeffding-Ptilde2}
\end{align}
with probability at least $1-\delta$.

	In addition, for any $N \geq \tcov$, Lemma~\ref{lemma:Bernstein-state-occupancy} guarantees that with probability $1-2\delta$, each state-action pair $(s,a)$ is visited at least $N\mumin/2$ times, namely, $K_N(s,a) \geq \frac{1}{2} N \mumin $ for all $(s,a)$. This combined with \eqref{eq:Hoeffding-Ptilde2} yields
\begin{align}
	\big\|(\widetilde{\bm{P}}-\bm{P})\overline{\bm{V}}\big\|_{\infty} 
	&\leq
	\sqrt{\frac{4\log\big( \frac{2N|\mathcal{S}||\mathcal{A}|}{\delta} \big)}{N\mumin}}\big\|\overline{\bm{V}}\big\|_{\infty} \nonumber \\
	& \leq \sqrt{\frac{4\log\big( \frac{2N|\mathcal{S}||\mathcal{A}|}{\delta} \big)}{N\mumin}}\left( \big\|\overline{\bm{V}} - \bm{V}^{\star} \big\|_{\infty}  + \big\|  \bm{V}^{\star} \big\|_{\infty}  \right) \nonumber \\
& \leq \sqrt{\frac{4\log\big( \frac{2N|\mathcal{S}||\mathcal{A}|}{\delta} \big)}{N\mumin}}  \big\|\overline{\bm{V}} - \bm{V}^{\star} \big\|_{\infty}  + \frac{1}{1-\gamma}\sqrt{\frac{4\log\big( \frac{2N|\mathcal{S}||\mathcal{A}|}{\delta} \big)}{N\mumin}}\label{eq:Hoeffding-Ptilde2}
\end{align}
with probability at least $1-3\delta$, where the second inequality follows from the triangle inequality, and the last inequality follows from $\big\|  \bm{V}^{\star} \big\|_{\infty}\leq \frac{1}{1-\gamma}$. Putting this together with \eqref{eq:H0t-simple-bound} concludes the proof.

%

\subsection{Proof of Lemma~\ref{lem:connection-cover-time}}
\label{sec:proof-lemma-connection-cover-time}


For notational convenience, set $t_{l}:=\tcovertime l$, and define
\[
\mathcal{H}_{l}:=\Big\{\exists(s,a)\in\mathcal{S\times\mathcal{A}} ~\text{that is not visited within }\big(t_{l},t_{l+1}\big]\,\Big\}
\]
for any integer $l\geq0$. In view of the definition of $\tcovertime$, 
we see that for any given $(s',a')\in\mathcal{S}\times\mathcal{A}$, 
\begin{equation}
\mathbb{P}\left\{ \mathcal{H}_{l}\mid(s_{t_{l}},a_{t_{l}})=(s',a')\right\} \leq\frac{1}{2} .\label{eq:Hl-prob-UB}
\end{equation}
Consequently,
for any integer $L>0$, one can invoke the Markovian property to obtain
\begin{align*}
 & \mathbb{P}\left\{ \mathcal{H}_{1}\cap\cdots\cap\mathcal{H}_{L}\right\} =\mathbb{P}\left\{ \mathcal{H}_{1}\cap\cdots\cap\mathcal{H}_{L-1}\right\} \mathbb{P}\left\{ \mathcal{H}_{L}\mid\mathcal{H}_{1}\cap\cdots\cap\mathcal{H}_{L-1}\right\} \\
 & \quad=\mathbb{P}\left\{ \mathcal{H}_{1}\cap\cdots\cap\mathcal{H}_{L-1}\right\} \sum_{s',a'}\mathbb{P}\left\{ \mathcal{H}_{L}\mid(s_{t_{l}},a_{t_{l}})=(s',a')\right\} \mathbb{P}\left\{ (s_{t_{l}},a_{t_{l}})=(s',a')\mid\mathcal{H}_{1}\cap\cdots\cap\mathcal{H}_{L-1}\right\} \\
 & \quad\leq\frac{1}{2}\mathbb{P}\left\{ \mathcal{H}_{1}\cap\cdots\cap\mathcal{H}_{L-1}\right\} \sum_{s',a'}\mathbb{P}\left\{ (s_{t_{l}},a_{t_{l}})=(s',a')\mid\mathcal{H}_{1}\cap\cdots\cap\mathcal{H}_{L-1}\right\} \\
 & \quad=\frac{1}{2}\mathbb{P}\left\{ \mathcal{H}_{1}\cap\cdots\cap\mathcal{H}_{L-1}\right\} ,
\end{align*}
where the inequality follows from (\ref{eq:Hl-prob-UB}). Repeating
this derivation recursively, we deduce that
\[
\mathbb{P}\left\{ \mathcal{H}_{1}\cap\cdots\cap\mathcal{H}_{L}\right\} \leq\frac{1}{2^{L}}.
\]
This tells us that
\begin{align*}
\mathbb{P}\left\{ \exists(s,a)\in\mathcal{S\times\mathcal{A}}\text{ that is not visited between }(0,\tcoverall]\right\}  & \leq\mathbb{P}\left\{ \mathcal{H}_{1}\cap\cdots\cap\mathcal{H}_{\log_{2}\frac{T}{\delta}}\right\} \leq\frac{1}{2^{\log_{2}\frac{T}{\delta}}}=\frac{\delta}{T},
\end{align*}
which in turn establishes the advertised result by applying the union bound.

 
\bibliographystyle{apalike}
\bibliography{bibfileRL}

\end{document}